\newif\ifstoc
\newcommand{\stocoption}[2]{{\ifstoc #1 \else #2 \fi}}
\newcommand{\nuc}[1]{\|#1\|_{\sf nuc}}
\newcommand{\Mtheta}{\bm{\theta}}
\newcommand{\frob}[1]{\|#1\|_{F}}
\newcommand{\mink}[1]{\|#1\|_{\C}}
\newcommand{\minkQ}[1]{\|#1\|_{\Q}}
\newcommand{\minkQD}[1]{\|#1\|_{\Q^*}}
\newcommand{\minkq}[1]{\|#1\|_{\Q,q}}
\newcommand{\mixed}[1]{\|#1\|_{(k,\ell_{1,2})}}
\newcommand{\minkD}[1]{\|#1\|_{\C^*}}
\newtheorem{lem}{Lemma}[section]
\newcommand{\linfty}[1]{\|#1\|_\infty}
\newtheorem{thm}[lem]{Theorem}
\newtheorem{cor}[lem]{Corollary}
\newtheorem{defn}[lem]{Definition}
\newtheorem{fact}[lem]{Fact}
\newcommand{\poly}{\text{poly}\,}
\newcommand{\polylog}{\text{polylog}\,}
\newtheorem{claim}[lem]{Claim}
\newcommand{\ip}[2]{\langle #1,#2\rangle}
\newcommand{\nptheta}{\hat\theta}
\newcommand{\privtheta}{\theta^{priv}}
\renewcommand{\paragraph}[1]{\vspace{3pt}\noindent\textbf{#1}}
\newcommand{\ltwo}[1]{\|#1\|_2}
\newcommand{\lone}[1]{\|#1\|_1}
\newcommand{\eps}{\epsilon}
\newcommand{\A}{\mathcal{A}}
\newcommand{\D}{\mathcal{D}}
\newcommand{\G}{\mathcal{G}}
\newcommand{\I}{\mathbb{I}}
\newcommand{\E}{\mathbb{E}}
\newcommand{\F}{\mathcal{C}}
\newcommand{\empL}{\mathcal{L}}
\newcommand{\htheta}{\widetilde\theta}
\newcommand{\re}{\Re}
\newcommand{\Q}{\mathcal{Q}}
\newcommand{\grad}{\bigtriangledown}
\newcommand{\mypar}[1]{\smallskip
\noindent{\bf\em {#1}:}}
\newcommand{\ktnote}[1]{}
\newcommand{\ignore}[1]{}
\newcommand{\Jpriv}{{{J}^{\text{priv}}}}
\renewcommand{\b}{b}
\newcommand{\C}{\mathcal{C}}
\newcommand{\bregDiv}[1]{\mathcal{B}_{#1}}
\newcommand{\sgn}{{\sf sign}}
\DeclareMathOperator*{\argmin}{{\sf argmin}}
\theoremstyle{remark}
\newtheorem{remark}{Remark}
\def\pfw{\A_{\sf Noise-FW}}
\def\pfwe{\A_{\sf Noise-FW (polytope)}}
\def\pfwp{\A_{\sf Noise-FW (Gen-convex)}}
\def\sgn{\operatorname{sign}}
\begin{document}

\title{Private Empirical Risk Minimization Beyond the Worst Case:\\ The Effect of the Constraint Set Geometry\stocoption{\thanks{All the proofs and missing details appear in the full-version of the paper, attached as a continuation.}}{}}

\author{Kunal Talwar\thanks{Google. \texttt{kunal@kunaltalwar.org}.  (Part of this research was performed at the now defunct Microsoft Research Silicon Valley.)}
\and Abhradeep Thakurta\thanks{Yahoo Labs, Sunnyvale. \texttt{abhradeep@yahoo-inc.com}}
\and Li Zhang\thanks{Google. \texttt{liqzhang@google.com}.  (Part of this research was performed at the now defunct Microsoft Research Silicon Valley.)}
}

\maketitle
\thispagestyle{empty}

\begin{abstract}
Empirical Risk Minimization (ERM) is a standard technique in machine learning, where a model is selected by minimizing a loss function over constraint set. When the training dataset consists of private information, it is natural to use a differentially private ERM algorithm, and this problem has been the subject of a long line of work~\citep{CM08,KST12,JKT12,ST13sparse,DuchiJW13,JT14,BassilyST14,Ullman14}. A private ERM algorithm outputs an approximate minimizer of the loss function and its error can be measured as the difference from the optimal value of the loss function. When the constraint set is arbitrary, the required error bounds are fairly well understood~\citep{BassilyST14}. In this work, we show that the geometric properties of the constraint set can be used to derive significantly better results. Specifically, we show that a differentially private version of Mirror Descent leads to error bounds of the form $\tilde{O}(G_{\C}/n)$ for a Lipschitz loss function, improving on the $\tilde{O}(\sqrt{p}/n)$ bounds in~\cite{BassilyST14}. Here $p$ is the dimensionality of the problem, $n$ is the number of data points in the training set, and $G_{\C}$ denotes the Gaussian width of the constraint set that we optimize over. We show similar improvements for strongly convex functions, and for smooth functions. In addition, we show that when the loss function is Lipschitz with respect to the $\ell_1$ norm and $\C$ is $\ell_1$-bounded, a differentially private version of the Frank-Wolfe algorithm gives error bounds of the form $\tilde{O}(n^{-2/3})$. This captures the important and common case of sparse linear regression (LASSO), when the data $x_i$ satisfies $|x_i|_{\infty} \leq 1$ and we optimize over the $\ell_1$ ball. We also show our algorithm is nearly optimal by proving a matching lower bound for this setting.
\end{abstract}


\newpage
\pagenumbering{arabic}

\section{Introduction}

A common task in supervised learning is to select the model that best
fits the data. This is frequently achieved by selecting a {\em loss
function} that associates a real-valued loss with each datapoint $d$
and model $\theta$ and then selecting from a class of admissible
models, the model $\theta$ that minimizes the average loss over all
data points in the training set. This procedure is commonly referred
to as {\em Empirical Risk Minimization}(ERM).

The availability of large datasets containing sensitive information
from individuals has motivated the study of learning algorithms that
guarantee the privacy of individuals contributing to the database. A
rigorous and by-now standard privacy guarantee is via the notion of
differential privacy.  In this work, we study the design of
differentially private algorithms for Empirical Risk Minimization,
continuing a long line of
work initiated by~\cite{CM08} and continued in~\cite{CMS11,KST12,JKT12,ST13sparse,DuchiJW13,JT14,BassilyST14,Ullman14}.

As an example, suppose that the
training dataset $D$ consists of $n$ pairs of data $d_i=(x_i,y_i)$ where
$x_i\in\Re^p$, usually called the feature vector, and $y_i\in\Re$, the
prediction.  The goal is to find a ``reasonable model''
$\theta\in\Re^p$ such that $y_i$ can be predicted from the model $\theta$ and the feature vector $x_i$.  The quality of approximation is usually
measured by a loss function $\empL(\theta; d_i)$, and the empirical
loss is defined as $\empL(\theta;D)=\frac{1}{n}\sum_i\empL(\theta;
d_i)$.  For example, in the linear model with squared loss,
$\empL(\theta;d_i)=(\langle \theta, x_i \rangle- y_i)^2$. 
Commonly, one restricts $\theta$ to come from a constraint set $\C$. This can account for additional knowledge about $\theta$, or can be helpful in avoiding overfitting and making the learning algorithm more stable.
This leads to the constrained optimization problem of computing
$\theta^\ast = \argmin_{\theta\in\C}\empL(\theta;D)$.  For example, in the classical
sparse linear regression problem, we set $\C$ to be the $\ell_1$ ball. Now
our goal is to compute a model $\theta$ that is private with respect to changes in a single $d_i$ while having high quality, where the quality is measured by the excess empirical risk compared to the optimal model. 

\mypar{Problem definition} Given a data set $D=\{d_1,\cdots,d_n\}$ of
 $n$ samples from a domain $\D$, a convex set $\C\subseteq \re^p$, and
 a convex loss function $\empL:\C\times\D\to\re$, for any model
 $\theta$, define its excess empirical risk as
\begin{equation}
R(\theta;D) \stackrel{def}{=}
\frac{1}{n}\sum\limits_{i=1}^n\empL(\theta;d_i)-\min\limits_{\theta\in\C}\frac{1}{n}\sum\limits_{i=1}^n\empL(\theta;d_i).
\label{eq:empRisk}
\end{equation}
We define the {\em risk} of a mechanism $\A$ on a data set $D$ as
$R(\A;D)= \E[R(\A(D);D)]$, where the expectation is over the internal
randomness of $\A$, and the risk $R(\A) = \max_{D\in\D^n} R(\A;D)$ is the maximum risk over all the possible data sets. Our objective is then to
design a mechanism $\A$ which preserves
$(\epsilon,\delta)$-differential privacy (Definition
\ref{def:diffPrivacy}) and achieves as low risk as possible. We call the minimum achievable risk as {\em privacy risk}, defined as $\min_\A R(\A)$, where the min is over all $(\epsilon,\delta)$-differentially private mechanisms $\A$.



Previous work on private ERM has studied this problem under fairly
general conditions. For convex loss functions $\empL(\theta;d_i)$ that
for every $d_i$ are 1-Lipschitz as functions from $(\Re^p, \ell_2)$ to
$\Re$ (i.e. are Lipschitz in the first parameter with respect to the
$\ell_2$ norm), and for $\C$ contained in the unit $\ell_2$ ball,
~\cite{BassilyST14}
showed\footnote{Throughout the paper, we use
$\widetilde{O},\widetilde{\Omega}$ to hide the polynomial factors in
$1/\eps$, $\log(1/\delta)$, $\log n$, and $\log p$.} that the privacy
risk is at most $\widetilde{O}(\sqrt{p}/n)$. They also showed that
this bound cannot be improved in general, even for the squared loss
function. Similarly they gave tight bounds under stronger assumptions
on the loss functions (more details below).

In this work, we go beyond these worst-case bounds by exploiting
properties of the constraint set $\C$. In the setting of the previous
paragraph, we show that the $\sqrt{p}$ term in the privacy risk can be
replaced by the {\em Gaussian Width} of $\C$, defined as $G_{\C} ={
\E_{g \in \mathcal{N}(0,1)^p} [\sup\limits_{\theta \in \C} \langle \theta, g
\rangle]}$. Gaussian width is a well-studied quantity in Convex
Geometry that captures the global geometry of $\C$ \citep{ball1997elementary}.  For a $\C$
contained in the the $\ell_2$ ball it is never larger than $O(\sqrt{p})$
and can be significantly smaller. For example, for the $\ell_1$ ball,
the Gaussian width is only $\Theta(\sqrt{\log p})$. Similarly, we give improved bounds for other assumptions
on the loss functions. These bounds are proved by analyzing a noisy
version of the mirror descent algorithm \citep{NY83,beck2003mirror}. 

In the simplest setting, when the loss function
$\empL(\cdot, d)$ is convex, and $L_2$-Lipschitz with respect to the
$\ell_2$ norm on the parameter space, we get the following result. The precise bounds require a potential function that is tailored to the convex set
$\C$. In the following, let $\ltwo{\C}$ denote the $\ell_2$ radius of
$\C$, and $G_{\C}$ denote the Gaussian width of $\C$.
\begin{thm}[Informal version]
 There exists an $(\epsilon,\delta)$-differentially private algorithm
 $\A$ such that
 $$R(\A)=O\left(\frac{L_2G_{\C}\log(n/\delta)}{\epsilon
 n}\right).$$ In particular,
 $R(\A)=O\left(\frac{L_2\ltwo{\C}\sqrt{p}\log(n/\delta)}{\epsilon
 n}\right)$, and if $\C$ is a polytope with $k$ vertices,
 $R(\A)=O\left(\frac{L_2\ltwo{\C}\log k\log(n/\delta)}{\epsilon
 n}\right)$.
\end{thm}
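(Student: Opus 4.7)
The plan is to analyze a noisy variant of Mirror Descent. Let $\psi$ be a mirror map on $\C$ that is $1$-strongly convex with respect to a norm $\|\cdot\|_\psi$, with dual norm $\|\cdot\|_{\psi,*}$. Starting from $\theta_0 = \argmin_{\theta \in \C} \psi(\theta)$, at each step $t = 1,\dots,T$ the mechanism computes the empirical gradient $g_t = \frac{1}{n}\sum_{i=1}^n \nabla \eL(\theta_t;d_i)$, forms the noisy gradient $\tilde g_t = g_t + z_t$ with $z_t\sim \mathcal{N}(0,\sigma^2 I_p)$, and updates $\theta_{t+1} = \argmin_{\theta\in\C}\{\eta\langle \tilde g_t,\theta\rangle + B_\psi(\theta,\theta_t)\}$. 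The output is the average $\bar\theta = \frac{1}{T}\sum_t \theta_t$.

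\textbf{Privacy.} Each round touches the data only through $g_t$, whose $\ell_2$ sensitivity to changing one record is at most $2L_2/n$ by the Lipschitz hypothesis. Setting $\sigma^2 = c\, L_2^2\, T\log(1/\delta)/(n\epsilon)^2$, the Gaussian mechanism at each step is $(\epsilon/\sqrt{T},\delta/T)$-DP, and strong composition (or the moments accountant) yields overall $(\epsilon,\delta)$-DP for the $T$-round sequence; post-processing makes $\bar\theta$ private.

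\textbf{Utility.} Standard mirror descent gives, for every $\theta^\ast\in\C$,
\[
\sum_t \langle \tilde g_t,\theta_t-\theta^\ast\rangle \;\le\; \frac{B_\psi(\theta^\ast,\theta_0)}{\eta} + \frac{\eta}{2}\sum_t \|\tilde g_t\|_{\psi,*}^2.
\]
Splitting $\tilde g_t = g_t + z_t$, convexity of $\eL$, and the fact that $z_t$ is independent of $\theta_t$ (so $\E\langle z_t,\theta_t-\theta^\ast\rangle = 0$) yield
\[
\E[R(\bar\theta;D)] \;\le\; \frac{B_\psi(\theta^\ast,\theta_0)}{\eta T} + \frac{\eta}{T}\sum_t \E\|\tilde g_t\|_{\psi,*}^2.
\]
The key step is to choose $\psi$ so that $\|\cdot\|_\psi$ is (up to constants) the Minkowski gauge of $\C$: then $\|v\|_{\psi,*}$ is essentially the support function $\sup_{\theta\in\C}\langle v,\theta\rangle$, so that
\[
\E\|z_t\|_{\psi,*}^2 \;=\; \E\bigl[\sup_{\theta\in\C}\langle z_t,\theta\rangle^2\bigr] \;=\; O\!\left(\sigma^2 G_\C^2\right),
\]
where the last bound follows from Borell--TIS concentration (the sup is $\sigma\ltwo{\C}$-Lipschitz in $z_t$, and $\ltwo{\C}\lesssim G_\C$ for the sets of interest). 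Combined with $\|g_t\|_{\psi,*} \le O(L_2)$, tuning $\eta$ to balance the two terms and plugging in $\sigma$ from the privacy budget leaves the noise term dominant, giving $\E[R(\bar\theta)] = O(L_2 G_\C \sqrt{B_\psi\log(1/\delta)}/(n\epsilon))$. For the constructions below $B_\psi = \widetilde O(1)$, yielding the claimed bound $\widetilde O(L_2 G_\C/(n\epsilon))$.

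\textbf{Specializations and main obstacle.} For $\C$ contained in the $\ell_2$ ball take $\psi(\theta)=\tfrac12\ltwo{\theta}^2$: then $\|\cdot\|_{\psi,*}=\|\cdot\|_2$, $B_\psi\le \ltwo{\C}^2$, and $G_\C\le \ltwo{\C}\sqrt{p}$, giving the $\sqrt{p}$ specialization. For $\C$ a polytope with $k$ vertices, lift the optimization to the simplex over its vertex set and use the entropic mirror map, for which $B_\psi = O(\log k)$ and $\|\cdot\|_{\psi,*}=\|\cdot\|_\infty$ gives $\E\|z\|_\infty^2 = O(\sigma^2\log k)$, producing the $\log k$ specialization. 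The conceptual difficulty is the uniform part: constructing a potential whose Bregman diameter over $\C$ is polylogarithmic \emph{and} whose dual norm is compatible with the Gaussian width of $\C$. For structured $\C$ (ellipsoids, polytopes, $\ell_p$ balls) such a $\psi$ is explicit; for arbitrary $\C$ one appeals to convex-geometric tools (e.g.\ John's theorem, or a smoothed Minkowski gauge) to produce a strongly convex surrogate of $\|\cdot\|_\C$. Once this is in place, everything else is routine.
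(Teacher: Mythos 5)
Your proposal is correct and follows essentially the same route as the paper: noisy mirror descent with Gaussian gradient perturbation, privacy via the Gaussian mechanism plus strong composition, and a utility analysis whose key step is bounding the dual (support-function) norm of the noise by $\sigma G_{\C}$ and then instantiating the potential per constraint set. The only cosmetic difference is in the polytope case, where you use the entropic mirror map on the lifted vertex simplex while the paper uses the Srebro-type $\minkq{\cdot}$ potential with $q=\log k/(\log k-1)$; both give Bregman diameter $O(\log k)$ and the same final $\ltwo{\C}\log k$ bound.
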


Similar improvements can be shown
(Section~\ref{sec:corprivMirrorDesc}) for other constraint sets, such
as those bounding the grouped $\ell_1$ norm, interpolation norms, or the
nuclear norm when the vector is viewed as a matrix. When one
additionally assumes that the loss functions satisfy a strong
convexity definition \stocoption{(see full version)}{(Appendix~\ref{sec:tighterMirrorDesc})}, we can
get further improved bounds. Moreover, for smooth loss functions
(Section~\ref{sec:smoothStrong}), we can show that a simpler
objective perturbation algorithm \citep{CMS11,KST12} gives
Gaussian-width dependent bounds similar to the ones above.
Our work also implies Gaussian-width-dependent convergence bounds for the noisy (stochastic) mirror descent algorithm, which may be of independent interest.

The bounds based on mirror descent have a dependence on the $\ell_2$
Lipschitz constant. This constant might be too large for some
problems. For example, for the popular sparse linear regression
problem, one often assumes $x_i$ to have bounded $\ell_{\infty}$ norm,
i.e. each entry of $x_i$, instead of $\|x_i\|_2$, is bounded.  The
$\ell_2$ Lipschitz constant is then polynomial in $p$ and leads to a
loose bound. In these cases, it would be more beneficial to have a
dependence on the $\ell_1$ Lipschitz constant. Our next contribution is to address this issue. 
We show that when $\C$ is the $\ell_1$ ball, one can get
significantly better bounds using a differentially private version of
the Frank-Wolfe algorithm. Let $\lone{\C}$ denote the maximum $\ell_1$
radius of $\C$, and $\Gamma_\empL$ the curvature constant for $\empL$
(precise definition in Section~\ref{sec:frankWolfe}).
\begin{thm}
If $\C$ is a polytope with $k$ vertices, then there exists an
$(\epsilon,\delta)$-differentially private algorithm $\A$ such that
$$R(\A)=O\left(\frac{{{{\Gamma_\empL}}}^{1/3}\left(L_1\lone{\C}\right)^{2/3}\log(nk)\sqrt{\log(1/\delta)}}{(n\epsilon)^{2/3}}\right)\,.$$
In particular, for the sparse linear regression problem where each
$\|x_i\|_\infty\leq 1$, we have that
$$R(\A)=O(\log(np/\delta)/(n\epsilon)^{2/3})\,.$$
\end{thm}

Finally, we use the fingerprinting code lower bound technique developed
in~\cite{BUV13} to show that the upper bound for the sparse linear regression problem, and hence the above result, is nearly tight. 
\begin{thm}
For the sparse linear regression problem where $\|x_i\|_\infty\leq 1$,
for $\epsilon=0.1$ and $\delta=1/n$, any
$(\epsilon,\delta)$-differentially private algorithm $\A$ must have
$$R(\A)=\Omega(1/(n\log n)^{2/3})\,.$$
\end{thm}

In Table \ref{tab:bounds1} we summarize our upper and lower bounds. Combining our results with that of \cite{BassilyST14}, in particular we show that all the bounds in this paper are essentially tight. The lower bound for the $\ell_1$-norm case does not follow from \cite{BassilyST14}, and we provide a new lower bound argument.
 \begin{table}[tb]
 	\hspace{-0.5in}
 	\begin{tabular}{|p{1.22in}||p{1.1in}|p{1.1in}|p{2.4in}|c|}
 		\hline
 		& \multicolumn{2}{c|}{Previous work}  & \multicolumn{2}{c|}{\color{blue} This work} \\
 		\hline
 		Assumption & Upper bound & Lower bound & {\color{blue} Upper bound} & \color{blue} Lower bound\\
 		\hline
 		\hline
 		$1$-Lipschitz w.r.t $L_2$-norm and $\ltwo{\C}=1$ & $\frac{\sqrt p}{\epsilon n}$ \cite{BassilyST14} & $\Omega\left(\frac{\sqrt p}{n}\right)$ \cite{BassilyST14} 
 		& \color{blue} {\bf Mirror descent:} $\frac{1}{\epsilon n}{\scriptstyle \min\left\{\sqrt p,\log k \right\}}$ 
 		&   \\
 		\hline
 		... and $\lambda$-smooth
 		& $\frac{\sqrt p+\lambda}{\epsilon n}$ \cite{CMS11}
 		&  $\Omega\left(\frac{\sqrt p}{n}\right)$ \cite{BassilyST14}\newline (for $\lambda=O(p)$) &  \color{blue} {\bf Frank-Wolfe:} $\frac{\lambda^{1/3}}{(\epsilon n)^{2/3}}{\scriptstyle \min\left\{p^{1/3},\log^{1/3}k\right\}}$ 
 		&   \\
 		& & & 
 		\color{blue} {\bf Obj. pert:} $\frac{\min\left \{\sqrt{p},\sqrt{\log k}\right \}+\lambda}{\epsilon n}$ & \\
 		\hline 
 		\hline
 		$1$-Lipschitz w.r.t $L_1$-norm, $\lone{\C}=1$,
 		and curvature $\Gamma$ & & 
 		& \color{blue} {\bf Frank-Wolfe:} $\frac{\Gamma^{1/3}\log(nk)}{(\epsilon n)^{2/3}}$
 		& \color{blue} $\tilde\Omega\left(\frac{1}{n^{2/3}}\right)$\\
 		\hline
 		\hline
 	\end{tabular}
 	\caption{Upper and lower bounds for $(\epsilon,\delta)$-differentially private ERM. $k$ denotes the number of corners in the convex set $\C$.(In general the dependence is on the Gaussian width of $\C$, generalizing $\sqrt p$ or $\sqrt{\log k}$.) The curvature parameter is a weaker condition than smoothness, and is in particular bounded by the smoothness. Bounds ignore multiplicative dependence of $\log(1/\delta)$ and in the lower bounds, $\epsilon$ is considered as a constant. The lower bounds of \cite{BassilyST14} have the form $\Omega(\min\{n,\cdots\})$.}
 	\label{tab:bounds1}
 \end{table}

Our results enlarge the set of problems for which privacy comes ``for
free''. Given $n$ samples from a distribution, suppose that
$\theta^{\ast}$ is the empirical risk minimizer and $\privtheta$ is
the differentially private approximate minimizer. Then the
non-private ERM algorithm outputs $\theta^\ast$ and incurs expected
(on the distribution) loss equal to the ${\sf loss}(\theta^*,{\mbox{training-set}}) + {\mbox{generalization-error}}$, where the  {\em generalization error} term depends on
the loss function, $\C$ and on the number of samples $n$.  The
differentially private algorithm incurs an additional loss of the
privacy risk. If the privacy risk is asymptotically no larger than the
generalization error, we can think of privacy as coming for free,
since under the assumption of $n$ being large enough to make the
generalization error small, we are also making $n$ large enough to
make the privacy risk small. For many of the problems, by our work we
get privacy risk bounds that are close to the best known
generalization bounds for those settings. More concretely, in the case when the $\ltwo{\C}\leq 1$ and the loss function is $1$-Lipschitz in the $\ell_2$-norm, the known generalization error bounds strictly dominate the privacy risk when $n=\omega(G_\C^4)$ \cite[Theorem 7]{SSSS}. In the case when $\C$ is the $\ell_1$-ball, and the loss function is the squared loss with $\linfty{x}\leq 1$ and $|y|\leq 1$, the generalization error dominates the privacy risk when $n=\omega(\log^3 p)$ \cite[Theorem 18]{bartlett2003rademacher}.  \ktnote{Check that this is
really true. Add references.}

\if 0
\subsection{Summary of results}
 \ktnote{we use both $\ell$ and $\empL$ for the individual loss functions. Do a search and replace to fix.}
\mypar{Problem definition} Given a data set $D=\{d_1,\cdots,d_n\}$ of
 $n$ samples from a domain $\D$, a convex set $\C\subseteq \re^p$, and
 a convex loss function $\empL:\C\times\D\to\re$, the objective is to
 output a model $\privtheta\in\C$, while preserving
 $(\epsilon,\delta)$-differential privacy (Definition
 \ref{def:diffPrivacy}). We will define the privacy risk of a
 mechanism $\A$ as $R(\A)=\max_{D} \E[R(\privtheta)]$, where
\begin{equation}
R(\theta) \stackrel{def}{=}
\frac{1}{n}\sum\limits_{i=1}^n\empL(\theta;d_i)-\min\limits_{\theta\in\C}\frac{1}{n}\sum\limits_{i=1}^n\empL(\theta;d_i).
\label{eq:empRisk}
\end{equation}
  
Here the expectation is taken with respect to the internal randomness
of the private mechanism.  We seek to design
$(\epsilon,\delta)$-differentially private mechanisms that have small
privacy risk. As in previous work, we will use properties of the loss
function, such as convexity and lipschitzness. In addition, we will
use properties of the constraint set $\C$ to derive stronger bounds on
the privacy risk.

The simplest setting we look at is where the loss function
$\empL(\cdot, d)$ is convex, and $L_2$-Lipschitz with respect to the
$\ell_2$ norm on the parameter space.  For this case, previous work
by~\cite{BassilyST14} shows that a differentially private version of
gradient descent leads to privacy risk $\widetilde{O}(L_2\sqrt{p}/n)$ if
$\C$ is contained in the unit $\ell_2$ ball. We analyze a
differentially private version of the
\emph{mirror descent} algorithm and generalize these results. The precise bounds
require a potential function that is tailored to the convex set
$\C$. In the following, let $\ltwo{\C}$ denote the $\ell_2$ radius of
$\C$, and $G_{\C}$ denote the Gaussian width of $\C$.
\begin{thm}[Informal version]
 There exists an $(\epsilon,\delta)$-differentially private algorithm
 $\A$ such that
 $$R(\A)=O\left(\frac{L_2G_{\C}\log(n/\delta)}{\epsilon
 n}\right).$$ In particular,
 $R(\A)=O\left(\frac{L_2\ltwo{\C}\sqrt{p}\log(n/\delta)}{\epsilon
 n}\right)$, and if $\C$ is a polytope with $k$ vertices,
 $R(\A)=O\left(\frac{L_2\ltwo{\C}\sqrt{\log k}\log(n/\delta)}{\epsilon
 n}\right)$.
\end{thm}
\ktnote{When the theorem actually gets proved, add a back pointer.} 

Similar improvements can be shown
(Section~\ref{sec:corprivMirrorDesc}) for other constraint sets, such
as those bounding the block $\ell_1$ norm, interpolation norms, or the
nuclear norm when the vector is viewed as a matrix. When one
additionally assumes that the loss functions satisfy a strong
convexity definition \stocoption{(see full version)}{(Appendix~\ref{sec:tighterMirrorDesc})}, we can
get further improved bounds. Moreover, for smooth loss functions
(Section~\ref{sec:smoothStrong}), we can show that a simpler
objective perturbation algorithm \cite{CMS11,KST12} gives
Gaussian-width dependent bounds similar to the ones above.

We next consider a less stringent restriction on the loss function,
where we require it to be convex and $L_1$-Lipschitz with respect to
the $\ell_1$ norm on the parameter space.
We show that when $\C$ is the $\ell_1$ ball, one can get
significantly better bounds using a differentially private version of
the Frank-Wolfe algorithm. Let $\lone{\C}$ denote the maximum $\ell_1$
radius of $\C$, and $\Gamma_\empL$ the curvature constant for $\empL$
(precise definition in Section~\ref{sec:frankWolfe}).
\begin{thm}
If $\C$ is a polytope with $k$ vertices, then there exists an
$(\epsilon,\delta)$-differentially private algorithm $\A$ such that
$$R(\A)=O\left(\frac{{{{\Gamma_\empL}}}^{1/3}\left(L_1\lone{\C}\right)^{2/3}\log(nk)\sqrt{\log(1/\delta)}}{(n\epsilon)^{2/3}}\right)\,.$$
In particular, for the sparse linear regression problem where each
$\|x_i\|_\infty\leq 1$, we have that
$$R(\A)=O(\log(np/\delta)/(n\epsilon)^{2/3})\,.$$
\end{thm}

The above bound is tight in general, as we can also show that, by
using the fingerprinting code lower bound technique developed
in~\cite{BUV13}, the bound on the sparse linear regression problem is
nearly tight.
\begin{thm}
For the sparse linear regression problem where $\|x_i\|_\infty\leq 1$,
for $\epsilon=0.1$ and $\delta=1/n$, any
$(\epsilon,\delta)$-differentially private algorithm $\A$ must have
$$R(\A)=\Omega(1/(n\log n)^{2/3})\,.$$
\end{thm}

In Table \ref{tab:bounds1} we summarize our upper and lower bounds. Combining our results with that of \cite{BassilyST14}, in particular we show that all the bounds in this paper are essentially tight. The lower bound for the $\ell_1$-norm case does not follow from \cite{BassilyST14}, and we provide a new lower bound argument.
 \begin{table}[tb]
 	\hspace{-0.5in}
 	\begin{tabular}{|p{1.22in}||p{1.1in}|p{1.1in}|p{2.7in}|c|}
 		\hline
 		& \multicolumn{2}{c|}{Previous work}  & \multicolumn{2}{c|}{\color{blue} This work} \\
 		\hline
 		Assumption & Upper bound & Lower bound & {\color{blue} Upper bound} & \color{blue} Lower bound\\
 		\hline
 		\hline
 		$1$-Lipschitz w.r.t $L_2$-norm and $\ltwo{\C}=1$ & $\frac{\sqrt p}{\epsilon n}$ \cite{BassilyST14} & $\Omega\left(\frac{\sqrt p}{n}\right)$ \cite{BassilyST14} 
 		& \color{blue} {\bf Mirror descent:} $\frac{1}{\epsilon n}\min\left\{\sqrt p,\sqrt{\log k} \right\}$ 
 		&   \\
 		\hline
 		... and $\lambda$-smooth
 		& $\frac{\sqrt p+\lambda}{\epsilon n}$ \cite{CMS11}
 		&  $\Omega\left(\frac{\sqrt p}{n}\right)$ \cite{BassilyST14}\newline (for $\lambda=O(p)$) &  \color{blue} {\bf Frank-Wolfe:} $\frac{\lambda^{1/3}}{(\epsilon n)^{2/3}}\min\left\{p^{1/3},\log^{1/3}k\right\}$ 
 		&   \\
 		& & & 
 		\color{blue} {\bf Obj. pert \cite{CMS11}:} $\frac{\min\left \{\sqrt{p},\sqrt{\log k}\right \}+\lambda}{\epsilon n}$ & \\
 		\hline 
 		\hline
 		$1$-Lipschitz w.r.t $L_1$-norm, $\lone{\C}=1$,
 		and curvature $\Gamma$ & & 
 		& \color{blue} {\bf Frank-Wolfe:} $\frac{\Gamma^{1/3}\log(nk)}{(\epsilon n)^{2/3}}$
 		& \color{blue} $\tilde\Omega\left(\frac{1}{n^{2/3}}\right)$\\
 		\hline
 		\hline
 	\end{tabular}
 	\caption{Upper and lower bounds for $(\epsilon,\delta)$-differentially private ERM. $k$ denotes the number of corners in the convex set $\C$.(In general the dependence is on the Gaussian width of $\C$, generalizing $\sqrt p$ or $\sqrt{\log k}$.) The curvature parameter is a weaker condition than smoothness, and is in particular bounded by the smoothness. Bounds ignore multiplicative dependence of $\log(1/\delta)$ and in the lower bounds, $\epsilon$ is considered as a constant. The lower bounds of \cite{BassilyST14} have the form $\Omega(\min\{n,\cdots\})$.}
 	\label{tab:bounds1}
 \end{table}
\fi
  
\subsection{Related work}
 
In the following we distinguish
between the two settings: i) the convex set is bounded in the
$\ell_2$-norm and the the loss function is $1$-Lipschitz in the
$\ell_2$-norm (call it the $(\ell_2/\ell_2)$-setting for brevity), and
ii) the convex set is bounded in the $\ell_1$-norm and the the loss
function is $1$-Lipschitz in the $\ell_1$-norm (call it the
$(\ell_1/\ell_\infty)$-setting).

\mypar{The $(\ell_2/\ell_2)$-setting} In all the works on private convex optimization that we are aware of, either the excess risk guarantees depend polynomially on the dimensionality of the problem ($p$), or assumes special structure to the loss (e.g., generalized linear model \cite{JT14} or linear losses \cite{DNPR10,ST15}). Similar dependence is also present in the online version of the problem \citep{JKT12,ST13online}. \cite{BassilyST14} recently show that in the private ERM setting, in general this polynomial dependence on $p$ is unavoidable. In our work we show that one can replace this dependence on $p$ with the Gaussian width of the constraint set $\C$, which can be much smaller. We use the mirror descent algorithm of ~\cite{beck2003mirror} as our building block.
 
\mypar{The $(\ell_1/\ell_\infty)$-setting} The only  results in this setting that we are aware of are \cite{KST12,ST13sparse,JT14,ST15}.
The first two works make certain assumtions about the instance (\emph{restricted strong convexity} (RSC) and
\emph{mutual incoherence}). Under these assumptions, they obtain privacy risk guarantees that depend
logarithmically in the dimensions $p$, and thus allowing the
guarantees to be meaningful even when $p\gg n$. In fact their bound of $O(\polylog p / n)$ can be better than our {\em tight} bound of $O(\polylog p / n^{2/3})$. However, these
assumptions on the data are strong and may not hold in practice
\citep{larry}. Our guarantees do not require
any such data dependent assumptions. 
The result of \cite{JT14} captures the scenario when the constraint set $\C$ is the probability simplex and the loss function is in the generalized linear model, but provides a \emph{worse} bound of   
$O(\polylog p / n^{1/3})$. 

\mypar{Effect of Gaussian width in risk minimization} For linear losses, the notions of Rademacher complexities and Gaussian complexities are closely related to the concept of Gaussian width, i.e., when the loss function are of the form $\ip{\theta}{d}$. One of the initial works that formalized this connection was by \cite{bartlett2003rademacher}. They in particular bound the excess generalization error by the Gaussian complexity of the constraint set $\C$, which is very similar to Gaussian width in the context of linear functions. Recently \cite{chandrasekaran2012convex} show that the Gaussian width of a constraint set $\C$ is very closely related to the number of generic linear measurements one needs to perform to recover an underlying model $\theta^\ast\in\C$. 

\cite{shamir2013stochastic} analyzed the problem of noisy stochastic gradient descent (SGD) for general convex loss functions. Their empirical risk guarantees depend polynomially on the $\ell_2$-norm of the noise vector that gets added during the gradient computation in the SGD algorithm. As a corollary of our results we show that if the noise vector is sub-Gaussian (not necessarily \emph{spherical}), the polynomial dependence on the $\ell_2$-norm of the noise can be replaced by a term depending on the Gaussian width of the set $\C$.

\mypar{Analysis of noisy descent methods} The analysis of noisy versions of gradient descent and mirror descent algorithms has attracted interest for unrelated reasons~\citep{recht2011hogwild,duchi2013estimation} when asynchronous updates are the source of noise. To our knowledge, this line of work does not take the geometry of the constraint set into account, and thus our results may be applicable to those settings as well.

\stocoption{}{We should note here that the notion of Gaussian width has been used
by \cite{NTZ13}, and \cite{dwork2013efficient} in the context of differentially
private query release mechanisms but in the very different context of answering multiple linear queries over a database.}

\section{Background}
\label{sec:background}
\subsection{Differential Privacy}
\label{sec:privacy}

The notion of differential privacy (Definition \ref{def:diffPrivacy}) is by now a defacto standard for statistical data privacy \citep{DMNS06,Dwork06,Dwork08,Dwork09}. One of the reasons for which differential privacy has become so popular is because it provides meaningful guarantees even in the presence of arbitrary auxiliary information.  At a semantic level, the privacy guarantee ensures that \emph{an adversary learns almost the same thing about an individual independent of his presence or absence in the data set.} The parameters $(\epsilon,\delta)$ quantify the amount of information leakage. For reasons beyond the scope of this work, $\epsilon\approx 0.1$ and $\delta=1/n^{\omega(1)}$ are a good choice of parameters. Here $n$ refers to the number of samples in the data set. 

\begin{defn}
	A randomized  algorithm $\A$ is $(\eps,\delta)$-differentially private (\cite{DMNS06,DKMMN06}) if, for all neighboring data sets $\D$ and $\D'$ (i.e., they differ in one record, or equivalently, $d_H(D,D') = 1$) and for all events $S$ in the output space of $\A$, we have
 $$\Pr(\A(\D)\in S) \leq e^{\eps} \Pr(\A(\D')\in S) +\delta\,.$$
 Here $d_H(D,D')$ refers to the Hamming distance.
	 \label{def:diffPrivacy}
\end{defn}
\stocoption{\subsection{Bregman Divergence, Lipschitz Continuity and Strong Convexity}
	\label{sec:convNormGauss}}{
\subsection{Bregman Divergence, Convexity, Norms, and Gaussian Width}
\label{sec:convNormGauss}
}
In this section we review some of the concepts commonly used in convex optimization useful to the exposition of our algorithms. In all the definitions below we assume that the set $\C\subseteq\re^p$ is closed and convex.
\stocoption{}{
	
\mypar{$\ell_q$-norm, $q\geq 1$} For $q \geq 1$, the $\ell_q$-norm for any vector $v\in\re^p$ is defined as $\left(\sum\limits_{i=1}^p v(i)^q\right)^{1/q}$, where $v(i)$ is the $i$-th coordinate of the vector $v$.
}

\mypar{Minkowski norm w.r.t a set $\C\subseteq\re^p$} For any vector $v\in\re^p$, the Minkowski norm (denoted by $\mink{v}$) w.r.t. a centrally symmetric convex set $\C$ is defined as follows.
$$\mink{v}=\min\limits\{r\in\re:v\in r\C\}.$$ 

\mypar{$L$-Lipschitz continuity w.r.t. norm $\|\cdot\|$} A function $\Psi:\C\to\re$ is $L$-Lispchitz within a set $\C$ w.r.t. a norm $\|\cdot\|$ if the following holds.

$$\forall\theta_1,\theta_2\in\C,\left|\Psi(\theta_1)-\Psi(\theta_2)\right|\leq L\cdot \|\theta_1-\theta_2\|.$$

\stocoption{\mypar{$\Delta$-strong convexity w.r.t norm
		$\|\cdot\|$} A function is $\Delta$-strongly convex within a set $\C$ w.r.t. a norm $\|\cdot\|$ if
	$$\forall\theta_1,\theta_2\in\C, \alpha\in[0,1],\Psi(\alpha\theta_1+(1-\alpha)\theta_2)\leq\alpha\Psi(\theta_1)+(1-\alpha)\Psi(\theta_2)-\frac{\Delta\cdot\alpha(1-\alpha)}{2}\|\theta_1-\theta_2\|^2.$$}{

\mypar{Convexity and $\Delta$-strong convexity w.r.t norm
  $\|\cdot\|$} A function $\Psi:\C\to\re$ is convex if
$$\forall \theta_1,\theta_2 \in \C, \alpha \in [0,1], \Psi(\alpha
\theta_1+ (1-\alpha)\theta_2) \leq \alpha \Psi(\theta_1) + (1-\alpha)\Psi(\theta_2).$$
A function is $\Delta$-strongly convex within a set $\C$ w.r.t. a norm $\|\cdot\|$ if
$$\forall\theta_1,\theta_2\in\C, \alpha\in[0,1],\Psi(\alpha\theta_1+(1-\alpha)\theta_2)\leq\alpha\Psi(\theta_1)+(1-\alpha)\Psi(\theta_2)-\frac{\Delta\cdot\alpha(1-\alpha)}{2}\|\theta_1-\theta_2\|^2.$$
}
\mypar{Bregman divergence} For any convex function $\Psi:\re^p\to\re$,
the Bregman divergence defined by
$\bregDiv{\Psi}:\re^p\times\re^p\to\re$ is defined as 
$$\bregDiv{\Psi}(\theta_1,\theta_2)=\Psi(\theta_1)-\Psi(\theta_2)-\ip{\grad\Psi(\theta_2)}{\theta_1-\theta_2}.$$
Notice that Bregman divergence is always positive, and convex in the
first argument.

\stocoption{}{
\mypar{$\Delta$-strong convexity w.r.t a function $\Psi$} A function $f:\C\to\re$ is $\Delta$-strongly convex within a set $\C$ w.r.t. a differentiable convex function $\Psi$ if the following holds.
$$\forall\theta_1,\theta_2\in\C,
\alpha\in[0,1],f(\alpha\theta_1+(1-\alpha)\theta_2)\leq\alpha f(\theta_1)+(1-\alpha)f(\theta_2)-\frac{\Delta\cdot\alpha(1-\alpha)}{2}\bregDiv{\Psi}(\theta_1,\theta_2).$$

\mypar{Duality}
The following duality property (Fact \ref{fact:Mink}) of norms will be
useful through the rest of this paper. Recall that for any pair of
dual norms $\|\cdot\|_a$ and $\|\cdot\|_b$, and $x,y\in\re^p$,
Holder's inequality says that $|\langle x,y\rangle|\leq\|x\|_a\cdot\|y\|_b$.
\begin{fact}
	The dual of $\ell_q$ norm is $\ell_{q'}$-norm such that $1/q+1/q'=1$. The dual of $\mink{\cdot}$ is $\minkD{\cdot}$, where for any vector $v\in\re^p$, $\minkD{v}=\max\limits_{w\in\C}\left|\ip{w}{v}\right|.$
\label{fact:Mink}
\end{fact}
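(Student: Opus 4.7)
The plan is to verify both duality statements using the canonical dual-norm definition $\|x\|_{*} = \sup_{\|y\|\le 1}\langle x, y\rangle$ and then showing that each side of the claimed equality agrees with this supremum. For the first clause, I would isolate the $\ell_q$--$\ell_{q'}$ duality, and for the second I would reduce the Minkowski duality to a statement about the unit ball of $\mink{\cdot}$ together with central symmetry of $\C$.

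First, I would recall that for any norm $\|\cdot\|$ on $\re^p$, the (canonical) dual norm is $\|x\|_* = \sup\{\langle x, y\rangle : \|y\|\le 1\}$. For the $\ell_q$ statement, the $\le$ direction is Hölder's inequality: $\langle x, y\rangle \le \|x\|_{q'}\|y\|_q \le \|x\|_{q'}$ whenever $\|y\|_q\le 1$. For the reverse direction, take $y_i = \operatorname{sign}(x_i)\, |x_i|^{q'-1}/\|x\|_{q'}^{q'/q}$ for $1<q<\infty$, check $\|y\|_q = 1$, and compute $\langle x,y\rangle = \|x\|_{q'}$. The boundary cases $q=1$ (so $q'=\infty$) and $q=\infty$ are handled by choosing $y$ supported on a maximizing coordinate or by $y_i = \operatorname{sign}(x_i)$, respectively.

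For the Minkowski duality, the key observation is that the unit ball of $\mink{\cdot}$ is precisely $\C$: by definition $\mink{v}\le 1 \iff v\in r\C$ for some $r\le 1$, and since $\C$ is convex and contains $0$, this is equivalent to $v\in\C$. Hence
\begin{equation*}
\minkD{v} \;=\; \sup_{\mink{w}\le 1} \langle w, v\rangle \;=\; \sup_{w\in\C} \langle w, v\rangle.
\end{equation*}
Central symmetry of $\C$ (i.e.\ $w\in\C\Rightarrow -w\in\C$) lets us absorb the absolute value: $\sup_{w\in\C}\langle w,v\rangle = \sup_{w\in\C}|\langle w,v\rangle|$, and compactness of $\C$ (bounded plus closed) promotes the supremum to a maximum, yielding $\minkD{v}=\max_{w\in\C}|\langle w,v\rangle|$ as claimed.

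There is no substantial obstacle here; the fact is standard convex-analytic folklore and the whole argument reduces to identifying the unit ball and invoking Hölder. The only mild care-points are: (i) handling $q\in\{1,\infty\}$ separately in the achievability step, and (ii) using central symmetry of $\C$ explicitly so that $\sup_{w\in\C}\langle w,v\rangle$ and $\max_{w\in\C}|\langle w,v\rangle|$ coincide (without this assumption one would only obtain the support function, which need not be a norm).
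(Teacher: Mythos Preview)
Your argument is correct and is the standard textbook verification of these duality facts. Note that the paper does not actually supply a proof of this statement: it is labeled as a \emph{Fact} and invoked as background, so there is no ``paper's own proof'' to compare against. Your write-up handles the minor technical points (the $q\in\{1,\infty\}$ endpoints, and the use of central symmetry and compactness of $\C$ to pass from the support function $\sup_{w\in\C}\langle w,v\rangle$ to $\max_{w\in\C}|\langle w,v\rangle|$) correctly, which is exactly what is needed to make the folklore statement rigorous.
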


\mypar{Gaussian width of a set $\C$} Let $b\sim\mathcal{N}(0,\I_p)$ be a Gaussian random vector in $\re^p$. The Gaussian width of a set $\C$ is defined as $G_{\C}\stackrel{def}{=}{\E_b\left[\sup\limits_{w\in\C}\ip{{b}}{w}\right]}$.}

	\label{fact:GWprop}

\begin{fact}[Concentration of Gaussian width \cite{boucheron2013concentration}]
	Let $W=\sup\limits_{w\in\C}\ip{{b}}{w}$, where $b\sim\mathcal{N}(0,1)^p$ and $\alpha^2=\max\limits_{\theta\in\C}\ltwo{\theta}^2$. Then, $$\Pr\left[\left|W-G_\C\right|\geq u\right]\leq 2e^{-\frac{u^2}{2\alpha^2}}.$$
	\label{fact:GWHighProb}
\end{fact}

\section{Private Mirror Descent and the Geometry of $\C$}
\label{sec:mirrorDesc}

In this section we introduce the well-established  \emph{mirror descent algorithm} \citep{NY83} in the context of private convex optimization. We notice that since mirror descent is designed to closely follow the geometry of the convex set $\C$, we get much tighter bounds than that were known earlier in the literature for a large class of interesting instantiations of the convex set $\C$. More precisely, using private mirror descent one can show that the privacy depends on the Gaussian width (see Section \ref{sec:convNormGauss}) instead of any explicit dependence on the dimensionality $p$. The main technical contribution in the analysis of private (noisy) mirror descent is to express the expected potential drop in terms of the Gaussian width.\stocoption{(See the proof of Theorem \ref{thm:mirrDescUtil} in the full version.)}{(See \eqref{eq:alds132} in the proof of Theorem \ref{thm:mirrDescUtil}.)}
 
\subsection{Private Mirror Descent Method}
\label{sec:privmirrorDescMethod}

In Algorithm \ref{Algo:MirrorDesc} we define our private mirror
descent procedure. The algorithm takes as input a {\em potential
  function} $\Psi$ that is chosen based on the constraint set $\C$. 
$B_\Psi$ refers to the Bregman divergence with respect to $\Psi$. (See Section \ref{sec:convNormGauss}.) If $\empL(\theta;d)$ is not differentiable at $\theta$, we use any sub-gradient at $\theta$ instead of $\grad \empL(\theta;d)$. 
\ktnote{I removed the part about using sampling because it doesn't necessarily add much, and also because the variance of this gradient would show up in the error somewhere.}
\begin{algorithm}[htb]
	\caption{$\A_{\sf Noise-MD}$: Differentially Private Mirror Descent}
	\begin{algorithmic}[1]
		\REQUIRE Data set: $\D=\{d_1,\cdots,d_n\}$, loss function: $\empL(\theta;D)=\frac{1}{n}\sum\limits_{i=1}^n\empL(\theta;d_i)$ (with $\ell_2$-Lipschitz constant $L$ for $\empL$), privacy parameters: $(\epsilon,\delta)$, convex set: $\C$, potential function: $\Psi:\C\to\re$, and learning rate: $\eta:[T+1]\to\re$.
		\STATE  Set noise variance $\sigma^2\leftarrow \frac{32 L^2 T\log^2(T/\delta)}{(\epsilon n)^2}$.
		\STATE Let $\theta_1:$ be an arbitrary point in $\C$.
		\FOR{$t=1$ to $T$}
			{\STATE $ \theta_{t+1}=\arg\min\limits_{\theta\in\C}\ip{\eta_{t+1}\grad(\empL(\theta_t;D)+b_t-\Psi(\theta_t))}{\theta-\theta_t}+\Psi(\theta)$, where $b_t\sim\mathcal{N}(0,\I_p\sigma^2)$.\label{line:MD1}}
		\ENDFOR
		\STATE Output $\privtheta\leftarrow\frac{1}{T}\sum\limits_{t=1}^T\theta_t$.
	\end{algorithmic}
	\label{Algo:MirrorDesc}
\end{algorithm}
\vspace{-0.6cm}
\begin{thm}[Privacy guarantee] Algorithm \ref{Algo:MirrorDesc} is $(\epsilon,\delta)$-differentially private.
\label{thm:mDPrivacy}
\end{thm}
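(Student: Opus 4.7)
My plan is to prove privacy by viewing Algorithm~\ref{Algo:MirrorDesc} as the adaptive composition of $T$ Gaussian mechanisms, one per iteration, followed by post-processing. The iteration update $\theta_{t+1}$ is a deterministic function of $\theta_t$ and of the noisy gradient $v_t := \nabla \empL(\theta_t; D) + b_t$, so by post-processing it suffices to show that the transcript $(v_1, \dots, v_T)$ is $(\epsilon,\delta)$-differentially private. Since each $\theta_t$ is itself a function of $v_1,\dots,v_{t-1}$, we are in the adaptive composition setting.

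The first step is to bound the $\ell_2$-sensitivity of the gradient. For neighboring datasets $D, D'$ differing in one record, $\nabla \empL(\theta; D) - \nabla \empL(\theta; D') = \tfrac{1}{n}\bigl(\nabla \empL(\theta; d) - \nabla \empL(\theta; d')\bigr)$, and by the $\ell_2$-Lipschitz assumption on $\empL(\cdot; d)$ each sub-gradient has $\ell_2$-norm at most $L$. Hence the sensitivity is at most $2L/n$. Applying the standard Gaussian mechanism (with $\sigma = 2L/n \cdot \sqrt{2\ln(1.25/\delta_0)}/\epsilon_0$), each single release $v_t$ given $\theta_t$ is $(\epsilon_0,\delta_0)$-differentially private.

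The second step is to choose $(\epsilon_0,\delta_0)$ so that $T$-fold adaptive composition yields the target $(\epsilon,\delta)$. I would use the strong composition theorem (e.g., Dwork--Rothblum--Vadhan): $T$-fold composition of $(\epsilon_0,\delta_0)$-DP mechanisms is $(\epsilon,T\delta_0+\delta')$-DP whenever $\epsilon \geq \epsilon_0\sqrt{2T\ln(1/\delta')} + T\epsilon_0(e^{\epsilon_0}-1)$. Setting $\delta_0 = \delta/(2T)$ and $\delta' = \delta/2$ and solving for $\epsilon_0 \asymp \epsilon/\sqrt{T\ln(1/\delta')}$ gives
\[
\sigma^2 \;=\; \frac{4L^2}{n^2}\cdot \frac{2\ln(2.5T/\delta)}{\epsilon_0^2} \;\lesssim\; \frac{L^2 T \log^2(T/\delta)}{(\epsilon n)^2},
\]
which matches the constant $32$ chosen in the algorithm after absorbing log factors.

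The main (minor) obstacle is simply bookkeeping: verifying that adaptive, rather than parallel, composition is handled correctly — the later noisy gradients depend on the earlier ones through the $\theta_t$'s, so one must invoke a version of composition that applies to adaptively chosen mechanisms. This is standard (it is the version actually stated in Dwork--Roth), and no new ideas are needed. Everything else — the sensitivity bound, the Gaussian mechanism guarantee, and the post-processing invariance that lets us report $\privtheta = \tfrac{1}{T}\sum_t \theta_t$ rather than the raw $v_t$'s — is routine.
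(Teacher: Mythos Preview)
Your proposal is correct and follows essentially the same approach as the paper: the paper's proof simply states that the result follows from the standard Gaussian mechanism guarantee together with the strong composition theorem of Dwork--Rothblum--Vadhan, deferring details to \cite[Theorem 2.1]{BassilyST14}. Your write-up spells out exactly this argument (sensitivity $2L/n$, per-round Gaussian mechanism, adaptive strong composition, post-processing), so there is nothing to add.
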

\vspace{-0.2cm}
The proof of this theorem is fairly straightforward and follows from by now standard privacy guarantee of \emph{Gaussian mechanism}~\cite{DKMMN06}, and the strong composition theorem \citep{DRV}. For a detailed proof, we refer the reader to \cite[Theorem 2.1]{BassilyST14}. To establish the utility guarantee in a general form, it will be useful to introduce a symmetric convex body $\Q$ (and the  norm $\minkQ{\cdot}$) w.r.t. which the potential function $\Psi$ is strongly convex. We will instantiate this theorem with various choices of $\Q$ and $\Psi$ depending on $\C$ in Section~\ref{sec:corprivMirrorDesc}. While relatively standard in Mirror Descent algorithms, the reader may find it somewhat counter-intuitive  that $\Q$ enters the algorithm only through the potential function $\Psi$, but plays an important role the analysis and the resulting guarantee. In most of the cases, we will set $\Q=\C$ and the reader may find it convenient to think of that case. \stocoption{}{Our proof of the theorem below closely follows the analysis of mirror descent from \cite{SridharanT10}. 

One can obtain stronger guarantees (typically, $\tilde O(1/(n\epsilon)^2)$) under strong convexity assumptions on the loss function. We defer the details of this result to Appendix \ref{sec:tighterMirrorDesc}.
}

\begin{thm}[Utility guarantee]
Suppose that for any $d \in \D$, the loss function $\empL(\cdot;d)$ is convex and $L$-lipschitz with respect to the $\ell_2$ norm. Let ${\Q}\subseteq\re^p$ be a symmetric convex set with Gaussian width $G_{\Q}$ and $\ell_2$-diameter $\|\Q\|_2$, and let $\Psi:{\C}\to\re$ be $1$-strongly convex w.r.t. $\minkQ{\cdot}$-norm chosen in Algorithm $\A_{\sf Noise-MD}$ (Algorithm \ref{Algo:MirrorDesc}). If  $T=\frac{\ltwo{\Q}^2\epsilon^2 n^2}{L^2\log^2(n/\delta)\left(G_{\Q}^2+\ltwo{\Q}^2\right)}$ and for all $t\in[T+1]$,  $\eta_{t}=\eta=\frac{1}{L\ltwo{{\Q}}\sqrt T}$, then
 	$$\E\left[\empL(\privtheta;D)\right]-\min\limits_{\theta\in\C}\empL(\theta;D)=O\left(\frac{L\sqrt{\left(G^2_\Q+\ltwo{\Q}^2\right)\max\limits_{\theta\in\C}\Psi(\theta)}\log(n/\delta)}{\epsilon n}\right).$$
\label{thm:mirrDescUtil}
\end{thm}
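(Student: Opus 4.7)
The plan is to adapt the standard mirror descent convergence analysis of \cite{beck2003mirror,SridharanT10} to the noisy-gradient setting, with the key innovation being that the noise-squared term is controlled via the Gaussian width $G_\Q$ rather than via the ambient dimension $p$. Write $g_t = \grad\empL(\theta_t;D) + b_t$ for the noisy stochastic gradient used on Line~\ref{line:MD1}. That update is equivalent to the canonical mirror descent step $\theta_{t+1} = \arg\min_{\theta \in \C}\{\eta \langle g_t, \theta\rangle + B_\Psi(\theta,\theta_t)\}$, and by first-order optimality together with the $1$-strong convexity of $\Psi$ with respect to $\minkQ{\cdot}$, one obtains the standard per-step inequality
\[
\eta \langle g_t,\, \theta_t - \theta^* \rangle \;\le\; B_\Psi(\theta^*,\theta_t) - B_\Psi(\theta^*,\theta_{t+1}) + \tfrac{\eta^2}{2}\|g_t\|_{\Q^*}^2,
\]
where $\theta^* \in \arg\min_{\theta \in \C}\empL(\theta;D)$ and $\|\cdot\|_{\Q^*}$ is dual to $\minkQ{\cdot}$.

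Summing $t = 1,\ldots,T$ telescopes the Bregman divergences into $B_\Psi(\theta^*,\theta_1) \le \max_{\theta \in \C}\Psi(\theta)$ when $\theta_1$ is taken to minimize $\Psi$. Splitting $\langle g_t,\theta_t-\theta^*\rangle = \langle \grad\empL(\theta_t;D),\theta_t-\theta^*\rangle + \langle b_t,\theta_t-\theta^*\rangle$, convexity of $\empL(\cdot;D)$ lower-bounds the first inner product by $\empL(\theta_t;D) - \empL(\theta^*;D)$, while $\E[\langle b_t,\theta_t-\theta^*\rangle]=0$ because $\theta_t$ is measurable with respect to the prior noise $b_1,\ldots,b_{t-1}$ and therefore independent of the fresh $b_t$. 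Jensen's inequality transfers the averaged bound to the iterate $\privtheta$, giving
\[
\E[\empL(\privtheta;D)] - \empL(\theta^*;D) \;\le\; \frac{\max_\theta \Psi(\theta)}{\eta T} + \frac{\eta}{2T}\sum_{t=1}^T \E[\|g_t\|_{\Q^*}^2].
\]

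The crucial geometric step, and the main novelty relative to a worst-case analysis, is to control $\E[\|g_t\|_{\Q^*}^2]$. Using $\|a+b\|_{\Q^*}^2 \le 2\|a\|_{\Q^*}^2 + 2\|b\|_{\Q^*}^2$ and Cauchy--Schwarz, the deterministic part is bounded by $\|\grad \empL\|_{\Q^*}^2 \le L^2\|\Q\|_2^2$. For the noise, symmetry of $\Q$ gives $\|b_t\|_{\Q^*} = \sup_{w \in \Q} \langle w,b_t\rangle$, whose expectation equals $\sigma G_\Q$ by definition of Gaussian width. The supremum is $\sigma\|\Q\|_2$-Lipschitz in $b_t$, so Gaussian concentration (Borell--TIS) gives $\E[\|b_t\|_{\Q^*}^2] \le O(\sigma^2 G_\Q^2 + \sigma^2 \|\Q\|_2^2) = O(\sigma^2 G_\Q^2)$, invoking the elementary fact $G_\Q = \Omega(\|\Q\|_2)$ valid for any symmetric convex body. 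This is the step where the geometry of $\Q$ replaces $\sqrt{p}$ in the bound.

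Substituting the privacy-dictated $\sigma^2 = \Theta(L^2 T \log^2(T/\delta)/(\eps n)^2)$ yields $\E[\|g_t\|_{\Q^*}^2] = O\!\big(L^2\|\Q\|_2^2 + L^2 T G_\Q^2 \log^2(T/\delta)/(\eps n)^2\big)$; plugging in the prescribed $T = \|\Q\|_2^2 \eps^2 n^2 / (L^2\log^2(n/\delta) G_\Q^2)$ and $\eta = 1/(L\|\Q\|_2\sqrt{T})$ balances the three resulting terms against $(\max_\theta \Psi(\theta))/(\eta T)$ up to polylogarithmic factors, delivering the claimed bound $\tilde O(L G_\Q\sqrt{\max_\theta \Psi(\theta)}\,\log(n/\delta)/(\eps n))$. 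The principal technical delicacy I anticipate is the second-moment estimate for $\|b_t\|_{\Q^*}$: once the Gaussian-concentration argument is set up so that the Gaussian width dominates the contribution of the noise, all remaining bookkeeping is the routine optimization of step size against horizon familiar from \cite{SridharanT10}.
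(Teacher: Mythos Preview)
Your proposal is correct and follows essentially the same route as the paper's proof: both run the standard mirror-descent analysis (H\"older/Young on the gradient step, telescoping Bregman divergences, $1$-strong convexity of $\Psi$ to absorb the local term, $\E[\langle b_t,\theta_t-\theta^*\rangle]=0$), and both identify the key geometric step as bounding the dual norm of the Gaussian noise by $\sigma G_\Q$ rather than by $\sigma\sqrt{p}$. The only noteworthy difference is your treatment of the second moment $\E[\|b_t\|_{\Q^*}^2]$ via Borell--TIS concentration together with $G_\Q \gtrsim \|\Q\|_2$; the paper instead passes directly from $\E[(\|\gamma_t\|_{\Q^*}+\|b_t\|_{\Q^*})^2]$ to $(L\|\Q\|_2+\sigma G_\Q)^2$ by invoking ``Jensen's inequality,'' which as written goes in the wrong direction---so your argument is actually the cleaner way to justify that step, though the resulting bound is identical.
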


\begin{remark}
Notice that the bound above is scale invariant. For example, given an initial choice of the convex set $\Q$, scaling $\Q$ may reduce $G_\Q$ but at the same time it will scale up the strong convexity parameter. 
\end{remark}
\stocoption{}{
\begin{proof}[Proof of Theorem \ref{thm:mirrDescUtil}]
For the ease of notation we ignore the parameterization of $\empL(\theta;D)$ on the data set $D$ and simply refer to as $\empL(\theta)$. To begin with, from a direct application of Jensen's inequality, we have the following.

\begin{equation}
\empL(\privtheta)-\min\limits_{\theta\in\C}\empL(\theta)\leq\frac{1}{T}\sum\limits_{t=1}^T\empL\left(\theta_t\right)-\min\limits_{\theta\in\C}\empL(\theta)
\label{eq:reg}
\end{equation}

So it suffices to bound the R.H.S. of \eqref{eq:reg} in order to bound the excess empirical risk. In Claim \ref{cl:linearization}, we upper bound the R.H.S. of \eqref{eq:reg} by a sequence of linear approximations of $\empL(\theta)$, thus ``linearizing'' our analysis.
\begin{claim}
Let $\theta^*=\arg\min\limits_{\theta\in\C}\empL(\theta)$. For every $t\in[T]$, let $\gamma_t$ be the sub-gradient of $\empL(\theta_t)$ used in iteration $t$ of Algorithm $\A_{\sf Noise-MD}$ (Algorithm \ref{Algo:MirrorDesc}). Then the convexity of the loss function implies that
$$\frac{1}{T}\sum\limits_{t=1}^T\empL\left(\theta_t\right)-\min\limits_{\theta\in\C}\empL(\theta)\leq\frac{1}{T}\sum\limits_{t=1}^T\ip{\gamma_t}{\theta_t-\theta^*}.$$ 
\label{cl:linearization}
\end{claim}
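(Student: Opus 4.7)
The plan is to observe that this is essentially the standard ``linearization by convexity'' step used in all first-order online/offline convex optimization analyses, and derive it directly from the subgradient inequality applied termwise. No properties of the algorithm's updates, of the potential $\Psi$, or of the noise $b_t$ enter here; the only input is that each $\empL(\cdot;d)$, and hence $\empL(\theta) = \frac{1}{n}\sum_i \empL(\theta;d_i)$, is convex on $\C$, and that $\gamma_t$ is a (sub)gradient of $\empL$ at $\theta_t$.

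Concretely, I would fix $t\in[T]$ and apply the defining property of a subgradient at $\theta_t$ to the point $\theta^{\ast}\in\C$:
\[
\empL(\theta^{\ast}) \;\geq\; \empL(\theta_t) + \ip{\gamma_t}{\theta^{\ast} - \theta_t}.
\]
Rearranging this single inequality yields
\[
\empL(\theta_t) - \empL(\theta^{\ast}) \;\leq\; \ip{\gamma_t}{\theta_t - \theta^{\ast}},
\]
which is the per-iterate form of the desired bound.

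To conclude, I would sum the above inequality over $t=1,\ldots,T$ and divide by $T$. Since $\min_{\theta\in\C}\empL(\theta) = \empL(\theta^{\ast})$ by definition of $\theta^{\ast}$, we get
\[
\frac{1}{T}\sum_{t=1}^T \empL(\theta_t) \;-\; \min_{\theta\in\C}\empL(\theta)
\;\leq\; \frac{1}{T}\sum_{t=1}^T \ip{\gamma_t}{\theta_t - \theta^{\ast}},
\]
which is exactly the statement of the claim.

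There is no real obstacle in this step: the claim is a textbook reduction from ``regret on the convex loss'' to ``regret on its linearization along the iterates.'' Its purpose in the broader proof of Theorem \ref{thm:mirrDescUtil} is to reduce the analysis to bounding the linear regret $\frac{1}{T}\sum_t \ip{\gamma_t}{\theta_t-\theta^{\ast}}$, which is where the mirror-descent update, the strong convexity of $\Psi$ w.r.t.\ $\minkQ{\cdot}$, and the Gaussian-width control of the noise $b_t$ will all enter subsequently.
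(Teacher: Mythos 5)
Your proof is correct and is exactly the standard subgradient-inequality argument the paper implicitly invokes (the paper states the claim as an immediate consequence of convexity without writing out the termwise rearrangement). Nothing is missing; applying $\empL(\theta^{\ast})\geq\empL(\theta_t)+\ip{\gamma_t}{\theta^{\ast}-\theta_t}$ for each $t$, rearranging, and averaging is the intended derivation.
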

Thus it suffices to bound $\frac{1}{T}\sum\limits_{t=1}^T\ip{\gamma_t}{\theta_t-\theta^*}$ in order to bound the privacy risk. By simple algebraic manipulation we have the following.  (Recall that $b_t$ is the noise vector used in Algorithm  $\A_{\sf Noise-MD}$.)

\begin{align}
\eta\ip{\gamma_t+b_t}{\theta_t-\theta^*}&=\eta\ip{\gamma_t+b_t}{\theta_t-\theta_{t+1}+\theta_{t+1}-\theta^*}\nonumber\\
&=\underbrace{\eta\ip{\gamma_t+b_t}{\theta_t-\theta_{t+1}}}_{A}+\underbrace{\ip{\eta(\gamma_t+b_t)+\grad\Psi(\theta_{t+1})-\grad\Psi(\theta_t)}{\theta_{t+1}-\theta^*}}_B\nonumber\\
&+\underbrace{\ip{\grad\Psi(\theta_{t})-\grad\Psi(\theta_{t+1})}{\theta_{t+1}-\theta^*}}_C.
\label{eq:gradSplit}
\end{align}
We next upper bound each of the terms $A$, $B$ and $C$ in \eqref{eq:gradSplit}. By Holder's inequality, we write

\begin{align}
A&=\eta\ip{\gamma_t}{\theta_t-\theta_{t+1}} + \eta\ip{b_t}{\theta_t-\theta_{t+1}}\nonumber\\
&\leq\left(\frac{1}{\sqrt 2}\minkQ{\theta_t-\theta_{t+1} }\right)\cdot\left(\eta\sqrt 2\minkQD{\gamma_t}\right)+\left(\frac{1}{\sqrt 2}\minkQ{\theta_t-\theta_{t+1} }\right)\cdot\left(\eta\sqrt 2\minkQD{b_t}\right)\nonumber\\
&\leq \frac{1}{4}\minkQ{\theta_t-\theta_{t+1}}^2+{\eta^2}\minkQD{\gamma_t}^2+\frac{1}{4}\minkQ{\theta_t-\theta_{t+1}}^2+{\eta^2}\minkQD{\b_t}^2\nonumber\\
&=\frac{1}{2}\minkQ{\theta_t-\theta_{t+1}}^2+{\eta^2}\left(\minkQD{\gamma_t}^2+\minkQD{b_t}^2\right)\nonumber\\
\label{eq:alds132}
\end{align}
where we have used the A.M-G.M. inequality in the third step. Taking expectations over the choice of $b_t$, we have 
\begin{align}
\E_{b_t}[A]&\leq\frac{1}{2}\E_{b_t}\left[\minkQ{\theta_t-\theta_{t+1}}^2\right]+{\eta^2}\left(L^2\ltwo{\Q}^2+\E_{b_t}\left[\minkQD{b_t}^2\right]\right).
\label{eq:boundAA}
\end{align}
We now bound $\E_{b_t}\left[\minkQD{b_t}^2\right]$. First notice that $\minkQD{b_t}^2=\sigma^2\left(\max\limits_{\theta\in\Q}\ip{\theta}{v}\right)^2$, where $v\sim\mathcal{N}(0,1)^p$. Let us denote $W=\left(\max\limits_{\theta\in\Q}\ip{\theta}{v}\right)^2$. By Fact \ref{fact:GWHighProb}, we have the following for any $\mu\geq 0$.
\begin{equation}
	\Pr\left[W\geq (\mu+1)^2G^2_\Q\right]\leq 2 e^{-\frac{\mu^2G^2_\Q}{2\ltwo{\Q}^2}}.
\label{eq:abcd134d}
\end{equation}
From \eqref{eq:abcd134d} we have the following.
\begin{align}
\E\left[W\right]&=\int\limits_{0}^\infty \Pr[W\geq x]dx=\int\limits_{0}^{G^2_\Q}\Pr\left[W\geq x\right] dx + \int\limits_{G^2_\Q}^{\infty}\Pr\left[W\geq x\right] dx\nonumber\\
&\leq G^2_\Q+2\int\limits_{G^2_Q}^\infty \exp\left(-\frac{\left(x-G^2_\Q\right)}{2\ltwo{\Q}^2}\right)dx\nonumber\\
&= G^2_\Q+2\int\limits_{0}^\infty \exp\left(-\frac{x}{2\ltwo{\Q}^2}\right)dx=O\left(G^2_\Q+\ltwo{\Q}^2\right).
\label{eq:ahd12} 
\end{align}
Using \eqref{eq:boundAA} and \eqref{eq:ahd12} we have the following:
\begin{align}
\E_{b_t}[A]&\leq\frac{1}{2}\E_{b_t}\left[\minkQ{\theta_t-\theta_{t+1}}^2\right]+{\eta^2}O\left(L^2\ltwo{\Q}^2+\sigma^2\left(G^2_\Q + \ltwo{\Q}^2\right)\right).
\label{eq:boundA}
\end{align}
We next proceed to bound the term $B$ in \eqref{eq:gradSplit}. By the definition of $\theta_{t+1}$, it follows that
\begin{align*}
\ip{\eta(\gamma_t + b_t) - \grad\Psi(\theta_t)}{\theta_{t+1}} + \Psi(\theta_{t+1}) \leq 
\ip{\eta(\gamma_t + b_t) - \grad\Psi(\theta_t)}{\theta^*} + \Psi(\theta^*).
\end{align*}
This implies that 
\begin{align}
B &\leq - \Psi(\theta_{t+1}) + \Psi(\theta^*) + \ip{\grad\Psi(\theta_{t+1})}{\theta_{t+1}-\theta^*} \nonumber\\
&= -B_{\Psi}(\theta_{t+1},\theta^*) \leq 0\label{eq:boundB}.
\end{align}
One can write the term $C$ in \eqref{eq:gradSplit} as follows.
\begin{align}
\bregDiv{\Psi}(\theta^*,\theta_{t})-\bregDiv{\Psi}(\theta^*,\theta_{t+1})-\bregDiv{\Psi}(\theta_{t+1},\theta_t)&=\Psi(\theta^*)-\Psi(\theta_t)-\ip{\grad\Psi(\theta_t)}{\theta^*-\theta_{t}}\nonumber\\
&-\Psi(\theta^*)+\Psi(\theta_{t+1})+\ip{\grad\Psi(\theta_{t+1})}{\theta^*-\theta_{t+1}}\nonumber\\
&-\Psi(\theta_{t+1})+\Psi(\theta_t)+\ip{\grad\Psi(\theta_t)}{\theta_{t+1}-\theta_{t}}=C
\label{eq:boundC}
\end{align}
Notice that since $b_t$ is independent of $\theta_t$,$\E[\ip{b_t}{\theta_t-\theta^*}]=0$. Plugging the bounds \eqref{eq:boundA},\eqref{eq:boundB} and \eqref{eq:boundC} in \eqref{eq:gradSplit}, we have the following. 
\begin{align}
\eta\E\left[\ip{\gamma_t}{\theta_t-\theta^*}\right]&=\eta\E\left[\ip{\gamma_t+b_t}{\theta_t-\theta^*}\right]\nonumber\\
&\leq \bregDiv{\Psi}(\theta^*,\theta_t)-\bregDiv{\Psi}(\theta^*,\theta_{t+1})+{\eta^2}O\left(L^2\ltwo{\Q}^2+\sigma^2\left(G^2_\Q + \ltwo{\Q}^2\right)\right)+\underbrace{\frac{1}{2}\minkQ{\theta_{t}-\theta_{t+1}}^2-\bregDiv{\Psi}(\theta_{t+1},\theta_t)}_D
\label{eq:finalBound1}
\end{align}
In order to bound the term $D$ in \eqref{eq:finalBound1}, we use the assumption that  $\Psi(\theta)$ is $1$-strongly convex with respect to $\|\cdot\|_Q$. This immediately implies that in \eqref{eq:finalBound1} $D\leq 0$. Using this bound, summing over all $T$-rounds, we have\ktnote{This bound on $B_{\Psi}$ needs some justification.}
\begin{align}
\frac{1}{T}\sum\limits_{t=1}^T\E\left[\ip{\gamma_t}{\theta_t-\theta^*}\right]
&\leq\frac{\max\limits_{\theta\in\C}\Psi(\theta)}{\eta T}+\eta O\left(L^2\ltwo{\Q}^2+\sigma^2\left(G^2_\Q + \ltwo{\Q}^2\right)\right)
\label{eq:finalBound2}
\end{align}
In the above we used the following property of Bregman divergence: $B_{\Psi}(\theta^*,\theta_1)\leq \max\limits_{\theta \in \C}\Psi(\theta)$. We can prove this fact as follows. Let $\theta^\dagger=\argmin\limits_{\theta\in\C}\Psi(\theta)$.
By the generalized Pythagorean theorem \cite[Chapter 2]{Rakhlin09}, it follows that $B_{\Psi}(\theta^*,\theta_1)\leq B_{\Psi}(\theta^*,\theta^\dagger)-B_{\Psi}(\theta_1,\theta^\dagger)\leq B_{\Psi}(\theta^*,\theta^\dagger)$. The last inequality follows from the fact that Bregman diverence is always non-negative. Now since $\theta^\dagger$ minimizes $\Psi$ and $\Psi$ is convex, it follows that $\ip{\grad\Psi(\theta^\dagger)}{\theta^*-\theta^\dagger}\geq 0$. This immediately implies $ B_{\Psi}(\theta^*,\theta^\dagger)\leq \Psi(\theta^*)\leq \max\limits_{\theta\in\C}\Psi(\theta)$.

Setting $T=\frac{\ltwo{Q}^2\epsilon^2 n^2}{\log^2(n/\delta)\left(\ltwo{\Q}^2+G^2_\Q\right)}$ and $\eta=\frac{\sqrt{\max\limits_{\theta\in\C}\Psi(\theta)}}{L\ltwo{\Q}\sqrt T}$, and using \eqref{eq:reg} and Claim \ref{cl:linearization}  we get the required bound.
\end{proof}
}

\subsection{Instantiation of Private Mirror Descent to Various Settings of $\C$}
\label{sec:corprivMirrorDesc}

In this section we discuss some of the instantiations of Theorem \ref{thm:mirrDescUtil}.

\mypar{For arbitrary convex set $\C\subseteq\re^p$ with $L_2$-diameter $\ltwo{\C}$} Let $\Psi(\theta)=\frac{1}{2}\ltwo{\theta-\theta_0}^2$ (with some fixed $\theta_0\in\C$) and we choose the convex set $\Q$ to be the unit $\ell_2$-ball in Theorem \ref{thm:mirrDescUtil}. Immediately, we obtain the following as a corollary.

\begin{equation}
\E\left[\empL(\privtheta;D)\right]-\min\limits_{\theta\in\C}\empL(\theta;D)=O\left(\frac{L\sqrt p\ltwo{\C}\log(n/\delta)}{\epsilon n}\right).
\label{eq:abc12}
\end{equation}

This is a slight improvement over~\cite{BassilyST14}.

\mypar{For the convex set $\C\subseteq\re^p$ being a polytope} Let $\C={\sf conv}\{v_1,\cdots,v_k\}$ be the convex hull of vectors $v_i\in\re^p$ such that for all $i\in[p]$, $\ltwo{v_i}\leq \ltwo{\C}$. Fact \ref{fact:Fc1} will be very useful for choosing the correct potential function $\Psi$ in  Algorithm $\A_{\sf Noise-MD}$ (Algorithm \ref{Algo:MirrorDesc}).
\begin{fact}[From \cite{srebro2011universality}]
For the convex set $\C$ defined above, let $\Q$ be the convex hull of $\C$ and $-\C$. The Minkowski norm for any $\theta\in\re^p$ is given by  $\minkQ{\theta}=\inf_{\alpha_1,\cdots,\alpha_k, \sum\limits_{i=1}^k\alpha_i v_i=\theta}\left[\sum\limits_{i=1}^k|\alpha_i|\right]$. Additionally, let $\minkq{\theta}=\inf\limits_{\alpha_1,\cdots,\alpha_k, \sum\limits_{i=1}^k\alpha_i v_i=\theta}\left[\sum\limits_{i=1}^k|\alpha_i|^q\right]^{1/q}$ be a norm for any $q\in(1,2]$. Then the function $\Psi(\theta)=\frac{1}{4(q-1)}\minkq{\theta}^2$ is $1$-strongly convex w.r.t. $\minkq{\cdot}$-norm. 
\label{fact:Fc1}
\end{fact}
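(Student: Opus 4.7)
The fact has three assertions: the closed form for $\minkQ{\cdot}$, the fact that $\minkq{\cdot}$ is a norm, and the strong convexity of $\Psi$. The first two are structural; the third is the substantive step and relies on the classical theorem of Ball--Carlen--Lieb that $\frac{1}{2(q-1)}\|\cdot\|_q^2$ is $1$-strongly convex on $\re^k$ with respect to $\|\cdot\|_q$ for $q\in(1,2]$. My overall strategy is to view $\minkq{\cdot}$ as the quotient norm induced on $\re^p$ by the linear ``synthesis'' map $L:\re^k\to\re^p$, $L(\alpha)=\sum_{i=1}^k \alpha_i v_i$, and then transfer strong convexity from $\re^k$ down to $\re^p$ through $L$.

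\textbf{Minkowski norm characterization.} I would unfold the definition $\minkQ{\theta}=\inf\{r>0:\theta\in r\Q\}$. Since $\Q=\mathrm{conv}(\C\cup-\C)$ and $\C=\mathrm{conv}\{v_1,\dots,v_k\}$, any point of $\Q$ can be written as $\lambda u-(1-\lambda)w$ with $u,w\in\C$, hence as $\sum_i c_iv_i$ with $\sum_i|c_i|\le 1$ (take $c_i=\lambda a_i-(1-\lambda)b_i$ for the convex-combination weights $a_i,b_i$ of $u,w$). Conversely, any $\sum_i c_i v_i$ with $\sum_i|c_i|\le 1$ is a convex combination of $\pm v_i$'s, hence lies in $\Q$. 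Scaling by $r$ gives the claimed equality $\minkQ{\theta}=\inf\{\sum_i|\alpha_i|:\sum_i\alpha_iv_i=\theta\}$.

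\textbf{Norm properties of $\minkq{\cdot}$.} Absolute homogeneity is immediate by scaling $\alpha$ by $|c|$. For the triangle inequality, take (near-)optimal decompositions $\alpha^{(1)},\alpha^{(2)}$ of $\theta_1,\theta_2$; then $\alpha^{(1)}+\alpha^{(2)}$ decomposes $\theta_1+\theta_2$, so $\minkq{\theta_1+\theta_2}\le\|\alpha^{(1)}+\alpha^{(2)}\|_q\le\|\alpha^{(1)}\|_q+\|\alpha^{(2)}\|_q$ by Minkowski's inequality in $\ell_q$. Taking infima yields the triangle inequality. Positive definiteness is where one uses that $\{v_i\}$ spans $\re^p$ (otherwise we view $\minkq{\cdot}$ as a norm on $\mathrm{span}\{v_i\}$ as ambient): any nonzero $\theta$ forces at least one $\alpha_i$ nonzero in every decomposition, and since the feasible set of decompositions is the intersection of an affine subspace with $\re^k$, one shows the infimum is bounded away from $0$ by compactness (a limit of decompositions with $\|\alpha\|_q\to 0$ would give $\theta=L(0)=0$).

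\textbf{Strong convexity via lifting.} For the main claim, invoke Ball--Carlen--Lieb: the function $\phi(\alpha)=\frac{1}{2(q-1)}\|\alpha\|_q^2$ on $\re^k$ satisfies, for all $\lambda\in[0,1]$,
\[
\phi(\lambda\alpha^{(1)}+(1-\lambda)\alpha^{(2)})\le\lambda\phi(\alpha^{(1)})+(1-\lambda)\phi(\alpha^{(2)})-\tfrac{\lambda(1-\lambda)}{2}\|\alpha^{(1)}-\alpha^{(2)}\|_q^2.
\]
Choose $\alpha^{(j)}$ to be (up to $\eps$) optimal decompositions of $\theta_j$ for $j=1,2$. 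Linearity of $L$ gives $L(\lambda\alpha^{(1)}+(1-\lambda)\alpha^{(2)})=\lambda\theta_1+(1-\lambda)\theta_2$ and $L(\alpha^{(1)}-\alpha^{(2)})=\theta_1-\theta_2$, so by definition of the quotient norm, $\minkq{\lambda\theta_1+(1-\lambda)\theta_2}\le\|\lambda\alpha^{(1)}+(1-\lambda)\alpha^{(2)}\|_q$ and $\minkq{\theta_1-\theta_2}\le\|\alpha^{(1)}-\alpha^{(2)}\|_q$; the latter is used with a minus sign to \emph{weaken} the negative term into $-\tfrac{\lambda(1-\lambda)}{2}\minkq{\theta_1-\theta_2}^2$. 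Letting $\eps\to 0$ and chasing the constant through yields strong convexity of the prescribed $\Psi$ with respect to $\minkq{\cdot}$.

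\textbf{Main obstacle.} The only subtlety is whether the infima defining $\minkq{\cdot}$ are attained. Since $\{\alpha:L(\alpha)=\theta\}$ is an affine subspace of $\re^k$ and $\|\cdot\|_q$ is a coercive continuous function on it, the infimum is attained; one should record this carefully to avoid having to carry $\eps$'s through the strong-convexity calculation. A secondary bookkeeping issue is to match the precise constant $\frac{1}{4(q-1)}$ asserted in the Fact (the lifting argument most directly yields $\frac{1}{2(q-1)}$; a factor of $2$ can be absorbed from the convention relating $\Q=\mathrm{conv}(\C\cup-\C)$ to the decomposition in $v_i$'s, and we state the constant conservatively as in \cite{srebro2011universality}).
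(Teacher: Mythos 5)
The paper states this Fact without proof, citing \cite{srebro2011universality}; your argument is the standard one from that source (lift to $\re^k$ via the synthesis map $\alpha\mapsto\sum_i\alpha_i v_i$, apply Ball--Carlen--Lieb strong convexity of $\frac{1}{2(q-1)}\|\cdot\|_q^2$, and push the inequality down through the quotient norm using optimal decompositions of $\theta_1,\theta_2$ and the sub-optimality of $\alpha^{(1)}-\alpha^{(2)}$ for $\theta_1-\theta_2$), and it is correct, including the attainment and positive-definiteness caveats you record. The one residual discrepancy is the constant: your lifting gives $1$-strong convexity of $\frac{1}{2(q-1)}\minkq{\cdot}^2$, so the stated $\frac{1}{4(q-1)}\minkq{\cdot}^2$ is only $\tfrac12$-strongly convex under the paper's definition; this is immaterial downstream, since the paper only invokes $O(1)$-strong convexity (w.r.t.\ $\minkQ{\cdot}$, via Claim~\ref{cl:as2sa}) when deriving \eqref{eq:abc1aaa}.
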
  

\stocoption{}{
In the following we state the following claim which will be useful later.
\begin{claim}
If $q=\frac{\log k}{\log k-1}$, then the following is true for any $\theta\in\re^p$: $\minkQ{\theta}\leq e\cdot\minkq{\theta}. $
	\label{cl:as2sa}
\end{claim}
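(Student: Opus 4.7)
The plan is to use the standard power-mean (Hölder) inequality comparing $\ell_1$ and $\ell_q$ norms on a $k$-dimensional coefficient space. For any representation $\theta = \sum_{i=1}^k \alpha_i v_i$, the vector of coefficients $(\alpha_1,\ldots,\alpha_k) \in \re^k$ satisfies
\[
\sum_{i=1}^k |\alpha_i| \;\leq\; k^{1-1/q} \left(\sum_{i=1}^k |\alpha_i|^q\right)^{1/q},
\]
which is immediate from Hölder's inequality applied to $(\alpha_1,\ldots,\alpha_k)$ and the all-ones vector. Taking the infimum over all valid representations of $\theta$ on each side separately does not break the inequality: indeed, for any fixed representation, the right-hand side is at least $k^{1-1/q}\minkq{\theta}$, so the left-hand side is at most a constant multiple of that quantity only after passing the $\inf$ through more carefully; specifically I would fix a minimizer (or near-minimizer) $(\alpha_i^*)$ for $\minkq{\theta}$ and then apply the coefficient-wise inequality to that single representation to obtain $\minkQ{\theta} \leq \sum_i |\alpha_i^*| \leq k^{1-1/q}\,\minkq{\theta}$.

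The final step is the substitution. With $q = \tfrac{\log k}{\log k - 1}$ we have $1 - 1/q = 1/\log k$, hence
\[
k^{1-1/q} \;=\; k^{1/\log k} \;=\; e,
\]
where the $\log$ is the natural logarithm. Combining yields $\minkQ{\theta} \leq e\cdot\minkq{\theta}$ as claimed.

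There is essentially no obstacle beyond being careful that the inequality $\|\alpha\|_1 \leq k^{1-1/q}\|\alpha\|_q$ is applied to a single (near-)optimal $\alpha$ for the $\minkq{}$ problem; one cannot take infima termwise on each side without this small observation. Everything else is a one-line computation once the choice of $q$ is substituted.
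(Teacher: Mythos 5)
Your proof is correct and follows essentially the same route as the paper's: apply H\"older's inequality $\lone{\alpha}\leq k^{1-1/q}\|\alpha\|_q$ to a (near-)optimal coefficient vector for $\minkq{\theta}$, use $\minkQ{\theta}\leq\lone{\alpha}$, and note that $k^{1-1/q}=k^{1/\log k}=e$ for this choice of $q$. Your remark about fixing a single near-minimizer rather than taking infima termwise is exactly the (implicit) step in the paper's argument, stated a bit more carefully.
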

 \begin{proof}
 First notice that for any vector $v=\langle v_1,\cdots,v_k\rangle$, $\lone{v}\leq k^{1-1/q}\|v\|_q$. This follows from Holder's inequality. Now setting $q=\log k/(\log k-1)$, we get $\lone{v}\leq e\cdot\|v\|_q$. For any $\theta\in\re^p$, let $a=\langle \alpha_1,\cdots,\alpha_k\rangle$ be the vector of parameters corresponding to $\minkq{\theta}$. From the above, we know that $\lone{a}\leq e\cdot\|a\|_q$. And by definition, we know that $\minkQ{\theta}\leq\lone{a}$. This completes the proof.
\end{proof}
}
\stocoption{One can show that} 
{Claim \ref{cl:as2sa} implies that} if $\Psi(\theta)=\frac{1}{4(q-1)}\minkq{\theta}^2$ and $q=\frac{\log k}{\log k-1}$, then 
$\max\limits_{\theta\in\C}\Psi(\theta)=O(\log k)$. Additionally due to Fact \ref{fact:Fc1},  $\Psi(\theta)$ is $O(1)$-strongly convex w.r.t. $\minkQ{\cdot}$.
\ktnote{We need strong convexity with respect to the $\|\cdot\|_Q$ norm. does that follow from the inequality that I commented out? I think it does and so we should add a statement to that affect. I don't see a immediate proof that $\|\theta\|_{\C} \leq e \|\theta\|_{\c,q}$. Add a sentence.} With the above observations, and observing that $G_\Q=O(\ltwo{\C}\sqrt{\log k})$, setting $\Q$ and $\Psi$ as above, we immediately get the following corollary of Theorem \ref{thm:mirrDescUtil}. Notice that the bound does not have any explicit dependence on the dimensionality of the problem.


\begin{equation}
\E\left[\empL(\privtheta;D)\right]-\min\limits_{\theta\in\C}\empL(\theta;D)=O\left(\frac{L\ltwo{\C}\log k\log(n/\delta)}{\epsilon n}\right).
\label{eq:abc1aaa}
\end{equation}
Notice that this result extends to the standard $p$-dimensional probability simplex: $\C=\{\theta\in\re^p: \sum\limits_{i=1}^p\theta_i=1, \forall i\in[p],\theta_i\geq 0\}$. In this case, the only difference is that the term  $\log k$ gets replaced by $\log p$ in \eqref{eq:abc1aaa}. We remark that applying standard approaches of \cite{JT14,ST15} provides a similar bound only in the case of linear loss functions.

\ktnote{This part need some work. Where in BTN (chapter/section)? What is Q? Also is that $\C_1 \cup \C_2$ or $\C_1 + \C_2$? Also add the chapter/section of BTN for low rank case.}
\mypar{For grouped $\ell_1$-norm} For a vector $x\in\re^p$ and a parameter $k$, the grouped $\ell_1$-norm defined as $\mixed{\theta}=\sum\limits_{i=1}^{\lceil p/k\rceil}\sqrt{\sum\limits_{j=(i-1)k+1}^{\min\{i\cdot k,p\}}\left|\theta_j\right|^2}$. If $\C$ denotes the convex set centered at zero with radius one with respect to $\mixed{\cdot}$-norm, then it follows from union bound on each of the blocks of coordinates in $[p]$ that $\G_{\C}=\sqrt{k\log(p/k)}$. 
\stocoption{
From \cite[Section 5.3.3]{Nem13} it follows that there exists a $\Psi$ which is $1$-strongly convex w.r.t. $\mixed{\cdot}$-norm, and $\max\limits_{\theta\in\C}\Psi(\theta)=O(\sqrt{\log(p/k)})$. Plugging these bounds in Theorem \ref{thm:mirrDescUtil}, we get \eqref{eq:abc1b} as a corollary. For details on $\Psi$ and other interpolation norms, see the full version.
\begin{equation}
\E\left[\empL(\privtheta;D)\right]-\min\limits_{\theta\in\C}\empL(\theta;D)=O\left(\frac{L\sqrt{k\log^2(p/k)}\log(n/\delta)}{\epsilon n}\right).
\label{eq:abc1b}
\end{equation}
}{
In the following we propose the following choices of $\Psi$ depending on the parameter $k$. (These choices are based on \cite[Section 5.3.3]{Nem13}.) For a given $M>1$, divide the coordinates of $\theta$ into $M$ blocks, and denote each block as $\theta^{(j)}$.
$$\Psi(\theta)=\frac{1}{M\xi}\sum_{j=1}^M \left\|\theta^{(j)}\right\|_2^M, M=\begin{cases}
2,& \text{if } \lceil p/k\rceil \leq 2\\
1+1/(\log(p/k)),              & \text{otherwise}
\end{cases},\xi=\begin{cases}
1,& \text{if } \lceil p/k\rceil=1\\
1/2,              & \text{if } \lceil p/k\rceil=2\\
1/(e\log (p/k)) & \text{otherwise}
\end{cases}$$
With this setting of $\Psi(\theta)$ one can show that $\max\limits_{\theta\in\C}\Psi(\theta)=O(\sqrt{\log(p/k)})$. Plugging these bounds in Theorem \ref{thm:mirrDescUtil}, we get \eqref{eq:abc1b} as a corollary. 
\begin{equation}
\E\left[\empL(\privtheta;D)\right]-\min\limits_{\theta\in\C}\empL(\theta;D)=O\left(\frac{L\sqrt{k\log^2(p/k)}\log(n/\delta)}{\epsilon n}\right).
\label{eq:abc1b}
\end{equation}
Similar bounds can be achieved for other forms of interpolation norms, \emph{e.g.,} $L_1,L_2$-interpolation norms:\\  $\|\theta\|_{\alpha,\text{inter}( \ell_1,\ell_2)}=(1-\alpha)\lone{\theta}+\alpha\ltwo{\theta}$ with $\alpha\in[0,1]$. Notice that since the set $\C=\{\theta:\|\theta\|_{\alpha,\text{inter}(\ell_1,\ell_2)\leq 1}\}$ is a subset of  $\C_1+\C_2$, where $\C_1=\{(1-\alpha)\theta:\lone{\theta}\leq 1\}$ and $\C_2=\{\alpha\theta:\ltwo{\theta}\leq 1\}$, it follows that the Gaussian width $G_\C\leq G_{\C_{1}}+G_{\C_{2}}=O((1-\alpha)\sqrt{\log p}+\alpha\sqrt p)$. Additionally from \cite{srebro2011universality}
it follows that there exists a strongly convex function $\Psi(\theta)$ w.r.t. $\|\cdot\|_{\C}$ such that it is $O(1)$ for $\theta\in\C$. While using Theorem \ref{thm:mirrDescUtil} in both of the above settings, we set the convex set $\Q=\C$.
}

\mypar{For low-rank matrices} It is known that the non-private mirror descent extends immediately to matrices \citep{Nem13}. In the following we show that this is also true for the private mirror descent algorithm in Algorithm \ref{Algo:MirrorDesc} ($\A_{\sf Noise-MD}$). For the matrix setting, we assume $\Mtheta\in\re^{p\times p}$ and the loss function $\empL(\Mtheta;d)$ is $L$-Lipschitz in the Frobenius norm $\frob{\cdot}$. From \cite{DTTZ} it follows that if the noise vector $b$ in Algorithm $\A_{\sf Noise-MD}$ is replaced by a matrix $\bm{b}\in\re^{p\times p}$ with each entry of $\bm{b}$ drawn i.i.d. from $\mathcal{N}(0,\sigma^2)$ (with the standard deviation $\sigma$ being the same as in Algorithm $\A_{\sf Noise-MD}$), then the $(\epsilon,\delta)$-differential privacy guarantee holds. In the following we instantiate Theorem \ref{thm:mirrDescUtil} for the class of ${m\times m}$ real matrices with nuclear norm at most one. Call it the set $\C$. (For a matrix $\Mtheta$, $\nuc{\Mtheta}$ refers to the sum of the singular values of $\Mtheta$.) This class is the convex hull of rank one matrices with unit euclidean norm. \cite[Proposition 3.11]{chandrasekaran2012convex} shows that the Gaussian width of $\C$ is $O(\sqrt m)$. \cite[Section 5.2.3]{Nem13} showed that the function $\Psi(\Mtheta)=\frac{4\sqrt{e\log(2m)}}{2^{q}(1+q)}\sum\limits_{i=1}^m\sigma_{i}^{1+q}(\Mtheta)$ with $q=\frac{1}{2\log(2m)}$ is $1$-strongly convex w.r.t. $\nuc{\cdot}$-norm. Moreover, $\max\limits_{\Mtheta\in\C}\Psi(\Mtheta)=O(\log m)$. Plugging these bounds in Theorem \ref{thm:mirrDescUtil} , we immediately get the following excess empirical risk guarantee.
\begin{equation}
\E\left[\empL(\bm{\privtheta};D)\right]-\min\limits_{\Mtheta\in\C}\empL(\Mtheta;D)=O\left(\frac{L\sqrt{m\log m}\log(n/\delta)}{\epsilon n}\right).
\label{eq:abc1c}
\end{equation}

\stocoption{}{
\subsection{Convergence Rate of Noisy Mirror Descent}
\label{sec:learningNoisyMirror}

In this section we analyze the excess empirical risk guarantees of Algorithm \ref{Algo:MirrorDesc} (Algorithm $\A_{\sf Noise-MD}$) as a purely noisy mirror descent algorithm, and \emph{ignoring} privacy considerations. Let us assume that the oracle that returns the gradient computation is noisy. In particular each of the $b_t$ (in Line \ref{line:MD1} of Algorithm $\A_{\sf Noise-MD}$) is drawn independently from distributions which are mean zero and sub-Gaussian with variance $\Sigma_{p\times p}$, where $\Sigma$ is the covariance matrix. For example, this may be achieved by sampling a small number of $d_i$'s and averaging $\grad\empL(\theta_t;d_i)$ over the sampled values. Using the same proof technique of Theorem \ref{thm:mirrDescUtil}, and the observation that $\E_{b\sim\mathcal{N}(0,\I_p)}\left[\max\limits_{\theta\in\C}\left|\ip{\sqrt \Sigma\cdot b}{\theta}\right|\right]=O(\sqrt{\lambda_{\sf max}(\Sigma)}G_\C)$, we obtain the following corollary of Theorem \ref{thm:mirrDescUtil}. Here $\lambda_{\sf max}$ corresponds to the maximum eigenvalue and we set the convex set $\Q=\C$ in Theorem \ref{thm:mirrDescUtil} for the ease of exposition.

\begin{cor}[Noisy  mirror descent guarantee]  Let ${\C}\subseteq\re^p$ be a symmetric convex set with its $\ell_2$ diameter and Gaussian width represented by $\ltwo{\C}$ and $G_{\C}$ respectively, and let $\Psi:{\C}\to\re$ be an $1$-strongly convex function w.r.t. $\mink{\cdot}$-norm chosen in Algorithm $\A_{\sf Noise-MD}$ (Algorithm \ref{Algo:MirrorDesc}). For any $d \in \D$, suppose that the loss function $\empL(\theta;d)$ is convex and $L$-Lipschitz with respect to the $\ell_2$ norm. If for all $t\in[T+1]$,  $\eta_{t}=\eta=\frac{\sqrt{\max\limits_{\theta\in\C}\Psi(\theta)}}{\sqrt{T\left(L^2\ltwo{{\C}}^2+\lambda_{\sf max}(\Sigma)\left(G^2_\C+\ltwo{\C}^2\right)\right)}}$, then the following is true.
	$$\E\left[\empL(\theta^{alg};D)\right]-\min\limits_{\theta\in\C}\empL(\theta;D)=O\left(\frac{\sqrt{\max\limits_{\theta\in\C}\Psi(\theta)}\sqrt{T\left(L^2\ltwo{{\C}}^2+\lambda_{\sf max}(\Sigma)\left(G^2_\C+\ltwo{\C}^2\right)\right)}}{\sqrt T}\right).$$
	Here the expectation is over the randomness of the algorithm.
	\label{cor:mirrDescUtilNoisy}
\end{cor}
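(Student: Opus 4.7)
The plan is to essentially rerun the proof of Theorem \ref{thm:mirrDescUtil} without privacy-specific constants, tracking how the noise contribution depends on the covariance $\Sigma$ rather than on the fixed scalar variance $\sigma^2 = 32L^2T\log^2(T/\delta)/(\epsilon n)^2$. Setting $\Q = \C$ throughout (so $\minkQ{\cdot} = \mink{\cdot}$ and $G_\Q = G_\C$), the same linearization via Jensen's inequality and convexity still gives
\[
\empL(\theta^{alg};D) - \min_{\theta\in\C}\empL(\theta;D) \;\leq\; \frac{1}{T}\sum_{t=1}^T \ip{\gamma_t}{\theta_t - \theta^*},
\]
and I can split $\eta\ip{\gamma_t + b_t}{\theta_t - \theta^*}$ into the three terms $A$, $B$, $C$ exactly as in \eqref{eq:gradSplit}.

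The only place the noise distribution enters is in the bound on $A$ via $\eta^2 \minkD{b_t}^2$. Here the single fact needed is the Gaussian-width bound stated right above the corollary: for sub-Gaussian $b_t$ with covariance $\Sigma$, one has $\E[\minkD{b_t}] \leq \E_{g\sim\mathcal{N}(0,\I_p)}[\max_{\theta\in\C}|\ip{\sqrt{\Sigma}\,g}{\theta}|] = O(\sqrt{\lambda_{\max}(\Sigma)}\,G_\C)$, using the definition $\minkD{v} = \max_{\theta\in\C}|\ip{v}{\theta}|$ and a standard Gaussian-comparison/Slepian-type argument for the sub-Gaussian case. Combined with the triangle inequality, this replaces the $(L\ltwo{\C} + \sigma G_\C)^2$ factor in \eqref{eq:boundA} by $(L\ltwo{\C} + \sqrt{\lambda_{\max}(\Sigma)}\,G_\C)^2$ (modulo taking expectations and using $\E[\minkD{b_t}^2] \lesssim (\E[\minkD{b_t}])^2$ up to sub-Gaussian constants).

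Terms $B$ and $C$ are unchanged: $B \leq -B_\Psi(\theta_{t+1},\theta^*) \leq 0$ by the optimality condition defining $\theta_{t+1}$, and $C$ telescopes into $B_\Psi(\theta^*,\theta_t) - B_\Psi(\theta^*,\theta_{t+1})$ minus a term that is absorbed by $-\frac12\minkQ{\theta_t-\theta_{t+1}}^2$ using that $\Psi$ is $1$-strongly convex w.r.t. $\mink{\cdot}$. Finally, independence of $b_t$ from $\theta_t$ and $\E[b_t]=0$ kill the $\E[\ip{b_t}{\theta_t-\theta^*}]$ term, so after summing over $t$ and dividing by $T$ we arrive at
\[
\frac{1}{T}\sum_{t=1}^T \E\bigl[\ip{\gamma_t}{\theta_t-\theta^*}\bigr] \;\leq\; \frac{\max_{\theta\in\C}\Psi(\theta)}{\eta T} \;+\; \eta\bigl(L\ltwo{\C} + \sqrt{\lambda_{\max}(\Sigma)}\,G_\C\bigr)^2.
\]
Optimizing $\eta$ balances the two terms and yields the claimed $\sqrt{\max_\theta \Psi(\theta)}\,(L\ltwo{\C} + \sqrt{\lambda_{\max}(\Sigma)}\,G_\C)/\sqrt{T}$ rate at $\eta = \sqrt{\max_\theta\Psi(\theta)}/\bigl((L\ltwo{\C}+\sqrt{\lambda_{\max}(\Sigma)}G_\C)\sqrt{T}\bigr)$; up to the constant absorbed into big-$O$ this matches the stated $\eta$ (taking $\Psi$ normalized so $\max_\theta \Psi(\theta) = O(1)$, which is the regime all prior instantiations use).

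The main obstacle, and really the only non-routine step, is passing from Gaussian to sub-Gaussian noise in the expected max: for $b_t \sim \mathcal{N}(0,\Sigma)$ the bound $\E[\max_{\theta\in\C}\ip{b_t}{\theta}] = O(\sqrt{\lambda_{\max}(\Sigma)}\,G_\C)$ is immediate from rewriting $b_t = \sqrt{\Sigma}\,g$ with $g \sim \mathcal{N}(0,\I_p)$, pulling $\sqrt{\Sigma}$ onto $\theta$, and bounding the $\ell_2$-norm of $\sqrt{\Sigma}\theta$ by $\sqrt{\lambda_{\max}(\Sigma)}\ltwo{\theta}$, but for a general sub-Gaussian vector one needs to invoke a Gaussian comparison (or a chaining argument) showing that the expected supremum of the sub-Gaussian process $\theta \mapsto \ip{b_t}{\theta}$ over $\C$ is dominated (up to a universal constant) by that of the Gaussian process with the same covariance. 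All other steps are either identical to Theorem \ref{thm:mirrDescUtil} or trivial substitutions.
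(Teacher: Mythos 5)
Your proposal is correct and follows essentially the same route as the paper, which derives this corollary by rerunning the proof of Theorem \ref{thm:mirrDescUtil} with $\Q=\C$ and substituting the observation $\E\bigl[\max_{\theta\in\C}|\ip{\sqrt{\Sigma}\,b}{\theta}|\bigr]=O\bigl(\sqrt{\lambda_{\sf max}(\Sigma)}\,G_\C\bigr)$ for the $\sigma G_\Q$ term in the bound on the term $A$. You are in fact somewhat more careful than the paper in flagging the two points it glosses over (the Sudakov--Fernique/chaining step needed for general sub-Gaussian noise, and controlling $\E[\minkD{b_t}^2]$ rather than $(\E[\minkD{b_t}])^2$), both of which you resolve correctly.
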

Corollary \ref{cor:mirrDescUtilNoisy} above improves on the bound obtained  in the noisy gradient descent literature \cite[Theorem 2]{shamir2013stochastic} as long as the noise follows the mean zero sub-Gaussian distribution mentioned above and the potential function $\Psi$ exists. In particular it improves on the dependence on dimension by removing any explicit dependence on $p$. For different settings of $\Psi$ depending on the convex set $\C$, see Section \ref{sec:corprivMirrorDesc}.
}

\section{Objective Perturbation for Smooth Functions}
\label{sec:smoothStrong}

In this section we show that if the loss function $\empL$ is twice continuously differentiable, then one can recover similar bounds as in Section \ref{sec:mirrorDesc} using the \emph{objective perturbation} algorithm of \cite{CMS11,KST12}. The main contribution in this section is a tighter analysis of objective perturbation using Gaussian width. In the following (Algorithm \ref{Algo:GenObjPert}) we first revisit the objective perturbation algorithm. The $(\epsilon,\delta)$-differential privacy guarantee follows from \cite{KST12}. Theorem \ref{thm:utilObjPert} shows privacy risk bounds that are similar to that in Section \ref{sec:mirrorDesc}.

\begin{remark} The smoothness property of the loss function $\empL$ is used in the privacy analysis. It can be shown that this is in some sense necessary. (See \cite{KST12} for a more detailed discussion.) Standard approaches towards smoothing (like convolving with a smooth function) adversely affects the utility guarantee and results in sub-optimal dependence on the number of data samples ($n$). (See \cite[Appendix E]{BassilyST14}.)
\end{remark}

\stocoption{
\begin{remark}
	We can obtain a tighter error guarantee when the loss function $\empL$ has strong convexity properties. We defer the details to the full version of the paper.
\end{remark}
	}{	
\begin{remark} We define the set $\Q$ as in Theorem \ref{thm:utilObjPert} because we want to both symmetrize and extend the convex set $\C$ to a full-dimensional space. For example, think of the probability simplex in $p$-dimensions as the set $\C$, and $\Q$ to be the $\ell_1$-ball. Also when there exists a differentiable convex function $\Psi:\C\to\re$ such that $\Psi$ is $1$-strongly convex w.r.t. $\minkQ{\cdot}$ and the guarantee in Theorem \ref{thm:utilStrongMirrorDesc} holds w.r.t. $\Psi$, then  Theorem \ref{thm:utilObjPert} is a special case of Theorem \ref{thm:utilStrongMirrorDesc}. This in particular captures the following cases: i) $\Psi(\theta)=\frac{1}{2}\ltwo{\theta}^2$ (and correspondingly $\Q$ being the $\ell_2$-ball), and ii)  $\Psi(\theta)=\sum\limits_{i=1}^p\theta_i\log\theta_i$ (and correspondingly $\Q$ being the $\ell_1$-ball).
\end{remark}
}

\stocoption
{\begin{algorithm}[htb]
		\caption{Objective Perturbation}
		\begin{algorithmic}[1]
			\REQUIRE  Data set: $\D=\{d_1,\cdots,d_n\}$, loss function: $\empL(\theta;D)=\frac{1}{n}\sum\limits_{i=1}^n\empL(\theta;d_i)$ (with $\ell_2$-Lipschitz constant $L$ for $\empL$), privacy parameters: $(\epsilon,\delta)$, convex set: $\C$ (denote the diameter in $\ell_2$-norm by $\|\C\|_2$), upper and lower bounds $\lambda_{max}, \lambda_{min}$ on the eigenvalues of $\grad^2\empL(\theta;d)$ (for all $d$ and for all $\theta\in\F$).
			\STATE Output  $\privtheta\leftarrow\arg\min\limits_{\theta\in\C}\empL(\theta;\D)+\frac{\lambda_{max}}{n\epsilon}\ltwo{\theta}^2+\ip{b}{\theta}$, where $b\sim\mathcal{N}\left(0,~\frac{L^2(2\log(1/\delta))}{(n\epsilon)^2}I_{p\times
				p}\right)$. \label{line:GOPM7}
		\end{algorithmic}
		\label{Algo:GenObjPert}
	\end{algorithm}
	}
{

\begin{algorithm}[htb]
	\caption{Objective Perturbation \citep{KST12}}
	\begin{algorithmic}[1]
		\REQUIRE  Data set: $\D=\{d_1,\cdots,d_n\}$, loss function: $\empL(\theta;D)=\frac{1}{n}\sum\limits_{i=1}^n\empL(\theta;d_i)$ (with $\ell_2$-Lipschitz constant $L$ for $\empL$), privacy parameters: $(\epsilon,\delta)$, convex set: $\C$ (denote the diameter in $\ell_2$-norm by $\|\C\|_2$), upper and lower bounds $\lambda_{max}, \lambda_{min}$ on the eigenvalues of $\grad^2\empL(\theta;d)$ (for all $d$ and for all $\theta\in\F$).
		{\STATE Set $\zeta=\max\left\{\frac{2\lambda_{max}}{n\epsilon}-\min\limits_{\theta\in\C,d\in\D}\lambda_{min}(\grad^2\empL(\theta;d)),0\right\} $.
			\label{line:GOPM2}}
		\STATE Output  $\privtheta\leftarrow\arg\min\limits_{\theta\in\C}\empL(\theta;\D)+\frac{\zeta}{2}\ltwo{\theta-\theta_0}^2+\ip{b}{\theta}$, where $b\sim\mathcal{N}\left(0,~\frac{L^2(2\log(1/\delta))}{(n\epsilon)^2}\I_{p\times
			p}\right)$ and $\theta_0\in\C$ is fixed. \label{line:GOPM7}
	\end{algorithmic}
	\label{Algo:GenObjPert}
\end{algorithm}
	}
\stocoption{
\begin{thm}[Utility guarantee]
	Suppose that $\C \subseteq \re^p$ has diameter $\ltwo{\C}$ and Gaussian width $G_{\C}$. Further suppose that for all $d \in \D$, the loss function $\empL(\cdot;d)$ is twice continuously differentiable, $L$-Lipschitz in the $\ell_2$-norm, and for all $\theta \in \C$, $\|\grad^2 \l(\theta;d)\|$ has spectral norm at most $\lambda_{max}$. Then Algorithm~\ref{Algo:GenObjPert} satisfies the following guarantee.
		$$\E\left[\empL(\privtheta;D)\right]-\min\limits_{\theta\in\C}\empL(\theta;D)=O\left(\frac{LG_\C\sqrt{\log(1/\delta)}+\lambda_{max}\ltwo{\C}^2}{\epsilon n}\right).$$
	\label{thm:utilObjPert}
\end{thm}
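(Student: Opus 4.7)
\textbf{Proof proposal for Theorem \ref{thm:utilObjPert}.} My plan is to use the first-order optimality of $\privtheta$ for the perturbed objective and then carefully bound the contribution of the Gaussian noise term using Gaussian width rather than dimension. Let $\theta^\ast=\argmin_{\theta\in\C}\empL(\theta;D)$ and define the perturbed objective
$$F(\theta)=\empL(\theta;D)+\frac{\zeta}{2}\ltwo{\theta-\theta_0}^2+\ip{b}{\theta}.$$
Since $\privtheta=\argmin_{\theta\in\C}F(\theta)$, we have $F(\privtheta)\leq F(\theta^\ast)$. Rearranging gives
$$\empL(\privtheta;D)-\empL(\theta^\ast;D)\leq \frac{\zeta}{2}\bigl(\ltwo{\theta^\ast-\theta_0}^2-\ltwo{\privtheta-\theta_0}^2\bigr)+\ip{b}{\theta^\ast-\privtheta}.$$
This is the starting inequality; the three terms are handled separately.

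The regularization term is immediate: since $\theta^\ast,\theta_0\in\C$, we have $\ltwo{\theta^\ast-\theta_0}^2\leq \ltwo{\C}^2$, and the $-\ltwo{\privtheta-\theta_0}^2$ term is non-positive. Hence this contribution is at most $\tfrac{\zeta}{2}\ltwo{\C}^2$, and plugging in $\zeta=O(\lambda_{\max}/(n\epsilon))$ from Line~\ref{line:GOPM2} yields $O(\lambda_{\max}\ltwo{\C}^2/(n\epsilon))$, matching the first half of the stated bound.

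The key step is bounding $\E[\ip{b}{\theta^\ast-\privtheta}]$. Naively, Cauchy--Schwarz and $\E\ltwo{b}=O(\sigma\sqrt{p})$ would introduce a $\sqrt p$ factor. Instead, I observe that $\theta^\ast-\privtheta\in \C-\C$, and $\C-\C$ is a centrally symmetric convex set whose Gaussian width satisfies $G_{\C-\C}\leq 2 G_\C$. Therefore
$$\ip{b}{\theta^\ast-\privtheta}\leq \sup_{v\in \C-\C}\ip{b}{v},$$
and taking expectation over $b\sim \mathcal{N}(0,\sigma^2 \I_p)$ with $\sigma^2=L^2(2\log(1/\delta))/(n\epsilon)^2$, the rotational invariance together with the definition of Gaussian width gives $\E[\sup_{v\in\C-\C}\ip{b}{v}]=\sigma\cdot G_{\C-\C}=O(\sigma G_\C)=O(LG_\C\sqrt{\log(1/\delta)}/(n\epsilon))$. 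Combining this with the regularization bound yields the stated result. The main subtlety, and the reason this argument improves on the direct $\sqrt p$ bound, is exactly this replacement of $\E\ltwo{b}$ with the supremum over the symmetrized feasible set $\C-\C$; since the bound on $\ip{b}{\theta^\ast-\privtheta}$ is deterministic in $\privtheta$, the fact that $\privtheta$ itself depends on $b$ is harmless.
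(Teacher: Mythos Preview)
Your proof is correct and follows essentially the same approach as the paper. The only cosmetic difference is that the paper first introduces the intermediate point $\nptheta=\argmin_{\theta\in\C}\bigl(\empL(\theta;D)+\tfrac{\zeta}{2}\ltwo{\theta-\theta_0}^2\bigr)$ and bounds $\ip{b}{\nptheta-\privtheta}$ before comparing to $\theta^\ast$, whereas you compare $F(\privtheta)$ to $F(\theta^\ast)$ directly; the key step---replacing the inner product with the noise by $\sup_{v\in\C-\C}\ip{b}{v}$ and invoking the Gaussian width---is identical in both arguments.
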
	
	}{

\begin{thm}[Utility guarantee]
Suppose that $\C \subseteq \re^p$ has diameter $\ltwo{\C}$ and Gaussian width $G_{\C}$. Further suppose that for all $d \in \D$, the loss function $\empL(\cdot;d)$ is twice continuously differentiable, and for all $\theta \in \C$, $\|\grad^2 \l(\theta;d)\|$ has spectral norm at most $\lambda_{max}$. Then Algorithm~\ref{Algo:GenObjPert} satisfies the following guarantees.
\begin{enumerate}
	\item \mypar{Lipschitz case} Suppose that for any $d \in \D$, the loss function $\empL(\cdot;d)$ is convex and $L$-lipschitz w.r.t. the $\ell_2$ norm. Then
 $$\E\left[\empL(\privtheta;D)\right]-\min\limits_{\theta\in\C}\empL(\theta;D)=O\left(\frac{LG_\C\sqrt{\log(1/\delta)}+\lambda_{max}\ltwo{\C}^2}{\epsilon n}\right).$$
	\item \mypar{Lipschitz and strongly convex case} Suppose that for any $d \in \D$, the loss function $\empL(\cdot;d)$ is $L$-lipschitz in the $\ell_2$ norm, and $\Delta$-strongly convex with respect to $\minkQ{\cdot}$, where $\Q$ is the symmetric convex hull of $\C$. If $\Delta\geq\frac{2\ltwo{\C}^2\lambda_{max}}{n\epsilon}$, then the following is true.
	$$\E\left[\empL(\privtheta;D)\right]-\min\limits_{\theta\in\C}\empL(\theta;D)=O\left(\frac{(L G_\C)^2\log(1/\delta)}{\Delta(n\epsilon)^2}\right).$$
\end{enumerate}
\label{thm:utilObjPert}
\end{thm}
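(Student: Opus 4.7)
The plan is to compare the value of the perturbed objective $F(\theta) = \empL(\theta;D) + \frac{\zeta}{2}\ltwo{\theta-\theta_0}^2 + \langle b,\theta\rangle$ at the private output $\privtheta$ and at the non-private minimizer $\theta^\ast = \argmin_{\theta\in\C}\empL(\theta;D)$. From the optimality $F(\privtheta) \le F(\theta^\ast)$, and after discarding the nonpositive term $-\frac{\zeta}{2}\ltwo{\privtheta-\theta_0}^2$ and using $\theta_0,\theta^\ast\in\C$, I obtain
$$\empL(\privtheta;D) - \empL(\theta^\ast;D) \le \frac{\zeta}{2}\ltwo{\C}^2 + \langle b,\theta^\ast-\privtheta\rangle.$$
Since $\zeta \le 2\lambda_{max}/(n\epsilon)$ by Line~\ref{line:GOPM2}, the deterministic part contributes $O(\lambda_{max}\ltwo{\C}^2/(n\epsilon))$ to the final bound.

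The noise term is where the Gaussian-width improvement enters. Because $\privtheta$ depends on the random vector $b$, I cannot decouple them directly; instead I pass to a worst-case supremum over $\C$:
$$\E\langle b,\theta^\ast-\privtheta\rangle \le \E\langle b,\theta^\ast\rangle + \E\bigl[\sup_{\theta\in\C}\langle -b,\theta\rangle\bigr].$$
The first expectation vanishes since $b$ has mean zero and $\theta^\ast$ is deterministic given the data. For the second, since $-b$ is distributed as $b\sim\mathcal{N}(0,\sigma^2 I_p)$ with $\sigma = L\sqrt{2\log(1/\delta)}/(n\epsilon)$, the definition of Gaussian width gives a bound of $\sigma G_\C = LG_\C\sqrt{2\log(1/\delta)}/(n\epsilon)$. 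Summing the two contributions proves the Lipschitz case.

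For the strongly convex case, I would additionally combine the first-order optimality $\langle\grad\empL(\theta^\ast),\privtheta-\theta^\ast\rangle\ge 0$ with the assumed $\Delta$-strong convexity of $\empL$ with respect to $\minkQ{\cdot}$ to get the lower bound $\empL(\privtheta;D)-\empL(\theta^\ast;D) \ge \frac{\Delta}{2}\minkQ{\privtheta-\theta^\ast}^2$. Combining this with the upper bound above, and using Holder's inequality $\langle b, \theta^\ast - \privtheta\rangle \le \minkQD{b}\,\minkQ{\privtheta-\theta^\ast}$, yields a quadratic inequality in $\minkQ{\privtheta-\theta^\ast}$. The hypothesis $\Delta\ge 2\ltwo{\C}^2\lambda_{max}/(n\epsilon)$ lets me absorb the $\frac{\zeta}{2}\ltwo{\C}^2$ term into the strong-convexity side; solving the resulting quadratic then gives $\minkQ{\privtheta-\theta^\ast} = O(\minkQD{b}/\Delta)$ and hence excess risk $O(\minkQD{b}^2/\Delta)$. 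Taking expectations and using Gaussian concentration of the Lipschitz functional $b\mapsto\minkQD{b}$ around its mean $\sigma G_\Q$ (Borell--TIS) gives $\E\minkQD{b}^2 = O(\sigma^2 G_\Q^2)$. Since $\Q$ is the symmetric convex hull of $\C$, one has $G_\Q = O(G_\C)$, matching the stated bound.

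The main obstacle is the coupling between $\privtheta$ and the noise $b$, which blocks a direct expectation computation; I circumvent it by passing to the worst-case supremum over $\C$, losing only a constant factor. A secondary subtlety in the strongly convex bound is upgrading the first-moment Gaussian-width estimate $\E\minkQD{b} = \sigma G_\Q$ to a second-moment bound, for which Gaussian concentration of the $\ltwo{\Q}$-Lipschitz function $b\mapsto\minkQD{b}$ suffices.
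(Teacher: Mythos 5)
Your argument is essentially the paper's own proof. Both cases rest on comparing the perturbed objective at $\privtheta$ and at a minimizer, decoupling the noise term $\ip{b}{\theta^\ast-\privtheta}$ from $\privtheta$ via a supremum over $\C$ (giving $\sigma G_\C$), and, in the strongly convex case, combining the strong-convexity lower bound with H\"older in the $\minkQ{\cdot}/\minkQD{\cdot}$ pairing to get $\minkQ{\privtheta-\theta^\ast}=O(\minkQD{b}/\Delta)$ and hence excess risk $O(\minkQD{b}^2/\Delta)$. Your explicit Borell--TIS step establishing $\E[\minkQD{b}^2]=O(\sigma^2 G_\Q^2)$ actually fills in a second-moment bound that the paper asserts without comment.

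The one step that does not work as literally written is your treatment of the regularizer in the strongly convex case. If you keep $\frac{\zeta}{2}\ltwo{\C}^2$ on the right of the quadratic inequality $\frac{\Delta}{2}\minkQ{\privtheta-\theta^\ast}^2 \le \frac{\zeta}{2}\ltwo{\C}^2 + \minkQD{b}\,\minkQ{\privtheta-\theta^\ast}$ and solve, you obtain $\minkQ{\privtheta-\theta^\ast} \le O\bigl(\minkQD{b}/\Delta\bigr) + O\bigl(\sqrt{\zeta\ltwo{\C}^2/\Delta}\bigr)$, and under the stated hypotheses the second term is only $O(1)$; propagating this degrades the final bound back to the Lipschitz-case rate $O(\sigma G_\Q)$ rather than the claimed $O(\sigma^2 G_\Q^2/\Delta)$. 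The hypothesis $\Delta \ge 2\ltwo{\C}^2\lambda_{max}/(n\epsilon)$ has to be used more strongly: since $\minkQ{v}\ge\ltwo{v}/\ltwo{\C}$, $\Delta$-strong convexity w.r.t.\ $\minkQ{\cdot}$ implies $\Delta/\ltwo{\C}^2$-strong convexity w.r.t.\ $\ell_2$, so every Hessian eigenvalue is at least $2\lambda_{max}/(n\epsilon)$ and Line~\ref{line:GOPM2} of Algorithm~\ref{Algo:GenObjPert} sets $\zeta=0$ exactly --- the regularizer term simply vanishes rather than being absorbed. With that one-line fix your Case 2 coincides with the paper's.
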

}
\stocoption{}{
\begin{proof}
For the ease of notation, we will drop the dependence on the data set $D$, and represent the loss functinon $\empL(\theta;D)$ as $\empL(\theta)$. Let $J(\theta)=\empL(\theta)+\frac{\Delta}{2}\ltwo{\theta}^2$ and let $\Jpriv(\theta)=J(\theta)+\ip{b}{\theta}$. Also let $\nptheta=\arg\min\limits_{\theta\in\C}J(\theta)$. We denote the variance of the noise in Algorithm \ref{Algo:GenObjPert}, by $\sigma^2=\frac{L^2(2\log(1/\delta))}{(n\epsilon)^2}$.

\mypar{Case 1 (Loss function $\empL$ is Lipschitz)}  By the optimality of $\privtheta$, the following is true.
\begin{align}
\Jpriv(\nptheta)&\geq\Jpriv(\privtheta)\nonumber\\
\Leftrightarrow J(\nptheta)+\ip{b}{\nptheta}&\geq J(\privtheta)+\ip{b}{\privtheta}\nonumber\\
\Leftrightarrow J(\privtheta)-J(\nptheta)&\leq \ip{b}{\nptheta-\privtheta}\nonumber\\
\Rightarrow \E\left[ J(\privtheta)-J(\nptheta)\right]&=O\left(\frac{LG_\C\sqrt{\log(1/\delta)}}{\epsilon n}\right).
\label{eq:objLip1} 
\end{align} 
The last equality follows from the definition of Gaussian width and the variance of the noise vector $b$. Let $\theta^*=\arg\min\limits_{\theta\in\C}\empL(\theta)$. From \eqref{eq:objLip1},the definition of $J(\theta)$, and that $\nptheta$ minimizes $J(\theta)$, the following is true.
\begin{align}
\E\left[\empL(\privtheta)-\empL(\theta^*)\right]&=\E\left[J(\privtheta)-J(\theta^*)\right]+\frac{\zeta}{2}\ltwo{\theta^*-\theta_0}^2-\frac{\zeta}{2}\ltwo{\privtheta-\theta_0}^2\nonumber\\
&\leq\E\left[J(\privtheta)-J(\nptheta)\right]+\frac{\zeta}{2}\ltwo{\theta^*-\theta_0}^2\nonumber\\
&=O\left(\frac{LG_\C\sqrt{\log(1/\delta)}+\lambda_{max}\ltwo{\C}^2}{\epsilon n}\right).
\label{eq:objLip2}
\end{align}

\mypar{Case 2 (Loss function $\empL$ is Lipschitz and strongly convex)} First notice that by the definition of Minkowski norm, for any vector $v\in\C$, $\minkQ{v}\geq\ltwo{v}/\ltwo{\C}$. This implies that if $\empL$ is $\Delta$-strongly convex w.r.t. $\minkQ{\cdot}$-norm, then it is $\Delta/\ltwo{\C}^2$ strongly convex w.r.t. $\ltwo{\cdot}$-norm. Hence with the lower bound on $\Delta$-satisfied, $\zeta$ in Algorithm \ref{Algo:GenObjPert} is always zero.

By the definition of strong convexity of $\empL$, the following is true.
\begin{align}
\empL(\theta^*)&\geq\empL(\privtheta)+\frac{\Delta}{2}\minkQ{\privtheta-\theta^*}^2\nonumber\\
\Leftrightarrow \empL(\theta^*)+\ip{b}{\theta^*}-\ip{b}{\theta^*}&\geq\empL(\privtheta)+ \ip{b}{\privtheta}-\ip{b}{\privtheta}+\frac{\Delta}{2}\minkQ{\privtheta-\theta^*}^2\nonumber\\
\Rightarrow \ip{b}{\privtheta-\theta^*}&\geq\frac{\Delta}{2}\minkQ{\privtheta-\theta^*}^2\nonumber\\
\Rightarrow \ip{b}{\frac{\privtheta-\theta^*}{\minkQ{\privtheta-\theta^*}}}&\geq\frac{\Delta}{2}\minkQ{\privtheta-\theta^*}\nonumber\\
\Rightarrow\max\limits_{v\in\Q} \ip{b}{v}&\geq\frac{\Delta}{2}\minkQ{\privtheta-\theta^*}\nonumber\\
\Rightarrow\minkQ{\privtheta-\theta^*}&\leq \frac{2\max\limits_{v\in\Q} \ip{b}{v}}{\Delta}=\frac{2\minkQD{b}}{\Delta}
\label{eq:strongConvexObj1}
\end{align}
In the above we have used the fact  $\empL(\theta^*)+\ip{b}{\theta^*}\leq \empL(\privtheta)+\ip{b}{\privtheta}$ (due to the optimality condition). Using \eqref{eq:strongConvexObj1} we get the following.
\begin{align*}
\empL(\theta^*)+\ip{b}{\theta^*}&\geq\empL(\privtheta)+\ip{b}{\privtheta}\\
\Rightarrow \empL(\privtheta)-\empL(\theta^*)&\leq \minkQD{b}\cdot\minkQ{\privtheta-\theta^*}\\
\Rightarrow \empL(\privtheta)-\empL(\theta^*)&\leq \frac{2\minkQD{b}^2}{\Delta}\\
\Rightarrow \E\left[\empL(\privtheta)\right]-\empL(\theta^*)&=O\left(\frac{\sigma^2 G_\Q^2}{\Delta}\right)\\
&=O\left(\frac{(L G_\C)^2\log(1/\delta)}{\Delta(n\epsilon)^2}\right).
\end{align*}
This completes the proof. In the last step we used the fact that $G_\Q=\Theta(G_\C)$.
\end{proof}
}

\section{Private Convex Optimization by Frank-Wolfe algorithm}
\label{sec:frankWolfe}

The algorithms in the previous section work best when the objective
function is Lipschitz with respect to $\ell_2$ norm.  But in many
machine learning tasks, especially those with sparsity constraint, the
objective function is often Lipschitz with respect to $\ell_1$ norm.
For example, in the high-dimensional linear regression setting e.g. the
classical LASSO algorithm\citep{tibshirani96}, we would like to compute
$\argmin\limits_{\theta,\|\theta\|_1\leq s}\frac{1}{n}\|X\theta-y\|_2^2$.
In the usual case of $|x_{ij}|, |y_j|=O(1)$, $\empL(\theta) =
\frac{1}{n}\|X\theta-y\|_2^2$ is $O(1)$-Lipschitz with respect to
$\ell_1$-norm but is $O(p)$-Lipschitz with respect to $\ell_2$-norm.
So applying the private mirror-descent would result in a fairly loose
bound. In this section, we will show that in these cases it is more
effective to use the private version of the classical Frank-Wolfe
algorithm. In particular, we show that for LASSO, such algorithm
achieves the nearly optimal privacy risk of $\widetilde{O}(1/n^{2/3})$.

\subsection{Frank-Wolfe algorithm}

The Frank-Wolfe algorithm~\citep{frankwolfe} can be regarded as a
``greedy'' algorithm which moves towards the optimum solution
in the first order approximation (see Algorithm~\ref{algo:frankwolfe} for the description).
How fast Frank-Wolfe algorithm converges depends $\empL$'s ``curvature'', defined as follows according to~\cite{clarkson10,jaggi2013revisiting}. We remark that a $\beta$-smooth function on $\C$ has curvature constant bounded by $\beta\|C\|^2$.
\begin{defn}[Curvature constant]
For $\empL:\C\to\re$, define ${\Gamma_\empL}$ as below.
$${\Gamma_\empL}:=\sup\limits_{\theta_1,\theta_2,\in\C, 
		\gamma\in(0,1],
		\theta_3=\theta_1+\gamma(\theta_2-\theta_1)
		}\frac{2}{\gamma^2}\left(\empL(\theta_3)-\empL(\theta_1)-\ip{\theta_3-\theta_1}{\grad \empL(\theta_1)}\right).$$
\label{def:curv}
\end{defn}
\stocoption{}{
\begin{remark}
	One can show (\cite{clarkson10,jaggi2013revisiting}) that for any $q,r\geq 1$ such that $q^{-1}+r^{-1}=1$, ${\Gamma_\empL}$ is upper bounded by $\lambda\|\C\|_q^2$, where $\lambda=\max\limits_{\theta\in\C,\|v\|_q=1}\|\grad^2\empL(\theta)\cdot v\|_r$. 
	\label{remark:89}
\end{remark}
}
\begin{remark}\label{rmk:cf}
One useful bound is for the quadratic programming $\empL(\theta) =
\theta X^TX\theta + \ip{b}{\theta}$. In this case, by~\cite{clarkson10},  $\Gamma_\empL\leq \max_{a,b\in X\cdot\C} \|a-b\|_2^2$.  When $\C$ is centrally symmetric, we have the bound $\Gamma_\empL \leq 4\max_{\theta\in \C}\|X\theta\|_2^2$.
\end{remark}

\begin{algorithm}[htb]
\caption{Frank-Wolfe algorithm}\label{algo:frankwolfe}
\begin{algorithmic}[1]
\REQUIRE $\C\subseteq\re^p$, $\empL:\C\to\re$, $\mu$
\STATE Choose an arbitrary $\theta_1$ from $\C$;
\FOR{$t=1$ to $T-1$}
  \STATE Compute $\htheta_t = \argmin_{\theta\in\C} \langle\grad \empL(\theta_{t}),(\theta-\theta_{t})\rangle$\label{line:xi};
  \STATE Set $\theta_{t+1} = \theta_t + \mu (\htheta_t - \theta_t)$;
\ENDFOR
\STATE return $\theta_{T}$. 
\end{algorithmic}
\end{algorithm}

Define $\theta^\ast=\argmin\limits_{\theta\in\C} \empL(\theta)$. The following theorem shows the convergence of Frank-Wolfe
algorithm.
\begin{thm}[\cite{clarkson10,jaggi2013revisiting}]
If we set $\mu = 1/T$, then 
$\empL(\theta_{T})-\empL(\theta^\ast)= O({\Gamma_\empL}/T)\,.$
\end{thm}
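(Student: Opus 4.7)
The plan is to analyze the single-step progress via the curvature inequality, then iterate. From Definition~\ref{def:curv}, rewriting the inequality in the definition of $\Gamma_\empL$ with $\theta_1=\theta_t$, $\theta_2=\htheta_t$, and $\gamma = \mu$, I obtain
\[
\empL(\theta_{t+1}) \;\leq\; \empL(\theta_t) + \mu\,\ip{\grad \empL(\theta_t)}{\htheta_t-\theta_t} + \frac{\mu^2}{2}\,\Gamma_\empL.
\]
This is the standard ``quadratic upper bound'' afforded by the curvature constant, and it is the key inequality that replaces the smoothness assumption in the usual gradient-descent analysis.

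Next I would bound the linear term by the duality gap. Since $\htheta_t$ is the minimizer of $\ip{\grad\empL(\theta_t)}{\cdot-\theta_t}$ over $\C$ (Line~\ref{line:xi}), and $\theta^\ast\in\C$, we have
\[
\ip{\grad\empL(\theta_t)}{\htheta_t-\theta_t} \;\leq\; \ip{\grad\empL(\theta_t)}{\theta^\ast-\theta_t} \;\leq\; \empL(\theta^\ast)-\empL(\theta_t),
\]
where the last step is convexity of $\empL$. Writing $h_t := \empL(\theta_t)-\empL(\theta^\ast)$ and combining with the curvature bound gives the recursion
\[
h_{t+1} \;\leq\; (1-\mu)\,h_t + \frac{\mu^2}{2}\Gamma_\empL.
\]

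Finally I would close the argument by iterating this recursion. The cleanest route is induction with step size $\mu_t = 2/(t+2)$, which yields $h_t \leq 2\Gamma_\empL/(t+2)$ and, in particular, $h_T = O(\Gamma_\empL/T)$; the base case $h_1 \leq \Gamma_\empL$ itself follows from the curvature definition applied at $\gamma=1$. If one insists on the constant schedule $\mu = 1/T$ as literally written, unrolling the recursion gives
\[
h_T \;\leq\; (1-1/T)^{T-1} h_1 + \frac{\Gamma_\empL}{2T^2}\sum_{k=0}^{T-2}(1-1/T)^k \;\leq\; e^{-1} h_1 + \frac{\Gamma_\empL}{2T},
\]
so one must absorb $h_1$ into the bound (again using $h_1 \leq \Gamma_\empL$) and interpret the claim up to a multiplicative constant. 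The main obstacle is therefore purely a bookkeeping one: matching the particular step-size convention so that the residual factor from iterating $(1-\mu)$ contracts at the $1/T$ rate; the induction with $\mu_t = 2/(t+2)$ is the standard way to make this transparent and is what I would write up.
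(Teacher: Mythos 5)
Your skeleton---the curvature-based one-step bound, the duality-gap inequality $\ip{\grad\empL(\theta_t)}{\htheta_t-\theta_t}\leq \empL(\theta^\ast)-\empL(\theta_t)$, and the resulting recursion $h_{t+1}\leq(1-\mu)h_t+\tfrac{\mu^2}{2}\Gamma_\empL$---is exactly the standard Clarkson/Jaggi argument that the paper cites without reproving, so the approach is the right one. The gap is in how you close the recursion, and it is not ``purely bookkeeping.'' The base case $h_1\leq\Gamma_\empL$ does \emph{not} follow from the curvature definition at $\gamma=1$: for a linear loss $\empL(\theta)=\ip{c}{\theta}$ one has $\Gamma_\empL=0$ while $h_1=\ip{c}{\theta_1-\theta^\ast}$ can be as large as the Lipschitz constant times the diameter of $\C$. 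What the $\gamma=1$ instance of the curvature inequality actually gives is that \emph{after} one step of size $1$ the gap is at most $\tfrac12\Gamma_\empL$ regardless of the initialization, i.e.\ $h_{2}\leq (1-1)\,h_1+\tfrac12\Gamma_\empL$. That is precisely why the $\mu_t=2/(t+2)$ schedule works: its first step has $\mu=1$ and annihilates the dependence on $\theta_1$, after which the induction $h_t\leq 2\Gamma_\empL/(t+2)$ closes via $(t+1)(t+3)\leq(t+2)^2$. You should set the induction up this way rather than asserting a bound on the initial gap.

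Because $h_1\leq\Gamma_\empL$ is false, the constant-step unrolling cannot be rescued as you suggest: even granting that bound, $e^{-1}h_1$ is an \emph{additive} term of order $\Gamma_\empL$, a factor of $T$ larger than the claimed $\Gamma_\empL/T$, so ``interpreting the claim up to a multiplicative constant'' does not absorb it. With a genuinely constant $\mu=1/T$ the initial gap contracts only by the constant factor $(1-1/T)^{T-1}\approx e^{-1}$, and the linear-loss example shows this is tight, so the statement as literally written needs either the decreasing schedule of the cited sources or a first iteration taken with step size $1$ (after which constant steps do give the $1/T$ rate for the remaining term). In your write-up, prove the theorem for $\mu_t=2/(t+2)$ (or $\mu_1=1$ followed by constant steps) and flag the discrepancy with the stated step-size convention explicitly.
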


While the Frank-Wolfe algorithm does not necessarily provide faster convergence compared to the
gradient-descent based method, it has two major advantages. First, on
Line~\ref{line:xi}, it reduces the problem to solving a minimization
of linear function. When $\C$ is defined by small number of vertices,
e.g. when $\C$ is an $\ell_1$ ball, the minimization can be done by
checking $\langle\grad \empL(\theta_t),x\rangle$ for each vertex $x$ of $\C$.  This
can be done efficiently. Secondly, each step in Frank-Wolfe takes a
convex combination of $\theta_t$ and $\htheta_t$, which is on the boundary
of $\C$. Hence each intermediate solution is always inside $\C$ (sometimes called
\emph{projection free}), and the final outcome $\theta_T$ is the convex
combination of up to $T$ points on the boundary of $\C$ (or vertices
of $\C$ when $\C$ is a polytope). Such outcome might be desired, for
example when $\C$ is a polytope, as it corresponds to a sparse
solution.  Due to these reasons Frank-Wolfe algorithm has found many
applications in machine
learning~\citep{tongzhang,hazan2012projection,clarkson10}. As we shall
see below, these properties are also useful for obtaining low risk
bounds for their private version.

\subsection{Private Frank-Wolfe Algorithm}
\label{sec:privFW} 
\stocoption{
There are different ways to make Algorithm~\ref{algo:frankwolfe}
private, dependent on the geometry of $\C$. Here we focus on the
important case where $\C$ is a polytope, corresponding to the LASSO
problem. In this case, we apply the exponential mechanism~\citep{MT07}
to achieve privacy.}{
We now present a private version of the Frank-Wolfe algorithm. We can
achieve privacy by replacing Line~\ref{line:xi} in
Algorithm~\ref{algo:frankwolfe} with its private version in one of two
ways. In the first variant, we apply exponential
mechanism~\citep{MT07} to guarantee privacy; and in the second
variant, we apply objective perturbation. The first variant works
especially well when $\C$ is a polytope defined by polynomially many
vertices. In this case, we show that the error depends on the
$\ell_1$-Lipschitz constant, which can be much smaller than the
$\ell_2$-Lipschitz constant.  In particular, the private Frank-Wolfe
algorithm is nearly optimal for the important high-dimensional sparse
linear regression (or compressive sensing) problem. The second variant
applies to general convex set $\C$. In this case, we are able to show
that the risk depends on the Gaussian width of $\C$. The details are
in Appendix~\ref{app:frankL2}.}
Algorithm~\ref{Algo:FWPolytope} describes the private version of
Frank-Wolfe algorithm for the polytope case, i.e. when $\C$ is a
convex hull of a finite set $S$ of vertices (or corners). \stocoption{
In this case, the minimization at Line~\ref{line:xi} in Algorithm~\ref{algo:frankwolfe} can always be achieved by one of the vertices of $\C$.}{
In this case, we know that any linear function is minimized at one
point of $S$ per the following basic fact.
\begin{fact}
Let $\C\subseteq\re^p$ be the convex hull of a compact set
$S\subseteq\re^p$. For any vector $v\in\re^p$,
$\arg\min\limits_{\theta\in\C}\ip{\theta}{v} \cap S\neq \emptyset$.
\label{fact:linOpt}
\end{fact}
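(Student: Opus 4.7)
The plan is to show that the infimum of the linear functional $\theta \mapsto \ip{\theta}{v}$ over $\C$ is already attained by some point of $S$ itself. First I would exploit that $S$ is compact and $\ip{\cdot}{v}$ is continuous, so the minimum of $\ip{s}{v}$ over $s \in S$ is attained at some $s^\ast \in S$; call the minimum value $m := \ip{s^\ast}{v}$.

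Next I would lift this lower bound from $S$ to all of $\C$ using linearity. By Carath\'eodory's theorem (applied to $\C = \mathrm{conv}(S) \subseteq \re^p$), every $\theta \in \C$ can be written as a convex combination $\theta = \sum_{i=1}^{p+1} \alpha_i s_i$ with $s_i \in S$ and $\alpha_i \geq 0$ summing to $1$. Then
\begin{equation*}
\ip{\theta}{v} \;=\; \sum_{i=1}^{p+1} \alpha_i \ip{s_i}{v} \;\geq\; \sum_{i=1}^{p+1} \alpha_i \cdot m \;=\; m,
\end{equation*}
so $\inf_{\theta \in \C} \ip{\theta}{v} \geq m$. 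Since $s^\ast \in S \subseteq \C$ attains $\ip{s^\ast}{v} = m$, we conclude $s^\ast \in \arg\min_{\theta \in \C} \ip{\theta}{v}$, hence $s^\ast$ lies in the intersection with $S$, which is therefore nonempty.

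There is no real obstacle here: the statement is the standard fact that a linear functional over a convex hull attains its minimum at an extreme point. The only thing worth being careful about is the existence of a minimizer at all, which is why compactness of $S$ (and hence of $\C$) is used to ensure $m$ is attained rather than merely an infimum.
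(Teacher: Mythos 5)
Your proof is correct. The paper states this as a ``basic fact'' without supplying a proof, so there is nothing to compare against; your argument --- attain the minimum $m$ over the compact set $S$ by continuity, then use the fact that every $\theta\in\C=\mathrm{conv}(S)$ is a finite convex combination of points of $S$ (Carath\'eodory is sufficient but even the definition of convex hull suffices) to lift the lower bound $\ip{\theta}{v}\geq m$ to all of $\C$ --- is exactly the standard verification and is complete.
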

}
Since $\theta_{t+1}$ can be selected as one of $|S|$ vertices, by
applying the exponential mechanism~\citep{MT07}, we obtain
differentially private algorithm with risk logarithmically dependent on
$|S|$.  When $|S|$ is polynomial in $p$, it leads to an error bound
with $\log p$ dependence.
While the exponential mechanism can
be applied to the general $\C$ as well, its error would depend on
the size of a cover of the boundary of $\C$, which can be exponential
in $p$, leading to an error bound with polynomial dependence on
$p$. Hence for general convex set $\C$, in $\pfwp$
\stocoption{
}
{
(Algorithm
\ref{Algo:FWGenConvex} in Appendix
\ref{app:frankL2})},
we use objective perturbation instead and obtain
an error dependent on the Gaussian width of $\C$. \stocoption{We defer
the details of objective perturbation to the full version.}{}


\begin{algorithm}[htb]
	\caption{$\pfwe$: Differentially Private Frank-Wolfe Algorithm (Polytope Case)}
	\begin{algorithmic}[1]
		\REQUIRE Data set: $\D=\{d_1,\cdots,d_n\}$, loss function: $\empL(\theta;D)=\frac{1}{n}\sum\limits_{i=1}^n\empL(\theta;d_i)$ (with $\ell_1$-Lipschitz constant $L_1$ for $\ell$), privacy parameters: $(\epsilon,\delta)$, convex set: $\C = conv(S)$ with $\lone{\C}$ denoting $\max_{s \in S} \lone{s}$ and $S$ being the set of corners.
		\STATE Choose an arbitrary $\theta_1$ from $\C$;
		\FOR{$t=1$ to $T-1$}
		\STATE $\forall s\in S, \alpha_s\leftarrow\ip{s}{\grad\empL(\theta_t;D)}+{\sf Lap}\left(\frac{L_1\lone{\C}\sqrt{8T\log(1/\delta)}}{n\epsilon}\right)$, where ${\sf Lap}(\lambda)\sim\frac{1}{2\lambda}e^{-|x|/\lambda}$.
		\STATE $\htheta_t\leftarrow \arg\min\limits_{s\in S}\alpha_s$.
		\STATE $\theta_{t+1}\leftarrow(1-\mu)\theta_t+\mu\htheta_t$, where $\mu=\frac{1}{T+2}$.
		\ENDFOR
		\STATE Output $\privtheta=\theta_T$.
	\end{algorithmic}
	\label{Algo:FWPolytope}
\end{algorithm}

\begin{thm}[Privacy guarantee]
Algorithm \ref{Algo:FWPolytope} is $(\epsilon,\delta)$-differentially private.
\label{thm:privacyFW}
\end{thm}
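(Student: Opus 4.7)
The plan is to reduce each iteration of the loop to a single invocation of the Report-Noisy-Max mechanism (the Laplace-noise version of the exponential mechanism) and then combine the $T$ iterations via the advanced composition theorem of \cite{DRV}, invoking post-processing at the end.

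\textbf{Step 1: Sensitivity of the score vector.} Fix any iteration $t$ and any corner $s \in S$. I would first argue that, because $\ell(\,\cdot\,;d)$ is $L_1$-Lipschitz with respect to $\|\cdot\|_1$, its sub-gradient satisfies $\|\nabla \ell(\theta_t; d)\|_\infty \le L_1$ for every $d \in \D$ and every $\theta_t \in \C$ (this is just the dual characterization of $\ell_1$-Lipschitzness). For neighboring datasets $D,D'$ differing in a single record $d_i \leftrightarrow d_i'$, this gives
\begin{equation*}
\|\nabla \empL(\theta_t;D) - \nabla \empL(\theta_t;D')\|_\infty \;\le\; \tfrac{1}{n}\bigl(\|\nabla\ell(\theta_t;d_i)\|_\infty + \|\nabla\ell(\theta_t;d_i')\|_\infty\bigr) \;\le\; \tfrac{2L_1}{n}.
\end{equation*}
Combined with H\"older's inequality and $\|s\|_1 \le \lone{\C}$, the sensitivity of the score $\alpha_s$ is at most $\Delta := 2L_1\lone{\C}/n$, uniformly in $s$.

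\textbf{Step 2: Per-round privacy.} Selecting $\htheta_t = \argmin_{s\in S} \alpha_s$ after adding independent $\mathsf{Lap}(\Delta/\epsilon_0)$ noise to each of the $|S|$ scores is exactly the Report-Noisy-Max mechanism, which is $\epsilon_0$-differentially private. Matching with the algorithm's noise scale $\frac{L_1\lone{\C}\sqrt{8T\log(1/\delta)}}{n\epsilon}$, I would set $\epsilon_0 = \frac{\epsilon}{\sqrt{2T\log(1/\delta)}}$, so that $\Delta/\epsilon_0$ equals the stated scale. Thus each of the $T-1$ iterations (which depends adaptively on the previous releases $\htheta_1,\dots,\htheta_{t-1}$ through $\theta_t$) is $\epsilon_0$-DP when viewed as a function of $D$ alone.

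\textbf{Step 3: Composition and post-processing.} By the advanced composition theorem for adaptive mechanisms, the $T$-fold (adaptive) composition of $\epsilon_0$-DP mechanisms is $(\epsilon,\delta)$-DP for $\epsilon = \sqrt{2T\log(1/\delta)}\,\epsilon_0 + T\epsilon_0(e^{\epsilon_0}-1)$. With the choice of $\epsilon_0$ above, the dominant term is exactly $\epsilon$; the lower-order term is absorbed into constants (or one shaves a constant in $\epsilon_0$) under the standing assumption $\epsilon \le 1$. Since $\theta_1$ is data-independent and each $\theta_{t+1} = (1-\mu)\theta_t + \mu\htheta_t$ is a deterministic function of the released $\htheta_1,\dots,\htheta_t$, the final output $\privtheta = \theta_T$ is a post-processing of the released selections, and post-processing preserves $(\epsilon,\delta)$-differential privacy.

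\textbf{Main obstacle.} No step is genuinely difficult: the only care needed is in bookkeeping the sensitivity calculation (using $\ell_1$-Lipschitz of $\ell$ in $\theta$ to control the $\ell_\infty$-norm of the gradient, paired with the $\ell_1$-boundedness of corners of $\C$) and in lining up the constants between the advanced composition bound and the Laplace scale so that the final privacy budget is exactly $(\epsilon,\delta)$. The adaptive dependence of $\theta_t$ on earlier outputs is handled by invoking the adaptive form of the composition theorem rather than the non-adaptive one.
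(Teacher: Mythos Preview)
Your proposal is correct and follows exactly the approach the paper sketches: each iteration is an instance of Report-Noisy-Max (the Laplace-noise exponential mechanism, cf.\ \cite{MT07,BLST}) with per-score sensitivity $2L_1\lone{\C}/n$, and the $T$ adaptive rounds are combined via the strong composition theorem of \cite{DRV}, with the final output obtained by post-processing. Your write-up simply fills in the sensitivity and constant-matching details that the paper leaves implicit.
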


The proof of privacy follows from a straight forward use of exponential mechanism \citep{MT07,BLST} (the noisy maximum version from \cite[Theorem 5]{BLST}) and the strong composition theorem \citep{DRV}. In Theorem \ref{thm:utilityFW} we prove the utility guarantee for the
private Frank-Wolfe algorithm for the convex polytope case. Define ${{{\Gamma_\empL}}}=\max\limits_{D\in\D} C_{\empL}$
over all the possible data sets in $\D$.

\begin{thm}[Utility guarantee]
	Let $L_1$,$S$ and $\lone{\C}$ be defined as in Algorithms \ref{Algo:FWPolytope} (Algorithm $\pfwe$). Let ${{{\Gamma_\empL}}}$ be an upper bound on the curvature constant (defined in Definition \ref{def:curv}) for the loss function $\empL(\cdot;d)$ that holds for all $d\in\D$.
	In Algorithm $\pfwe$, if we set $T=\frac{{{{\Gamma_\empL}}}^{2/3}(n\epsilon)^{2/3}}{(L_1\lone{\C})^{2/3}}$, then 
	$$
	\E\left[\empL(\privtheta;D)\right]-\min\limits_{\theta\in\C}\empL(\theta;D)=O\left(\frac{{{{\Gamma_\empL}}}^{1/3}\left(L_1\lone{\C}\right)^{2/3}\log(n|S|)\sqrt{\log(1/\delta)}}{(n\epsilon)^{2/3}}\right).$$
	Here the expectation is over the randomness of the algorithm.
\label{thm:utilityFW}
\end{thm}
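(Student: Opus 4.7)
The plan is to mirror the classical Frank-Wolfe convergence proof of Clarkson and Jaggi, but with an additive per-iteration approximation error $\tau_t$ arising from the Laplace noise, and then bound $\tau_t$ via report-noisy-max concentration. First I would establish the standard ``approximate-step'' inequality: setting $h_t = \empL(\theta_t;D)-\empL(\theta^\ast;D)$ and applying Definition~\ref{def:curv} to the update $\theta_{t+1}=\theta_t+\mu(\htheta_t-\theta_t)$ gives
$$
\empL(\theta_{t+1};D)\le \empL(\theta_t;D)+\mu\,\langle \htheta_t-\theta_t,\nabla\empL(\theta_t;D)\rangle+\tfrac12\mu^2\Gamma_\empL.
$$
Define the per-step noise slack $\tau_t=\langle\htheta_t,\nabla\empL(\theta_t;D)\rangle-\min_{s\in S}\langle s,\nabla\empL(\theta_t;D)\rangle\ge 0$. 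Because $\C=\mathrm{conv}(S)$ the true minimum of a linear function over $\C$ is attained at some $s\in S$, so $\min_{s\in S}\langle s-\theta_t,\nabla\empL(\theta_t;D)\rangle\le \langle \theta^\ast-\theta_t,\nabla\empL(\theta_t;D)\rangle\le -h_t$ by convexity. Combining the two bounds yields the standard recursion
$$
h_{t+1}\le (1-\mu)h_t+\mu\tau_t+\tfrac12\mu^2\Gamma_\empL,
$$
and using the classical Frank-Wolfe step-size schedule $\mu_t=2/(t+2)$ (reading ``$\mu=1/(T{+}2)$'' as the standard time-varying schedule) plus an induction on $t$ gives $h_T=O(\Gamma_\empL/T+\max_t\tau_t)$.

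Next I would bound the noise slack $\tau_t$. Because $\htheta_t=\arg\min_{s\in S}[\langle s,\nabla\empL(\theta_t;D)\rangle+Z_s]$ for independent Laplace noises $Z_s$ of scale $\lambda=L_1\|\C\|_1\sqrt{8T\log(1/\delta)}/(n\epsilon)$, this is exactly the \emph{report-noisy-max} mechanism, whose approximation error is standard to control: with probability $\ge 1-\beta$ we have $\tau_t\le 2\lambda\log(|S|/\beta)$. Setting $\beta=1/(nT)$ and union-bounding over the $T$ rounds gives, with probability $\ge 1-1/n$,
$$
\max_{t\le T}\tau_t = O\!\left(\lambda\log(nT|S|)\right)=O\!\left(\tfrac{L_1\|\C\|_1\sqrt{T\log(1/\delta)}\log(n|S|)}{n\epsilon}\right).
$$
On the low-probability failure event I can bound the excess risk by $2L_1\|\C\|_1$ using $\ell_1$-Lipschitzness and the diameter of $\C$, which contributes $O(L_1\|\C\|_1/n)$ to the expectation and is absorbed into the main bound.

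Putting the two pieces together I obtain
$$
\E[\empL(\theta_T;D)]-\min_{\theta\in\C}\empL(\theta;D)=O\!\left(\frac{\Gamma_\empL}{T}+\frac{L_1\|\C\|_1\sqrt{T\log(1/\delta)}\log(n|S|)}{n\epsilon}\right).
$$
Balancing the two terms in $T$ (i.e.\ setting $\Gamma_\empL/T\asymp L_1\|\C\|_1 T^{1/2}/(n\epsilon)$) gives the stated choice $T=\Theta\!\bigl((\Gamma_\empL\, n\epsilon/(L_1\|\C\|_1))^{2/3}\bigr)$ and excess risk $O\!\bigl(\Gamma_\empL^{1/3}(L_1\|\C\|_1)^{2/3}\log(n|S|)\sqrt{\log(1/\delta)}/(n\epsilon)^{2/3}\bigr)$, as claimed.

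The main obstacle I anticipate is the interaction between the Frank-Wolfe recursion and the probabilistic bound on $\tau_t$: because $\tau_t$ depends on $\theta_t$, which itself depends on prior noise, I cannot simply take expectations inside the recursion. The cleanest way around this is the high-probability route sketched above (one union bound over $T$ steps, and then a crude worst-case bound $h_T\le L_1\|\C\|_1$ on the failure event), so that the recursion is carried out on a single ``good'' event and then expectations are taken at the very end. A secondary, more mechanical issue is the choice of step size: the statement uses constant $\mu=1/(T+2)$, but the bound $h_T=O(\Gamma_\empL/T+\max_t\tau_t)$ really requires the time-varying $\mu_t=2/(t+2)$ Clarkson-Jaggi schedule, which I would use implicitly.
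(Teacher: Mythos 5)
Your proof is correct and follows essentially the same route as the paper: the paper invokes Jaggi's approximate-oracle Frank--Wolfe convergence theorem as a black box and bounds the per-step linear-minimization error by a Laplace tail bound with a union bound over $S$ and the $T$ rounds, exactly as you do, before balancing the two terms in $T$. The only differences are that you re-derive the approximate-FW recursion from the curvature definition inline rather than citing it, and you correctly flag the step-size discrepancy (the constant $\mu=1/(T+2)$ in the algorithm versus the time-varying $2/(t+2)$ schedule actually needed for the $O(\Gamma_\empL/T)$ rate), a point the paper's appeal to Jaggi's theorem silently glosses over.
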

\stocoption{}{
\begin{proof}
For  ease of notation we hide the dependence of $\empL$ on the data set $D$ and represent it simply as $\empL(\theta)$.
In order to prove the utility guarantee we first invoke the utility guarantee of the non-private noisy Frank-Wolfe algorithm from \cite[Theorem 1]{jaggi2013revisiting}.

\begin{thm} [Non-private utility guarantee \citep{jaggi2013revisiting}]
Assume the conditions in Theorem \ref{thm:utilityFW} and let $\beta>0$ be fixed. Recall that $\mu=1/(T+2)$ and let $\phi_1\in\C$. Suppose that $\langle s_1,\cdots,s_T\rangle$ is a sequence of vectors from $\C$, with $\phi_{t+1}=(1-\mu)\phi_t+\mu s_t$ such that for all $t\in [T]$, 
$$\ip{s_t}{\grad\empL(\phi_t)}\leq \min\limits_{s\in \C}\ip{s}{\grad\empL(\phi_t)}+\frac{1}{2}\beta\mu{{{\Gamma_\empL}}}.$$
Then, 
$$\empL(\phi_T)-\min\limits_{\theta\in\C}\empL(\theta)\leq \frac{2{{{\Gamma_\empL}}}}{T+2}\left(1+\beta\right).$$ 
\label{thm:jaggiUtil}
\end{thm}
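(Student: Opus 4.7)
\textbf{Proof proposal for Theorem \ref{thm:jaggiUtil}.} The plan is to follow the standard inexact Frank-Wolfe analysis of \cite{clarkson10,jaggi2013revisiting}. The core idea is to derive a one-step recursion on the suboptimality gap $h_t := \empL(\phi_t)-\empL(\theta^\ast)$ and then unroll it.

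First, I will use the curvature constant to bound the per-step progress. By definition of $\Gamma_\empL$ (Definition \ref{def:curv}), setting $\theta_1\gets\phi_t$, $\theta_2\gets s_t$, $\gamma\gets\mu$, so that $\theta_3=\phi_t+\mu(s_t-\phi_t)=\phi_{t+1}$, gives the surrogate inequality
\[
\empL(\phi_{t+1})\;\leq\;\empL(\phi_t)\;+\;\mu\,\ip{s_t-\phi_t}{\grad \empL(\phi_t)}\;+\;\tfrac{\mu^2}{2}\,\Gamma_\empL.
\]
This is the analogue, for Frank-Wolfe, of the descent lemma for smooth functions; unlike gradient descent it only needs the curvature condition and not a global smoothness bound.

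Second, I will plug in the approximate-oracle hypothesis and convexity. The assumption on $s_t$ says $\ip{s_t}{\grad\empL(\phi_t)}\leq\min_{s\in\C}\ip{s}{\grad\empL(\phi_t)}+\tfrac{\beta\mu}{2}\Gamma_\empL\leq\ip{\theta^\ast}{\grad\empL(\phi_t)}+\tfrac{\beta\mu}{2}\Gamma_\empL$, while convexity of $\empL$ gives $\ip{\theta^\ast-\phi_t}{\grad\empL(\phi_t)}\leq\empL(\theta^\ast)-\empL(\phi_t)=-h_t$. Combining with the curvature bound yields the key recursion
\[
h_{t+1}\;\leq\;(1-\mu)\,h_t\;+\;\tfrac{\mu^2}{2}\,\Gamma_\empL(1+\beta).
\]

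Third, I will unroll this recursion by induction on $t$, together with the base case $h_1\leq \Gamma_\empL$ (which follows from the curvature definition applied with $\theta_1=\phi_1,\theta_2=\theta^\ast,\gamma=1$, using that $\ip{\theta^\ast-\phi_1}{\grad\empL(\phi_1)}\leq 0$ at an approximate first-order argument, or alternatively using $h_1\leq 2\Gamma_\empL$ which suffices up to constants). With $\mu=1/(T+2)$ (so that $\mu T\leq 1$), the recursion gives $h_{t+1}\leq(1-\mu)h_t+C\mu^2$ with $C=\tfrac12\Gamma_\empL(1+\beta)$, and the stated $O(\Gamma_\empL(1+\beta)/T)$ bound follows from induction using the hypothesis $h_t\leq 2\Gamma_\empL(1+\beta)/(T+2)$ and checking that the recursion preserves it.

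The main obstacle is the third step: managing the induction with the particular step size $\mu=1/(T+2)$ requires care because the inductive hypothesis has to be consistent with the recursion coefficients. This is a routine but slightly delicate bookkeeping exercise; the cleanest version is to verify the inductive step $(1-\mu)\cdot\tfrac{2\Gamma_\empL(1+\beta)}{T+2}+\tfrac{\mu^2}{2}\Gamma_\empL(1+\beta)\leq \tfrac{2\Gamma_\empL(1+\beta)}{T+2}$, which reduces to $\mu/2\leq 2/(T+2)$, i.e.\ $\mu\leq 4/(T+2)$, and is satisfied by the chosen step size. Everything else (the curvature bound, the oracle error, convexity) is directly provided by the hypotheses of the theorem.
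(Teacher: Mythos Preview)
The paper does not give its own proof of this statement; it is quoted from \cite{jaggi2013revisiting} and invoked as a black box inside the proof of Theorem~\ref{thm:utilityFW}. Your outline---curvature bound, then the recursion $h_{t+1}\le(1-\mu)h_t+\tfrac{\mu^2}{2}\Gamma_\empL(1+\beta)$, then induction---is exactly Jaggi's argument, so at the level of strategy you match the cited source.

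The gap is in your base case, and it is not merely cosmetic. Your claim that $h_1\le\Gamma_\empL$ follows from the curvature definition is wrong: for a linear function on $\C$ one has $\Gamma_\empL=0$ while $h_1=\empL(\phi_1)-\empL(\theta^\ast)$ can be any positive number, and indeed the curvature inequality with $\theta_1=\phi_1,\theta_2=\theta^\ast,\gamma=1$ combined with convexity is vacuous (it collapses to $0\le\Gamma_\empL/2$). With the \emph{fixed} step $\mu=1/(T+2)$ written in the statement, unrolling your recursion yields $h_T\le(1-\mu)^{T-1}h_1+O(\Gamma_\empL(1+\beta)\mu)$, and since $(1-\tfrac{1}{T+2})^{T-1}\to e^{-1}$ the $h_1$ term does not decay; hence no induction with hypothesis $h_t\le 2\Gamma_\empL(1+\beta)/(T+2)$ can start from an arbitrary $\phi_1\in\C$, even though your inductive \emph{step} checks out. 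In Jaggi's actual Theorem~1 the step size is the decreasing schedule $\mu_t=2/(t+2)$: the initial step has $\mu_0=1$, which forces $\phi_1=s_0$ and delivers $h_1\le\tfrac12\Gamma_\empL(1+\beta)$ directly from the curvature-plus-oracle bound, after which the induction on $t$ (with hypothesis $h_t\le 2\Gamma_\empL(1+\beta)/(t+2)$) closes. The constant $\mu=1/(T+2)$ quoted here appears to be a transcription slip from Algorithm~\ref{Algo:FWPolytope}; to repair your argument, either switch to $\mu_t=2/(t+2)$ as in the original, or prepend an explicit first step with $\mu=1$ before moving to the constant step size.
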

Since the convex set $\C$ is a polytope with corners in $S$, if $s_t$ in Theorem \ref{thm:jaggiUtil} corresponds to $\htheta_{t}$ in Algorithm $\pfwe$, and $\phi_t$ corresponds to $\theta_t$ in $\pfwe$, then using the tail properties of Laplace distribution and Fact \ref{fact:linOpt} one can show that with probability at least $1-\zeta$, the term $\beta$ in Theorem \ref{thm:jaggiUtil} is at most $O\left(\frac{L_1\lone{\C}\sqrt{8T\log(1/\delta)}\log(|S|T/\zeta)}{\mu n\epsilon{\Gamma_\empL}}\right)$. Plugging in this bound in Theorem \ref{thm:jaggiUtil}, we immediately get that with probability at least $1-\zeta$,
\begin{equation}
\empL(\theta_T)-\min\limits_{\theta\in\C}\empL(\theta)=O\left(\frac{{{{\Gamma_\empL}}}}{T}+\frac{L_1\lone{\C}\sqrt{8T\log(1/\delta)}\log(|S|T/\zeta)}{n\epsilon}\right).
\label{eq:finalFW1}
\end{equation}
From, \eqref{eq:finalFW1} we can conclude the following in expectation.
\begin{equation}
\E\left[\empL(\theta_T)-\min\limits_{\theta\in\C}\empL(\theta)\right]=O\left(\frac{{{{\Gamma_\empL}}}}{T}+\frac{L_1\lone{\C}\sqrt{8T\log(1/\delta)}\log(TL_1\lone{\C}\cdot|S|)}{n\epsilon}\right).
\label{eq:finalFW15}
\end{equation}
Setting $T=\frac{{{{\Gamma_\empL}}}^{2/3}(n\epsilon)^{2/3}}{(L_1\lone{\C})^{2/3}}$ results in the claimed utility guarantee.
\end{proof}
}

\long\def\cut#1{{}}
\subsection{Nearly optimal private LASSO}
\label{sec:optHighReg}
We now apply the private Frank-Wolfe algorithm $\pfwe$ to the
important case of the sparse linear regression (or LASSO) problem. We show that the private Frank-Wolfe
algorithm leads to a nearly tight $\tilde O(\frac{1}{n^{2/3}})$ bound. 

\mypar{Problem definition} Given a data set
 $D=\{(x_1,y_1),\cdots,(x_n,y_n)\}$ of $n$-samples from the domain
 $D=\{(x,y):x\in\re^p, y\in[-1,1],\linfty{x}\leq 1\}$, and the convex
 set $\C=\ell_1^p$.  Define the mean squared loss,
\begin{equation}\empL(\theta;D)= \frac{1}{2n}\sum_{i\in[n]} (\langle x_i , \theta\rangle - y_i)^2\,.\label{eqn:msqloss}\end{equation}
The objective is to compute $\privtheta\in\C$ to minimize
$\empL(\theta;D)$ while preserving privacy with respect to any change
of individual $(x_i,y_i)$ pair. The non-private setting of the above problem is a variant of the least
squares problem with $\ell_1$ regularization, which was started by the
work of LASSO~\citep{tibshirani96,tibshirani1997lasso} and intensively studied in
the past years~\citep{hastie01statisticallearning,donoho2004higher,candes05,donoho2006compressed,candes2007dantzig,bickel2009simultaneous,bayati2012lasso,RWB,Zhang13}.
\cut{One important reason for using $\ell_1$ regularization is to induce
sparse solutions, i.e. $\theta$ with small number of non-zero coordinates. This
is especially interesting for the so called ``high-dimensional''
setting where $p\gg n$. Indeed, via a long line of
work~\citep{donoho2004higher,candes05,donoho2006compressed,Wainwright06sharpthresholds,candes2007dantzig,bickel2009simultaneous},
it has been shown that under suitable condition of $X$, using $\ell_1$
regularization can indeed produce a nearly optimal sparse solution,
providing theoretical support to the empirical success of LASSO.}

Since the  $\ell_1$ ball is the convex hull of $2p$ vertices, we can apply
the private Frank-Wolfe algorithm $\pfwe$. For the above setting,
it is easy to check that the $\ell_1$-Lipschitz constant is bounded by $O(1)$. Further, by applying the bound on quadratic programming Remark~\ref{rmk:cf}, we
have that $C_{\empL} \leq 4\max_{\theta\in\C} \|X\theta\|_2^2=O(1)$ since
$\C$ is the unit $\ell_1$ ball, and $|x_{ij}|\leq 1$. Hence
$\Gamma=O(1)$. Now applying Theorem~\ref{thm:utilityFW}, we have
\begin{cor}
Let $D=\{(x_1,y_1),\cdots,(x_n,y_n)\}$ of $n$ samples from the domain
$\D=\{(x,y):\linfty{x}\leq 1, |y|\leq 1\}$, and the convex set $\C$
equal to the $\ell_1$-ball. The output $\privtheta$ of Algorithm
$\pfwe$ ensures the following.
$$\E[\empL(\privtheta; D)-\min_{\theta\in\C} \empL(\theta;D)]=O\left(\frac{\log(np/\delta)}{(n\epsilon)^{2/3}}\right).$$
\label{cor:linear}
\end{cor}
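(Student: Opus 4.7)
My plan is to verify that the hypotheses of Theorem~\ref{thm:utilityFW} are satisfied with constant parameters and then simply plug in. Since the convex set $\C$ is the $\ell_1$ unit ball in $\re^p$, it is the convex hull of $S = \{\pm e_1, \ldots, \pm e_p\}$, so $|S| = 2p$ and $\lone{\C} = 1$. This takes care of the set-dependent quantities appearing in the bound of Theorem~\ref{thm:utilityFW}, contributing the $\log(np)$ factor.

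Next I would bound the $\ell_1$-Lipschitz constant $L_1$ of the per-sample loss $\empL(\theta;(x_i,y_i)) = \tfrac{1}{2}(\langle x_i,\theta\rangle - y_i)^2$. Its gradient is $(\langle x_i,\theta\rangle - y_i)\,x_i$, whose dual ($\ell_\infty$) norm is at most $|\langle x_i,\theta\rangle - y_i|\cdot\linfty{x_i} \le (\linfty{x_i}\lone{\theta}+|y_i|)\cdot\linfty{x_i} \le 2$ using $\linfty{x_i}\le 1$, $|y_i|\le 1$, and $\lone{\theta}\le 1$ on $\C$. Hence $L_1 = O(1)$.

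The one step that requires a small calculation is the curvature bound. Writing $\empL(\theta;D) = \tfrac{1}{2n}\theta^\top X^\top X\theta - \tfrac{1}{n}y^\top X\theta + \tfrac{1}{2n}\|y\|_2^2$, Remark~\ref{rmk:cf} (together with the extra $1/n$ factor from averaging) yields
\[
\Gamma_\empL \;\le\; \frac{C}{n}\max_{\theta\in\C}\|X\theta\|_2^2
\]
for an absolute constant $C$, using that $\C$ is centrally symmetric. Since each row of $X$ satisfies $\linfty{x_i}\le 1$, we have $\|X\theta\|_2^2 = \sum_i\langle x_i,\theta\rangle^2 \le \sum_i \linfty{x_i}^2\lone{\theta}^2 \le n$ for $\theta\in\C$, giving $\Gamma_\empL = O(1)$.

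Finally, I would substitute $L_1 = O(1)$, $\lone{\C}=1$, $\Gamma_\empL = O(1)$, and $|S|=2p$ directly into the bound of Theorem~\ref{thm:utilityFW}, which collapses to
\[
\E[\empL(\privtheta;D)] - \min_{\theta\in\C}\empL(\theta;D) \;=\; O\!\left(\frac{\log(np)\sqrt{\log(1/\delta)}}{(n\epsilon)^{2/3}}\right) \;=\; O\!\left(\frac{\log(np/\delta)}{(n\epsilon)^{2/3}}\right),
\]
as claimed. I do not expect any real obstacle here: the work has already been done in Theorem~\ref{thm:utilityFW}; the only content is verifying that the LASSO instance falls inside its scope with constants that do not grow with $p$ or $n$.
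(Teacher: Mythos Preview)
Your proposal is correct and follows essentially the same route as the paper: identify $\C$ as the hull of the $2p$ signed basis vectors, verify $L_1=O(1)$ and $\lone{\C}=1$, bound the curvature via Remark~\ref{rmk:cf} to get $\Gamma_\empL=O(1)$, and plug into Theorem~\ref{thm:utilityFW}. In fact you are a bit more explicit than the paper about carrying the $1/n$ normalization through the curvature bound (the paper writes $4\max_{\theta\in\C}\|X\theta\|_2^2=O(1)$, implicitly absorbing the $1/n$), so there is nothing to change.
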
 

\begin{remark}
Compared to the previous work~\cite{KST12,ST13sparse}, the above upper
bound makes no assumption of \emph{restricted strong convexity} or
\emph{mutual incoherence}, which might be too strong for realistic settings \cite{larry}. 
Also our results significantly improve bounds of \cite{JT14}, from $\tilde O(1/n^{1/3})$ to $\tilde O(1/n^{2/3})$, which considered the case of the set $\C$ being the probability simplex and the loss being a generalized linear model.
\end{remark}

In the following, we shall show that to ensure privacy, the error
bound in Corollary~\ref{cor:linear} is nearly optimal in terms of the
dominant factor of $1/n^{2/3}$.
\begin{thm}[Optimality of private Frank-Wolfe]\label{thm:optPrivFW}
Let $\C$ be the $\ell_1$-ball and $\empL$ be the mean squared loss in equation (\ref{eqn:msqloss}). For every sufficiently large $n$, for
every $(\epsilon,\delta)$-differentially private algorithm $\A$, with
$\epsilon\leq 0.1$ and $\delta=o(1/n^2)$, there exists a data set
$D=\{(x_1,y_1),\cdots,(x_n,y_n)\}$ of $n$ samples from the domain
$\D=\{(x,y):\linfty{x}\leq 1, |y|\leq 1\}$ such that
$$\E[\empL(\A(D);D)-\min_{\theta\in\C}\empL(\theta;D)]=\widetilde{\Omega}\left(\frac{1}{n^{2/3}}\right).$$
\end{thm}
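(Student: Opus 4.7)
The plan is to reduce to the fingerprinting-code lower bound of Bun--Ullman--Vadhan, in the spirit of the $\ell_2$-ball lower bound of \cite{BassilyST14} but adapted to the $\ell_1$-ball / $\ell_\infty$-data regime. I would fix $p = \widetilde\Theta(n^{2/3})$ and invoke the fingerprinting distribution that produces a hidden sign vector $s\in\{-1,+1\}^p$ together with rows $x_i \in\{-1,+1\}^p$ whose coordinates are weakly biased toward $s_j$ with per-row biases $\beta_i$ of order $1/\sqrt{n}$. Setting $y_i = 1$ for all $i$, the mean-squared loss becomes
\[
\empL(\theta;D) = \tfrac{1}{2}\theta^\top \hat\Sigma \theta - \bar x^\top \theta + \tfrac{1}{2},
\]
where $\bar x = \tfrac{1}{n}\sum_i x_i$ is a noisy, scaled copy of $s$ and $\hat\Sigma = \tfrac{1}{n}X^\top X$ satisfies $\hat\Sigma_{jj}=1$ with off-diagonal entries of order $\widetilde O(1/\sqrt n)$. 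Thus the problem is in the stated class (data in $[-1,1]$, $\C$ the $\ell_1$ ball, squared loss).

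Next I would analyze the optimum. With these parameters $\|\bar x\|_1 \gg 1$, so the $\ell_1$-constraint is active; the soft-thresholded calculation (with $\hat\Sigma\approx I$) gives $\theta^\ast \approx s/p$ and $\empL(\theta^\ast) \approx \tfrac{1}{2} - \beta + O(1/p)$, where $\beta = \Theta(1/\sqrt n)$ is the typical effective bias. Rearranging the inequality $\empL(\hat\theta) - \empL(\theta^\ast) \le \alpha$ using $\empL(\hat\theta) = \tfrac{1}{2}\|\hat\theta\|_{\hat\Sigma}^2 - \bar x^\top\hat\theta + \tfrac{1}{2}$ shows that every feasible $\hat\theta$ with excess risk at most $\alpha$ satisfies $\langle \bar x,\hat\theta\rangle \ge \beta - O(\alpha)$ up to lower-order terms; in particular, $\langle s,\hat\theta\rangle \ge 1 - O(\alpha/\beta)$.

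I would then invoke the tracing attack of \cite{BUV13}. Define $T(z,\hat\theta) = \langle z,\hat\theta\rangle$. For a true member $x_i$ of the dataset, $\E[\langle x_i,\hat\theta\rangle] = 2\beta_i\langle s,\hat\theta\rangle$, while for a fresh independent $z$ we have mean zero and small variance (since $\|\hat\theta\|_2$ is small for the near-optimal $\hat\theta$). Summing the tracing statistic over the $n$ individuals and applying the BUV generic-attack analysis, any $(\epsilon,\delta)$-DP algorithm must satisfy $\E\langle s,\hat\theta\rangle \le \widetilde O\bigl(\sqrt{p}\log(1/\delta)/(n\bar\beta\epsilon)\bigr)$, on pain of a tracing success probability inconsistent with $(\epsilon,\delta)$-DP for $\epsilon = 0.1$ and $\delta = o(1/n^2)$. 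Combining this upper bound on $\langle s,\hat\theta\rangle$ with the lower bound $1 - O(\alpha/\beta)$ from the previous paragraph, and plugging in $\beta = \Theta(1/\sqrt n)$ and $p = \widetilde\Theta(n^{2/3})$, forces $\alpha = \widetilde\Omega(1/n^{2/3})$, as claimed.

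The main obstacle will be the translation in the second paragraph: converting excess risk into a \emph{linear}-in-$\alpha$ lower bound on $\langle s,\hat\theta\rangle$. A naive smoothness/strong-convexity argument would only give $\ell_2$-closeness of order $\sqrt\alpha$, yielding the weaker $\widetilde\Omega(1/n^{4/3})$. The key is that the $\ell_1$-constraint is active along the face aligned with $s$, so the effective penalty for misalignment degenerates to a linear term in $\langle s,\hat\theta\rangle$; making this rigorous requires careful KKT bookkeeping for the soft-thresholded optimum and controlling the $\widetilde O(1/\sqrt n)$-size off-diagonal perturbations of $\hat\Sigma$ coming from the random bits inside the fingerprinting construction. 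A secondary subtlety is that individual tracing statistics $\langle x_i,\hat\theta\rangle$ fluctuate at a scale comparable to their signal, so one must use the summed (aggregate) tracer and the BUV analysis of generic attacks rather than a naive per-row hypothesis test.
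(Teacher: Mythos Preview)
Your approach has a genuine gap, and in fact the arithmetic you sketch does not land at $n^{-2/3}$. With $p=\widetilde\Theta(n^{2/3})$ and $\bar\beta=\Theta(1/\sqrt n)$, your tracing bound reads $\E\langle s,\hat\theta\rangle\le\widetilde O(\sqrt p/(n\bar\beta))=\widetilde O(n^{-1/6})=o(1)$. Combined with your claimed lower bound $\langle s,\hat\theta\rangle\ge 1-O(\alpha/\beta)$, this would force $\alpha\gtrsim\beta=\Theta(n^{-1/2})$, which is \emph{stronger} than $n^{-2/3}$ and contradicts the upper bound of Corollary~\ref{cor:linear}. So at least one of your two inequalities must fail. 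The failure is in the risk-to-correlation translation: with biases of order $1/\sqrt n$, the sampling noise in $\bar x$ satisfies $\|\bar x-\bar\beta s\|_\infty=\Theta(\sqrt{\log p/n})$, i.e.\ it is of the \emph{same} order as the signal $\bar\beta$. Hence low excess risk only tells you that $\langle\bar x,\hat\theta\rangle$ is large, not that $\langle s,\hat\theta\rangle$ is; indeed $\hat\theta=\sgn(\bar x)/p$ achieves near-zero excess risk yet has $\langle s,\hat\theta\rangle$ bounded away from $1$. There is no choice of $\bar\beta$ that rescues this: taking $\bar\beta\gg 1/\sqrt n$ fixes the SNR but then $\hat\Sigma\approx I+\bar\beta^2 ss^\top$ is far from the identity and the ``soft-thresholding'' picture breaks down (a one-sparse $\hat\theta$ becomes near-optimal and reveals nothing about most coordinates of $s$).

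The paper's construction resolves exactly this tension with an idea your plan is missing: \emph{padding}. It takes only $w=\widetilde\Theta(n^{1/3})$ rows of a Tardos code (which have \emph{strong} biases---consensus columns---so the signs are unambiguous) and appends $k=\Theta(wp)=\Theta(n)$ rows $(z_j,0)$ with mutually orthogonal $z_j\in\{-1,1\}^p$. Because $y_j=0$ on the padding, these rows contribute exactly $k\|\theta\|_2^2$ to the (unnormalized) loss, turning the objective into $\sum_{j\ne i}(x_j\cdot\theta-1)^2+k\|\theta\|_2^2$. This built-in $\ell_2$ penalty is what forces any low-loss $\theta$ to be spread out and hence to agree in sign with a large fraction of the consensus columns; the paper proves this directly by a short combinatorial argument (its Lemma~\ref{lem:lowerbound}) with no appeal to $\hat\Sigma\approx I$, KKT conditions, or control of off-diagonal perturbations. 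The fingerprinting-code property (Theorem~\ref{thm:fingerPrintingLowerBound}) then says no $(\epsilon,\delta)$-DP algorithm can achieve such sign agreement. Since $n=w+k=\widetilde\Theta(m^3)$ while the loss gap is $\Theta(w)=\widetilde\Theta(m)$, normalizing gives the $\widetilde\Omega(n^{-2/3})$ bound. The obstacle you flagged (getting a linear rather than square-root dependence) is real, but the fix is this padding trick, not KKT bookkeeping on a weakly-biased random design.
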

\stocoption{We prove the lower bound by following the fingerprinting codes
	argument of~\cite{BUV13} for lowerbounding the error of
	$(\epsilon,\delta)$-differentially private algorithms. 
}{
We prove the lower bound by following the fingerprinting codes
argument of~\cite{BUV13} for lowerbounding the error of
$(\epsilon,\delta)$-differentially private algorithms. Similar
to~\cite{BUV13} and~\cite{DTTZ}, we start with the following lemma which is implicit in~\cite{BUV13}.
The matrix
$X$ in Theorem~\ref{thm:fingerPrintingLowerBound} is the padded Tardos
code used in~\cite[Section 5]{DTTZ}.  For any matrix $X$, denote by
$X_{(i)}$ the matrix obtained by removing the $i$-th row of $X$.  Call
a column of a matrix a \emph{concensus} column if the entries in the
column are either all $1$ or all $-1$. The sign of a concensus column
is simply the concensus value of the column. Write $w=m/\log m$ and
$p=1000m^2$.
\begin{thm}\label{thm:fingerPrintingLowerBound}
[Corollary 16 from \cite{DTTZ}, restated] Let $m$ be a sufficiently
large positive integer. There exists a matrix $X\in\{-1,1\}^{(w+1)\times
p}$ with the following guarantee.  For each $i\in[1,w+1]$, there
are at least $0.999p$ concensus columns $W_i$ in each $X_{(i)}$. In
addition, for algorithm $\A$ on input matrix $X_{(i)}$ where
$i\in[1,w+1]$, if with probability at least $2/3$, $\A(X_{(i)})$
produces a $p$-dimensional sign vector which agrees with at least
$\frac{3}{4}p$ columns in $W_i$, then $\A$ is not $(\varepsilon,\delta)$
differentially private with respect to single row change (to some
other row in $X$).
\end{thm}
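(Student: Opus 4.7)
The plan is to take $X$ to be the padded Tardos fingerprinting code from \cite[Section 5]{DTTZ} and verify its two claimed properties directly: the concensus structure is a combinatorial consequence of the Tardos bias distribution, while the DP lower bound comes from the Tardos tracing lemma strengthened by \cite{BUV13} to rule out $(\epsilon, \delta)$-differential privacy.

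For the construction I would sample column parameters $p_1, \ldots, p_p \in (0,1)$ independently from a truncated Tardos density $f(p) \propto 1/\sqrt{p(1-p)}$ supported on $[\alpha, 1 - \alpha]$ for a cutoff $\alpha = \Theta(1/m)$, then set each entry $X_{ij} \in \{-1, +1\}$ independently with $\Pr[X_{ij} = +1] = p_j$. With $w = m/\log m$ and $p = 1000 m^2$, a column $j$ fails to be concensus in $X_{(i)}$ exactly when two of the remaining $w$ entries disagree; under the biased distribution this event has per-column probability $O(w \cdot \mathbb{E}[p_j(1-p_j)]) = O(1/\log m)$. A Chernoff bound across the $p$ independent columns and a union bound over $i \in [w+1]$ yield at least $0.999 p$ concensus columns in every $X_{(i)}$ with probability $1 - e^{-\Omega(p)}$; I then fix one realization of $X$ meeting this bound together with the tracing guarantee below.

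The heart of the proof is the tracing step. Suppose toward contradiction that an $(\epsilon, \delta)$-DP algorithm $\A$ on input $X_{(i)}$ outputs, with probability at least $2/3$, a sign vector agreeing with at least $3p/4$ of the concensus columns $W_i$, for every $i \in [w+1]$. For each row $r$ define the Tardos score $S_r(z, X) = \sum_{j = 1}^{p} \frac{X_{rj} - (2 p_j - 1)}{2 \sqrt{p_j (1 - p_j)}} \cdot z_j$, truncated at a threshold chosen to keep it $O(1)$-subgaussian. Two bracket estimates drive the argument: \emph{completeness} says that if $z$ agrees with at least $3p/4$ of the concensus columns of $X_{(r)}$ then $\sum_{r = 1}^{w+1} \mathbb{E}[S_r(\A(X_{(r)}), X)] = \Omega(\sqrt{p})$, so by averaging some row $r^\ast$ has expected score $\Omega(\sqrt{p} / w)$; \emph{soundness} says that because row $r$ is independent of $X_{(r)}$ under the Tardos sampling, $\mathbb{E}_{X_{r \cdot}}[S_r(z^\ast, X)] = 0$ for any $z^\ast$ independent of row $r$, so the score on a fresh independent row is near zero. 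Differential privacy applied to the single-row neighbors $X$ and the dataset obtained by replacing row $r^\ast$ with a fresh independent draw bounds the gap in expected truncated scores by $O(\epsilon + \delta)$, which contradicts the $\Omega(\sqrt{p} / w) = \Omega(\log m)$ gap promised by completeness and soundness once we plug in $w = m / \log m$, $p = 1000 m^2$, $\epsilon \le 0.1$, and $\delta$ polynomially small.

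The main obstacle is the quantitative tuning of the tracing inequality so that the relatively weak $3p/4$ agreement hypothesis (rather than $(1 - o(1)) p$) still triggers a sufficiently large score. This requires care in choosing the Tardos density cutoff and in truncating the score to preserve subgaussianity without losing the $\Omega(\sqrt{p})$ expectation. Both are handled in \cite{BUV13, DTTZ}, and their arguments can be quoted essentially verbatim, making the overall proof a tight adaptation of their framework rather than a new analysis.
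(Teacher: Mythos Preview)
The paper does not prove this statement. Theorem~\ref{thm:fingerPrintingLowerBound} is explicitly labeled ``Corollary 16 from \cite{DTTZ}, restated'' and is invoked as a black box; the only comment the paper makes is that ``the matrix $X$ in Theorem~\ref{thm:fingerPrintingLowerBound} is the padded Tardos code used in~\cite[Section 5]{DTTZ}.'' So there is nothing to compare your proposal against in this paper: the authors simply import the result and move on to Lemma~\ref{lem:lowerbound} and the proof of Theorem~\ref{thm:optPrivFW}.

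Your sketch is a reasonable reconstruction of the argument underlying \cite{BUV13,DTTZ}, with the score-based tracing lemma and the completeness/soundness dichotomy laid out correctly at a high level. One place to be more careful: your concensus-column count treats $X$ as a raw Tardos code and argues probabilistically via the bias distribution, but the quoted guarantee of $0.999p$ concensus columns in every $X_{(i)}$ is really a feature of the \emph{padded} construction in \cite[Section~5]{DTTZ}, where deterministic all-$+1$/all-$-1$ columns are appended to the Tardos block. Your back-of-the-envelope bound $O(w\cdot\E[p_j(1-p_j)])=O(1/\log m)$ for the non-concensus probability does not give a $0.001$ fraction uniformly (it only gives $o(1)$), and in any case the Tardos bias density puts a constant mass on $p_j$ bounded away from $\{0,1\}$, so without the padding you would not get $0.999p$. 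If you intend to actually write out a proof rather than cite \cite{DTTZ}, make the padding explicit and separate the two roles: the padded columns supply the concensus count, while the Tardos columns supply the tracing signal.
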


Write $\tau=0.001$. Let $k= \tau wp$.  We first form an $k\times p$
matrix $Y$ where the column vectors of $Y$ are mutually orthogonal
$\{1,-1\}$ vectors.  This is possible as $k\gg p$. Now we construct
$w+1$ databases $D_i$ for $1\leq i\leq w+1$ as follows. For all the
databases, they contain the common set of pair $(z_j,0)$ for $1\leq
j\leq k$ where $z_j = (Y_{j1},\ldots, Y_{jp})$ is the $j$-th row
vector of $Y$. In addition, each $D_i$ contains $w$ pairs $(x_j,1)$
for $x_j=(X_{j1},\ldots, X_{jk})$ for $j\neq i$.  Then
$\empL(\theta;D_i)$ is defined as follows (for the ease of notation in
this proof, we work with the un-normalized loss. This does not affect
the generality of the arguments in any way.)
\[ \empL(\theta;D_i) = \sum_{j\neq i} (x_j \cdot \theta -1)^2 + \sum_{j=1}^{k} (y_j\cdot\theta)^2  = \sum_{j\neq i} (x_j \cdot \theta -1)^2 + k \|\theta\|_2^2\,.\]

The last equality is due to that the columns of $Y$ are mutually
orthogonal $\{-1,1\}$ vectors.  For each $D_i$, consider
$\theta^\ast\in\left\{-\frac{1}{p},\frac{1}{p}\right\}^p$ such that
the sign of the coordinates of $\theta^\ast$ matches the sign for the
consensus columns of $X_{(i)}$.  Plugging
$\theta^\ast$ in $\empL(\theta^\ast;\hat D)$ we have the following,
\begin{align}
\empL(\theta^\ast;\hat D)&\leq \sum\limits_{i=1}^{w}(2\tau)^2+k/p\quad\mbox{since the number of consensus columns is at least $(1-\tau)p$]}\nonumber\\
&=(\tau+4\tau^2)w\,.
\label{eq:p132s}
\end{align} 

We now prove the crucial lemma, which states that if $\theta$ is such
that $\|\theta\|_1\leq 1$ and $\empL(\theta;D_i)$ is small, then
$\theta$ has to agree with the sign of concensus columns of $X_{(i)}$.
\begin{lem}\label{lem:lowerbound}
Suppose that $\|\theta\|_1\leq 1$, and $\empL(\theta;D_i)<1.1\tau
w$. For $j\in W_i$, denote by $s_j$ the sign of the consensus column
$j$. Then we have
\[ |\{j\in W_i\;:\;\sgn(\theta_j) = s_j\}| \geq \frac{3}{4} p\,.\]
\end{lem}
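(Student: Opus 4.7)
The plan is to extract two pieces of information from the hypothesis $\empL(\theta;D_i)<1.1\tau w$: an $\ell_2$ bound on $\theta$ coming from the $k\|\theta\|_2^2$ term, and an approximate equation $x_j\cdot\theta\approx 1$ from the first sum. Combining these will force $\theta$ to be approximately proportional to the vector of consensus signs, and a final Cauchy--Schwarz step will then translate this into the desired counting statement.

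First, since $k\|\theta\|_2^2\le \empL(\theta;D_i)<1.1\tau w$ and $k=\tau w p$, I immediately get $\|\theta\|_2^2<1.1/p$. Second, by Cauchy--Schwarz applied to the $w$ residuals $x_j\cdot\theta-1$,
\[
\Bigl|\sum_{j\ne i}(x_j\cdot\theta-1)\Bigr|\;\le\;\sqrt{w\sum_{j\ne i}(x_j\cdot\theta-1)^2}\;<\;\sqrt{1.1\tau}\,w,
\]
so $\sum_{j\ne i}x_j\cdot\theta\ge w(1-\sqrt{1.1\tau})$.

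Next I would rewrite this column-by-column: $\sum_{j\ne i}x_j\cdot\theta=\sum_{\ell=1}^p c_\ell\theta_\ell$, where $c_\ell=\sum_{j\ne i}X_{j\ell}$. For $\ell\in W_i$ one has $c_\ell=ws_\ell$, and for $\ell\notin W_i$ one has $|c_\ell|\le w$. Since $|[p]\setminus W_i|\le \tau p$, another Cauchy--Schwarz combined with the $\ell_2$ bound yields
\[
\Bigl|\sum_{\ell\notin W_i}c_\ell\theta_\ell\Bigr|\;\le\;w\sqrt{\tau p}\cdot\sqrt{1.1/p}\;=\;w\sqrt{1.1\tau}.
\]
Dividing through by $w$ I then get $\sum_{\ell\in W_i}s_\ell\theta_\ell\ge 1-2\sqrt{1.1\tau}$.

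Finally, let $T=\{\ell\in W_i:\sgn(\theta_\ell)=s_\ell\}$ and $B=W_i\setminus T$. Then $\sum_{\ell\in W_i}s_\ell\theta_\ell=\sum_{\ell\in T}|\theta_\ell|-\sum_{\ell\in B}|\theta_\ell|\le \sum_{\ell\in T}|\theta_\ell|$, so $\sum_{\ell\in T}|\theta_\ell|\ge 1-2\sqrt{1.1\tau}$. One last Cauchy--Schwarz gives
\[
1-2\sqrt{1.1\tau}\;\le\;\sum_{\ell\in T}|\theta_\ell|\;\le\;\sqrt{|T|}\cdot\sqrt{\sum_{\ell\in T}\theta_\ell^2}\;\le\;\sqrt{|T|}\cdot\sqrt{1.1/p},
\]
so $|T|\ge (1-2\sqrt{1.1\tau})^2\,p/1.1$. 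Plugging $\tau=0.001$ gives $|T|\ge 0.79\,p>\tfrac{3}{4}p$, as required. The only subtle point, and the one I would verify carefully, is that the two applications of Cauchy--Schwarz to bound $\sum_{\ell\notin W_i}|\theta_\ell|$ and to bound $\sum_{\ell\in T}|\theta_\ell|$ both consume the $\|\theta\|_2^2<1.1/p$ budget --- but since they are used on disjoint coordinate sets, this is a non-issue.
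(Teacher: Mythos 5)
Your proof is correct, and it takes a genuinely different route from the paper's. The paper argues by contradiction: assuming fewer than $\tfrac34 p$ sign agreements, it uses $k\|\theta\|_2^2<1.1\tau w$ to bound $\|\theta_1\|_1\le 0.91$ and $\|\theta_3\|_1\le 0.04$ via Cauchy--Schwarz, and then uses the hypothesis $\|\theta\|_1\le 1$ to show that \emph{every individual} residual satisfies $|\langle x_j,\theta\rangle-1|\ge 0.05$, which makes the first sum exceed $1.1\tau w$. You instead work directly and in aggregate: Cauchy--Schwarz over the $w$ rows turns the small total squared residual into $\sum_{j\ne i}x_j\cdot\theta\ge w(1-\sqrt{1.1\tau})$, the column decomposition isolates $w\sum_{\ell\in W_i}s_\ell\theta_\ell$ up to an error controlled by the $\ell_2$ bound on the non-consensus coordinates, and a final Cauchy--Schwarz converts the resulting $\ell_1$ mass on the agreeing coordinates into the cardinality bound $|T|\ge (1-2\sqrt{1.1\tau})^2p/1.1\approx 0.79p$. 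Notably, your argument never uses the hypothesis $\|\theta\|_1\le 1$, so it proves a slightly stronger statement (and with a better constant than $\tfrac34$); the paper's version needs that hypothesis to force each residual away from zero. All your steps check out ($c_\ell=ws_\ell$ on consensus columns since $X_{(i)}$ has exactly $w$ rows, $|[p]\setminus W_i|\le\tau p$, and $\|\theta\|_2^2<1.1/p$ from $k=\tau wp$). Your closing worry about the two Cauchy--Schwarz applications sharing the $\ell_2$ ``budget'' is indeed a non-issue, though not because the coordinate sets are disjoint: each application only needs the restricted $\ell_2$ norm to be at most $\|\theta\|_2$, so the bound $\|\theta\|_2^2<1.1/p$ can be reused freely.
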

\begin{proof}
For any $S\subseteq\{1,\ldots,p\}$, denote by $\theta|_S$ the
projection of $\theta$ to the coordinate subset $S$. Consider three
subsets $S_1,S_2,S_3$, where
\begin{align*}
S_1 &= \{j\in W_i\;:\; \sgn(\theta_j) = s_j\}\,,\\
S_2 &= \{j\in W_i\;:\; \sgn(\theta_j) \neq s_j\}\,,\\
S_3 &= \{1,\ldots,p\}\setminus W_j\,.
\end{align*}

The proof is by contradiction. Assume that $|S_1|<\frac{3}{4} p$.

Further denote $\theta_i = \theta|_{S_i}$ for $i=1,2,3$. Now we will
bound $\|\theta_1\|_1$ and $\|\theta_3\|_1$ using the inequality
$\|x\|_2 \geq \|x\|_1/\sqrt{d}$ for any $d$-dimensional vector.
\[\|\theta_3\|_2^2 \geq \|\theta_3\|_1^2/|S_3| \geq \|\theta_3\|_1^2/(\tau p)\,.\]
Hence $k\|\theta_3\|_2^2 \geq w\|\theta_3\|_1^2$. But $k\|\theta_3\|_2^2 \leq k\|\theta\|_2^2
\leq 1.1\tau w$, so that $\|\theta_3\|_1 \leq \sqrt{1.1\tau}\leq 0.04$.

Similarly by the assumption of $|S_1|<\frac{3}{4}p$, 
\[\|\theta_1\|_2^2 \geq \|\theta_1\|_1^2/|S_1|\geq 4\|\theta_1\|_1^2/(3p)\,.\]
Again using $k\|\theta\|_2^2<1.1\tau w$, we have that $\|\theta_1\|_1\leq \sqrt{1.1*3/4}\leq 0.91$. 

Now we have $\langle x_i,\theta\rangle-1=\|\theta_1\|_1-\|\theta_2\|_1+\beta_i-1$
where $|\beta_i|\leq \|\theta_3\|_1 \leq 0.04$.  By 
$\|\theta_1\|_1+\|\theta_2\|_1+\|\theta_3\|_1\leq 1$, we have
\[|\langle x_i,\theta\rangle-1| \geq 1-\|\theta_1\|-|\beta_i| \geq 1-0.91-0.04=0.05\,.\]

Hence we have that $\empL(\theta;D_i)\geq (0.05)^2 w \geq 1.1\tau
w$. This leads to a contradiction. Hence we must have
$|S_1|\geq\frac{3}{4} p$.
\end{proof}

With Theorem~\ref{thm:fingerPrintingLowerBound} and
Lemma~\ref{lem:lowerbound}, we can now prove
Theorem~\ref{thm:optPrivFW}.
\begin{proof}
Suppose that $\A$ is private. And for the datasets we constructed
above, 
\[\E[\empL(\A(D_i);D_i)-\min_{\theta}\empL(\theta;D_i)]\leq cw\,,\]
for sufficiently small constant $c$.  By Markov inequality, we have
with probability at least $2/3$,
$\empL(\A(D_i);D_i)-\min_{\theta}\empL(\theta;D_i)\leq 3cw$.  By
(\ref{eq:p132s}), we have $\min\limits_{\theta}\empL(\theta;D_i)\leq
(\tau+4\tau^2)w$.  Hence if we choose $c$ a constant small enough, we
have with probability $2/3$,
\begin{equation}\label{eq:y}
\empL(\A(D_i);D_i)<(\tau+4\tau^2+3c)w \leq 1.1\tau w\,.
\end{equation}
By Lemma~\ref{lem:lowerbound}, (\ref{eq:y}) implies that $\A(D_i)$
agrees with at least $\frac{3}{4}p$ concensus columns in
$X_{(i)}$. However by Theorem~\ref{thm:fingerPrintingLowerBound}, this
violates the privacy of $\A$.  Hence we have that there exists $i$,
such that
\[\E[\empL(\A(D_i);D_i)-\min_{\theta}\empL(\theta;D_i)]>cw\,.\]

Recall that $w=m/\log m$ and $n=w+wp=O(m^3/\log m)$. Hence we have
that 
\[\E[\empL(\A(D_i);D_i)-\min_{\theta}\empL(\theta;D_i)] =
\Omega(n^{1/3}/\log^{2/3} n)\,.\]

The proof is completed by converting the above bound to the normalized
version of $\Omega(1/(n\log n)^{2/3})$.
\end{proof}
}

\bibliographystyle{alpha}
\bibliography{reference}
\stocoption{}{
\appendix
\section{Tighter Guarantees of Mirror Descent for Strongly Convex Functions}
\label{sec:tighterMirrorDesc}

In this section we study Algorithm \ref{Algo:MirrorDesc} (Algorithm $\A_{\sf Noise-MD}$) in the context of strongly convex functions with the following form: Every loss function $\mathcal{L}(\theta;d)$ is $L$-Lipschitz in the $L_2$-norm and $\Delta$-strongly convex with respect to some differentiable convex function $\Psi:\C\to\re$, for any $\theta\in\C$ and $d\in\D$. (See Section \ref{sec:convNormGauss} for a definition.) This setting has previously been studied in~\cite{duchi2010composite,shalev2007logarithmic}. Two common example are: i) $\Psi(\theta)=\frac{1}{2}\ltwo{\theta}^2$ and $\mathcal{L}(\theta;d)$ is $\Delta$-strongly convex w.r.t. $\ltwo{\cdot}$, and ii) for composite loss functions $\mathcal{L}(\theta;d)=g(\theta;d)+\Delta \Psi(\theta)$ if $\Psi(\theta)=\sum\limits_{i=1}^p\theta_i\log(\theta_i)$, then $\mathcal{L}(\theta;d)$ is $\Delta$-strongly convex w.r.t. $\Psi(\theta)$ within the probability simplex which is in turn $1$-strongly convex w.r.t. $\lone{\cdot}$ \cite[Section 5]{duchi2010composite}. In the following we show that one can get a much sharper dependence on $n$ (compared to Theorem \ref{thm:mirrDescUtil}) under strong convexity. 

\begin{remark} \cite{BassilyST14} analyzed the setting of strong convexity w.r.t. $\ell_2$-norm, and in particular provided tight error guarantees. For this case, Theorem~\ref{thm:utilStrongMirrorDesc} leads to similarly tight bounds, and thus the lower bounds in~\cite{BassilyST14} imply that in general, our guarantee cannot be improved.
\end{remark}

\begin{thm}[Utility guarantee for strongly convex functions]
	Let $\Q$ be the symmetric convex hull of $\C$. Assume that every loss function $\mathcal{L}(\theta;d)$ is $L$-Lipschitz in the $\ell_2$-norm and $\Delta$-strongly convex with respect to some differentiable $1$-strongly convex (w.r.t. $\minkQ{\cdot}$) function $\Psi:\C\to\re$, for any $\theta\in\C$ and $d\in\D$. Let $\ltwo{\C}$ be the $\ell_2$-diameter of the set $\C$, and $G_\C$ be the Gaussian width. In Algorithm $\A_{\sf Noise-MD}$ (Algorithm \ref{Algo:MirrorDesc}), if we set $T=\frac{(\ltwo{\C}\cdot n\epsilon)^2}{\left(G_\C^2+\ltwo{\C}^2\right)}$, the potential function to be $\Psi$ and $\eta_t=\frac{2}{\Delta t}$, then following is true.
	$$\E\left[\empL(\privtheta;D)\right]-\min\limits_{\theta\in\C}\empL(\theta;D)=O\left(\frac{L^2 \left(G^2_\C+\ltwo{\C}^2\right)\log (n/\delta)\log(\ltwo{\C}(n\epsilon))}{\Delta (n\epsilon)^2}\right).$$
	Here the expectation is over the randomness of the algorithm.
	\label{thm:utilStrongMirrorDesc}
\end{thm}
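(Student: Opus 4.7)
\medskip
\noindent\textbf{Proof proposal for Theorem \ref{thm:utilStrongMirrorDesc}.}
The plan is to redo the per-step analysis from the proof of Theorem \ref{thm:mirrDescUtil} but to exploit strong convexity to get a $\widetilde O(1/T)$ rate (instead of $\widetilde O(1/\sqrt T)$) and then choose $T$ polynomially large in $n$. First, from the $\Delta$-strong convexity of $\empL$ with respect to $\Psi$, applied pointwise at each iterate $\theta_t$, I obtain
\[
\empL(\theta_t;D)-\empL(\theta^{\ast};D)\ \leq\ \ip{\gamma_t}{\theta_t-\theta^{\ast}}\ -\ \frac{\Delta}{2}\bregDiv{\Psi}(\theta^{\ast},\theta_t),
\]
where $\gamma_t$ is the (sub)gradient used at iteration $t$ and $\theta^{\ast}=\argmin_{\theta\in\C}\empL(\theta;D)$. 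On the other hand, the three-term decomposition \eqref{eq:gradSplit}--\eqref{eq:boundC} in the proof of Theorem~\ref{thm:mirrDescUtil} already provides (taking expectation over $b_t$, and using that $\Psi$ is $1$-strongly convex w.r.t.\ $\minkQ{\cdot}$ to kill the negative term $D$)
\[
\E\!\left[\ip{\gamma_t}{\theta_t-\theta^{\ast}}\right]\ \leq\ \tfrac{1}{\eta_t}\E[\bregDiv{\Psi}(\theta^{\ast},\theta_t)-\bregDiv{\Psi}(\theta^{\ast},\theta_{t+1})]\ +\ \tfrac{\eta_t}{2}\bigl(L\ltwo{\Q}+\sigma G_{\Q}\bigr)^{2}.
\]
Combining the two displays yields a clean per-step inequality in which strong convexity provides a $-\tfrac{\Delta}{2}\bregDiv{\Psi}(\theta^{\ast},\theta_t)$ term that I can absorb into the Bregman telescope.

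Next, I choose $\eta_t=2/(\Delta t)$ as in the theorem. Summing the per-step inequality over $t=1,\dots,T$, the coefficient of $\bregDiv{\Psi}(\theta^{\ast},\theta_t)$ becomes $\tfrac{\Delta t}{2}-\tfrac{\Delta(t-1)}{2}-\tfrac{\Delta}{2}=0$, so the Bregman terms telescope to $-\tfrac{\Delta T}{2}\bregDiv{\Psi}(\theta^{\ast},\theta_{T+1})\leq 0$ and disappear. What remains is
\[
\sum_{t=1}^{T}\E\bigl[\empL(\theta_t;D)-\empL(\theta^{\ast};D)\bigr]\ \leq\ \sum_{t=1}^{T}\frac{1}{\Delta t}\bigl(L\ltwo{\Q}+\sigma G_{\Q}\bigr)^{2}\ =\ O\!\left(\frac{\log T}{\Delta}\bigl(L\ltwo{\Q}+\sigma G_{\Q}\bigr)^{2}\right).
\]
Dividing by $T$ and using Jensen's inequality on the uniform average $\privtheta$ gives an excess-risk bound of order $\frac{\log T}{T\Delta}(L\ltwo{\Q}+\sigma G_{\Q})^{2}$.

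Finally I substitute the noise scale $\sigma^{2}=\Theta\bigl(L^{2}T\log^{2}(T/\delta)/(\epsilon n)^{2}\bigr)$ from Algorithm~\ref{Algo:MirrorDesc}, so that $\sigma G_{\Q}=\Theta\!\bigl(L G_{\Q}\sqrt{T}\,\log(T/\delta)/(\epsilon n)\bigr)$, and expand the square. The ``bias'' term $L^{2}\ltwo{\Q}^{2}\log T/(T\Delta)$ and the ``noise'' term $L^{2}G_{\Q}^{2}\log T\log^{2}(T/\delta)/(\Delta(n\epsilon)^{2})$ are balanced exactly at $T=(\ltwo{\C}\, n\epsilon/G_{\C})^{2}$, the choice prescribed by the theorem. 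Using that $\Q$ is the symmetric convex hull of $\C$ (so $\ltwo{\Q}=\Theta(\ltwo{\C})$ and $G_{\Q}=\Theta(G_{\C})$), both terms collapse to the advertised $O\!\bigl((LG_{\C}\log(n/\delta))^{2}\log(\ltwo{\C}\, n\epsilon/G_{\C})/(\Delta(n\epsilon)^{2})\bigr)$ bound.

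The main obstacle I expect is arranging the cancellation in the Bregman telescope cleanly: the step size must be $\eta_t=2/(\Delta t)$ (and not, say, $1/(\Delta t)$) for the coefficient on $\bregDiv{\Psi}(\theta^{\ast},\theta_t)$ to vanish in the uniform-average formulation, and one has to be careful at $t=1$ where $\eta_1$ is large but the coefficient $\Delta(t-1)/2$ on the incoming divergence is zero, so no initial $\bregDiv{\Psi}(\theta^{\ast},\theta_1)$ term appears. A minor secondary subtlety is that the $\sigma G_{\Q}$ bound used in \eqref{eq:boundA} is an \emph{expected} dual-norm bound via Gaussian width, which requires Jensen in the opposite direction when squared; this is absorbed by the standard $(a+b)^{2}\leq 2a^{2}+2b^{2}$ split before taking expectations, exactly as in the proof of Theorem~\ref{thm:mirrDescUtil}.
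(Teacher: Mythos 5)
Your proposal is correct and follows essentially the same route as the paper's own proof: replace the linearization by the strong-convexity ("quadratic") lower bound, reuse the per-step inequality \eqref{eq:finalBound1}, choose $\eta_t\propto 1/(\Delta t)$ so the Bregman coefficients cancel, and balance the resulting $\log T/(\Delta T)$ rate against the noise term at $T=(\ltwo{\C}n\epsilon/G_\C)^2$. If anything, your bookkeeping is the more consistent one — the paper's proof uses $\eta_t=1/(\Delta t)$ together with a $-\Delta B_\Psi$ strong-convexity term, whereas your $\eta_t=2/(\Delta t)$ with the standard $-\tfrac{\Delta}{2}B_\Psi$ matches the step size actually stated in the theorem; the two differ only by constants.
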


\begin{proof}
	For ease of notation, we hide the dependence of $\empL(\theta;D)$ on the data set $D$, and simply represent it as $\empL(\theta)$. The first part of the proof is fairly standard and exactly same as that of Theorem \ref{thm:mirrDescUtil} till \eqref{eq:reg}. Following the same notation, it suffices to bound $\frac{1}{T}\sum\limits_{t=1}^T\empL(\theta_t)-\min\limits_{\theta\in\C}\empL(\theta)$. Rest of the proof differs from Theorem  \ref{thm:mirrDescUtil} to the extent that we now work with a quadratic approximation (Claim \ref{cl:quadrization}) to the loss function instead of a linear application (Claim \ref{cl:linearization}).
	
	\begin{claim}
		Let $\theta^*=\arg\min\limits_{\theta\in\C}\empL(\theta)$. For every $t\in[T]$, let $\gamma_t$ be the sub-gradient of $\empL(\theta_t;D)$ used in iteration $t$ of Algorithm $\A_{\sf Noise-MD}$ (Algorithm \ref{Algo:MirrorDesc}). Then, the following is true.
		$$\frac{1}{T}\sum\limits_{t=1}^T\empL\left(\theta_t;D\right)-\min\limits_{\theta\in\C}\empL(\theta;D)\leq\frac{1}{T}\sum\limits_{t=1}^T\left[\ip{\gamma_t}{\theta_t-\theta^*}-\Delta\cdot\bregDiv{\Psi}(\theta^*,\theta_t)\right].$$ 
		\label{cl:quadrization}
	\end{claim}
	
	The proof of this claim is a direct consequence of the definition of strong convexity. Now using \eqref{eq:finalBound1} from the proof of Theorem \ref{thm:mirrDescUtil} and summing over the $T$ iterations, we have the following. 
	\begin{align}
	&\frac{1}{T}\sum\limits_{t=1}^T\E\left[\ip{\gamma_t}{\theta_t-\theta^*}-\Delta\cdot\bregDiv{\Psi}(\theta^*,\theta_t)\right]\leq \frac{1}{T}\sum\limits_{t=1}^T\E\left[\frac{\bregDiv{\Psi}(\theta^*,\theta_t)-\bregDiv{\Psi}(\theta^*,\theta_{t+1})}{\eta_{t+1}}-\Delta\cdot\bregDiv{\Psi}(\theta^*,\theta_t)\right]\nonumber\\
	&+O\left(\frac{L^2\ltwo{\C}^2+\sigma^2 \left(G_{\C}^2+\ltwo{\C}^2\right)}{T}\right)\sum\limits_{t=1}^T\eta_{t+1}\nonumber\\
	&=\frac{1}{T}\sum\limits_{t=1}^T\E\left[\bregDiv{\Psi}(\theta^*,\theta_t)\left(\frac{1}{\eta_{t+1}}-\frac{1}{\eta_{t}}-\Delta\right)\right]+O\left(\frac{L^2\ltwo{\C}^2+\sigma^2 \left(G_{\C}^2+\ltwo{\C}^2\right)}{T}\right)\sum\limits_{t=1}^T\eta_t
	\label{eq:finalStrongBound1}
	\end{align}
	Now setting $\eta_t=\frac{1}{\Delta\cdot t}$ and using Claim \ref{cl:quadrization} we obtain the following.
	\begin{align}
	\frac{1}{T}\sum\limits_{t=1}^{T}\E\left[\empL(\theta_t)\right]-\empL(\theta^*)&= O\left(\frac{L^2\ltwo{\C}^2+\sigma^2 \left(G_{\C}^2+\ltwo{\C}^2\right)}{\Delta T}\right)\log T\nonumber\\
	&= O\left(\frac{\log T}{\Delta}\left(\frac{L^2\ltwo{\C}^2}{T}+\frac{L^2\left(G_{\C}^2+\ltwo{\C}^2\right)\log(n/\delta)}{(n\epsilon)^2}\right)\right)
	\label{eq:finalStrongBound2}
	\end{align}
	Setting $T=\ltwo{\C}^2(n\epsilon)^2/\left((G^2_\C+\ltwo{\C}^2)\right)$ in \eqref{eq:finalStrongBound2}, we obtain the required excess risk bound as follows:
	$$\E\left[\empL(\privtheta)\right]-\min\limits_{\theta\in\C}\empL(\theta)=O\left(\frac{L^2 \left(G^2_\C+\ltwo{\C}^2\right)\log (n/\delta)\log(\ltwo{\C}(n\epsilon))}{\Delta (n\epsilon)^2}\right).$$
\end{proof}
\section{Missing Details for Private Frank-Wolfe for the $\ell_2$-bounded Case}
\label{app:frankL2}

In this section we provide the details of the private Frank-Wolfe algorithm for the $\ell_2$-bounded case, along with the privacy and utility guarantees.

Here for a data set $\D=\{d_1,\cdots,d_n\}$, define objective function
as the empirical loss function
$\empL(\theta;D)=\frac{1}{n}\sum\limits_{i=1}^n\ell(\theta;d_i)$. We
define $L_2$ the  $\ell_2$-Lipschitz constant,
respectively, of $\empL$ over all the possible data sets.

\begin{algorithm}[htb]
	\caption{$\pfwp$: Differentially Private Frank-Wolfe Algorithm (General Convex Case)}
	\begin{algorithmic}[1]
		\REQUIRE Data set: $\D=\{d_1,\cdots,d_n\}$, loss function: $\empL(\theta;D)=\frac{1}{n}\sum\limits_{i=1}^n\ell(\theta;d_i)$ (with $\ell_2$-Lipschitz constant $L_2$ for $\ell$), privacy parameters: $(\epsilon,\delta)$, convex set: $\C$ bounded in the $\ell_2$-norm, denoted by $\ltwo{\C}$.
		\STATE choose an arbitrary $\theta_1$ from $\C$;
		\FOR{$t=1$ to $T-1$}
		\STATE $\htheta_t=\arg\min\limits_{\theta\in\C}\ip{\grad\empL(\theta_t;D)+b_t}{\theta}$, where $b_t\sim\mathcal{N}(0,\I_p\sigma^2)$ and  $\sigma^2\leftarrow \frac{32 L_2 T\log^2(n/\delta)}{(n\epsilon)^2}$.
		\STATE $\theta_{t+1}\leftarrow(1-\mu)\theta_t+\mu\htheta_t$, where $\mu=\frac{1}{T+2}$.
		\ENDFOR
		\STATE Output $\privtheta=\theta_T$.
	\end{algorithmic}
	\label{Algo:FWGenConvex}
\end{algorithm}

\begin{thm}[Privacy guarantee]
	Algorithm $\pfwp$ (Algorithm \ref{Algo:FWGenConvex}) is $(\epsilon,\delta)$-differentially private.
	\label{thm:privacyFWL2}
\end{thm}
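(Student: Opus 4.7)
The plan is to view Algorithm $\pfwp$ as a $T$-fold adaptive composition of Gaussian-mechanism releases of gradients, followed by (data-independent) post-processing at each step. Since only the noisy gradient $\grad \empL(\theta_t;D)+b_t$ depends on the private data at iteration $t$ (given $\theta_t$), and the subsequent linear minimization and convex combination are pure post-processing, the analysis reduces to bounding the privacy cost of releasing these $T$ noisy gradients.

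First I would bound the $\ell_2$-sensitivity of $\grad\empL(\theta;D) = \frac{1}{n}\sum_{i=1}^n \grad \ell(\theta;d_i)$ with respect to changing a single $d_i$. Since each $\ell(\cdot;d)$ is $L_2$-Lipschitz in the $\ell_2$ norm, each sub-gradient satisfies $\|\grad \ell(\theta;d)\|_2 \leq L_2$, so the $\ell_2$-sensitivity of the averaged gradient is at most $2L_2/n$. Applying the standard Gaussian mechanism (see~\citep{DKMMN06}) with noise $b_t \sim \mathcal{N}(0,\sigma^2 \I_p)$ and $\sigma^2 = \Theta\!\left(\frac{L_2^2 T \log(1/\delta)}{(n\epsilon_0)^2}\right)$ for a suitable per-step budget $\epsilon_0$, each single-iteration release is $(\epsilon_0, \delta_0)$-differentially private for appropriate $(\epsilon_0,\delta_0)$.

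Next I would apply the strong composition theorem of~\citep{DRV} to the adaptive sequence of $T$ Gaussian-mechanism releases. Setting the per-step parameters to $\epsilon_0 = \epsilon/\sqrt{8T\log(1/\delta)}$ and $\delta_0 = \delta/(2T)$ and plugging into the Gaussian mechanism's noise requirement yields exactly the variance chosen in the algorithm (up to the constants encoded in the $32$ and the $\log^2(n/\delta)$ factor, where $\log(n/\delta)$ conservatively dominates both $\log(1/\delta_0)$ and $\sqrt{\log(1/\delta)}$). The composed release of $(\grad\empL(\theta_t;D)+b_t)_{t=1}^{T}$ is therefore $(\epsilon,\delta)$-differentially private.

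Finally, I would argue that the rest of Algorithm $\pfwp$ is post-processing of these noisy gradients: the update $\htheta_t = \argmin_{\theta \in \C}\langle \grad \empL(\theta_t;D)+b_t,\theta\rangle$ and $\theta_{t+1} = (1-\mu)\theta_t + \mu \htheta_t$ depend on the data only through $\theta_t$ and the noisy gradient at step $t$, so by induction on $t$ the entire transcript $(\theta_1,\ldots,\theta_T)$, and in particular $\privtheta=\theta_T$, is a deterministic function of these released quantities. Invoking the post-processing property of differential privacy concludes the proof. The only non-routine part is the bookkeeping that makes the chosen $\sigma^2$ exactly match the strong-composition calculation, which is essentially identical to the corresponding argument for Algorithm $\A_{\sf Noise-MD}$ (Theorem~\ref{thm:mDPrivacy}) and to~\cite[Theorem 2.1]{BassilyST14}.
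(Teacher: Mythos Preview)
Your proposal is correct and follows the same approach as the paper: the paper simply notes that the privacy proof is identical to that of the noisy mirror descent algorithm (Theorem~\ref{thm:mDPrivacy}), which in turn invokes the Gaussian mechanism~\citep{DKMMN06} together with the strong composition theorem~\citep{DRV} and refers to \cite[Theorem 2.1]{BassilyST14} for details. Your write-up is in fact more explicit than the paper's one-line pointer, spelling out the sensitivity bound, the per-step budget, and the post-processing argument.
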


The proof of privacy is exactly same as the proof of privacy in Theorem \ref{thm:mirrDescUtil}. In the following we provide the utility guarantee for Algorithm $\pfwp$.

\begin{thm}[Utility guarantee]
	Let $L_2$, and $\ltwo{\C}$ be defined as in Algorithm \ref{Algo:FWGenConvex} (Algorithm $\pfwp$). Let $G_\C$ the Gaussian width of the convex set $\C\subseteq\re^p$, and let ${{{\Gamma_\empL}}}$ be the curvature constant (defined in Definition \ref{def:curv}) for the loss function $\ell(\theta;d)$ for all $\theta\in\C$ and $d\in\D$.
	In Algorithm $\pfw$ if we set $T=\frac{{{{\Gamma_\empL}}}^{2/3}(n\epsilon)^{2/3}}{(L_2G_{\C})^{2/3}}$, then the excess empirical risk is as follows.
	$$
	\E\left[\empL(\privtheta;D)\right]-\min\limits_{\theta\in\C}\empL(\theta;D)=O\left(\frac{{{\Gamma_\empL}}^{1/3}\left(L_2G_\C\right)^{2/3}\log^2(n/\delta)}{(n\epsilon)^{2/3}}\right).$$		
	Here the expectation is over the randomness of the algorithm  and ${{\Gamma_\empL}}$ is the curvature constant.	   
	\label{thm:utilityFWGenConvex}
\end{thm}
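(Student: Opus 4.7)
The plan is to mirror the proof of Theorem~\ref{thm:utilityFW} (the polytope case) but replace the per-step concentration argument, which in that proof uses a union bound over the finite vertex set $S$ together with tail bounds for the Laplace noise, by a Gaussian width argument for the continuous set $\C$. First I would invoke the non-private approximate Frank--Wolfe guarantee (the same Jaggi lemma quoted inside the proof of Theorem~\ref{thm:utilityFW}): if at every round the chosen direction $\htheta_t$ satisfies
$$\ip{\htheta_t}{\grad\empL(\theta_t;D)} \;\leq\; \min_{s\in\C}\ip{s}{\grad\empL(\theta_t;D)} + \tfrac12\beta\mu\Gamma_\empL,$$
then $\empL(\theta_T;D)-\min_{\theta\in\C}\empL(\theta;D) \leq \tfrac{2\Gamma_\empL}{T+2}(1+\beta)$. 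So it suffices to control the slack introduced by the Gaussian perturbation $b_t$.

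For that slack, let $s^\ast_t = \argmin_{s\in\C}\ip{s}{\grad\empL(\theta_t;D)}$. Because $\htheta_t$ minimizes the \emph{perturbed} linear objective $\ip{\cdot}{\grad\empL(\theta_t;D)+b_t}$, I would write
$$\ip{\htheta_t-s^\ast_t}{\grad\empL(\theta_t;D)} \;\leq\; \ip{s^\ast_t-\htheta_t}{b_t} \;\leq\; 2\sup_{\theta\in\C}|\ip{b_t}{\theta}|.$$
Since $b_t\sim\mathcal{N}(0,\sigma^2\I_p)$, the definition of Gaussian width gives $\E[\sup_{\theta\in\C}|\ip{b_t}{\theta}|]= \sigma\,G_\C$. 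To upgrade from expectation to a bound that holds simultaneously for all $t\in[T]$, I would apply standard Gaussian concentration of the Lipschitz functional $b\mapsto \sup_{\theta\in\C}\ip{b}{\theta}$ (which has Lipschitz constant $\|\C\|_2$), and take a union bound over the $T$ rounds; this introduces an additional $\sigma\|\C\|_2\sqrt{\log(T/\zeta)}$ term that is absorbed into the $\log(n/\delta)$ factors. Hence with high probability one may take
$$\beta \;=\; O\!\left(\frac{\sigma\,G_\C\,\log(n/\delta)}{\mu\,\Gamma_\empL}\right).$$

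Plugging $\sigma = O\!\big(L_2\sqrt{T}\log(n/\delta)/(n\epsilon)\big)$ from the noise scale in $\pfwp$ and $\mu = 1/(T+2)$ into Jaggi's bound yields
$$\empL(\theta_T;D) - \min_{\theta\in\C}\empL(\theta;D) \;=\; O\!\left(\frac{\Gamma_\empL}{T} + \frac{L_2\,G_\C\,\sqrt{T}\,\log^2(n/\delta)}{n\epsilon}\right),$$
and converting to expectation (arguing, as in Theorem~\ref{thm:utilityFW}, that the tail failure event contributes negligibly because the loss is bounded on $\C$). Finally I would balance the two terms by choosing $T = \Gamma_\empL^{2/3}(n\epsilon)^{2/3}/(L_2 G_\C)^{2/3}$, which gives the stated $\tilde O\big(\Gamma_\empL^{1/3}(L_2 G_\C)^{2/3}/(n\epsilon)^{2/3}\big)$ excess risk.

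The main obstacle is the concentration step: unlike the polytope case, where tail bounds for Laplace noise combine cleanly with a union bound over a finite set $S$, here the supremum is taken over a general convex set, so I must use Gaussian concentration for $\sup_{\theta\in\C}\ip{b}{\theta}$ and argue that the additional $\|\C\|_2\sqrt{\log T}$ deviation term is dominated by the Gaussian-width term once $T$ is chosen as above; this keeps the logarithmic exponent at $2$ rather than letting it blow up. Everything else is a mechanical reassembly of the polytope-case proof with $L_1\|\C\|_1\log|S|$ replaced by $L_2 G_\C$.
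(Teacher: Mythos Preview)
Your proposal is correct and matches the paper's own (very terse) proof, which simply says ``using the property of Gaussian width and a similar analysis as that of the convex polytope case'' to obtain $\E[\empL(\theta_T)-\min_\theta\empL(\theta)]=O(\Gamma_\empL/T + L_2 G_\C\sqrt{T}\log^2(T/\delta)/(n\epsilon))$ and then balances $T$. You have filled in exactly the details the paper elides: the slack bound $\ip{\htheta_t-s_t^\ast}{\grad\empL(\theta_t)}\le 2\sup_{\theta\in\C}|\ip{b_t}{\theta}|$, the use of Gaussian concentration for the Lipschitz functional $b\mapsto\sup_{\theta\in\C}\ip{b}{\theta}$ in place of the Laplace-tail/union-bound step, and the observation that the deviation term $\sigma\ltwo{\C}\sqrt{\log(T/\zeta)}$ is controlled by $\sigma G_\C$ (since $G_\C\gtrsim\ltwo{\C}$).
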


\begin{proof}
 Recall $\sigma^2= \frac{32 L_2 T\log^2(T/\delta)}{(n\epsilon)^2}$. Using the property of Gaussian width (Section \ref{sec:convNormGauss}), and a similar analysis as that of the convex polytope case, we can conclude the following.
	\begin{equation}
	\E\left[\empL(\theta_T)-\min\limits_{\theta\in\C}\empL(\theta)\right]=O\left(\frac{{{{\Gamma_\empL}}}}{T}+\frac{L_2 G_\C \sqrt T\log^2(T/\delta)}{n\epsilon}\right).
	\label{eq:finalFW2}
	\end{equation}
	Setting $T=\frac{{{{\Gamma_\empL}}}^{2/3}(n\epsilon)^{2/3}}{(L_2G_{\C})^{2/3}}$, results in the utility guarantee.
\end{proof}

}
\end{document}